\newcommand{\TheTitle}{Template shape estimation: correcting an asymptotic bias} 
\newcommand{\TheAuthors}{Nina Miolane, Susan Holmes, Xavier Pennec}
\headers{\TheTitle}{\TheAuthors}
\title{{\TheTitle}\thanks{Accepted for publication in SIAM Journal of Imaging Science. Submitted to the editors on January 23, 2017. This work was funded by the virtual lab Inria@SiliconValley.}}
\author{
  Nina Miolane\thanks{INRIA, Asclepios project-team, 2004 Route des Lucioles, 06902 Sophia Antipolis, France, \email{nina.miolane@inria.fr}}
  \and
  Susan Holmes\thanks{Stanford University, Department of Statistics, Sequoia Hall, Serra Mall, Stanford CA}
  \and
  Xavier Pennec\footnotemark[2]
}
\begin{document}

%\institute{$^{1}$ INRIA, Asclepios project-team, 2004 Route des Lucioles, BP93, F-06902 Sophia Antipolis Cedex, France, \email{nina.miolane@inria.fr, +33 4 92 38 71 82},\\
% $^{2}$ Stanford University, Department of Statistics, Sequoia Hall, Serra Mall, Stanford CA}
 
\maketitle

% REQUIRED
\begin{abstract}
We use tools from geometric statistics to analyze the usual estimation procedure of a template shape. This applies to shapes from landmarks, curves, surfaces, images etc. We demonstrate the asymptotic bias of the template shape estimation using the stratified geometry of the shape space. We give a Taylor expansion of the bias with respect to a parameter $\sigma$ describing the measurement error on the data. We propose two bootstrap procedures that quantify the bias and correct it, if needed. They are applicable for any type of shape data. We give a rule of thumb to provide intuition on whether the bias has to be corrected. This exhibits the parameters that control the bias' magnitude. We illustrate our results on simulated and real shape data.
\end{abstract}

% REQUIRED
\begin{keywords}
  shape, template, quotient space, manifold
\end{keywords}

% REQUIRED
\begin{AMS}
  53A35, 18F15, 57N25
\end{AMS}

\section*{Introduction}

The shape of a set of points, the shape of a signal, the shape of a surface, or the shapes in an image can be defined as the remainder after we have filtered out the position and the orientation of the object \cite{Kendall1984}. Statistics on shapes appear in many fields. Paleontologists combine shape analysis of monkey skulls with ecological and biogeographic data to understand how the \textit{skull shapes} have changed in space and time during evolution \cite{Elewa2012}. Molecular Biologists study how \textit{shapes of proteins} are related to their function. Statistics on misfolding of proteins is used to understand diseases, like Parkinson's disease \cite{Li2008}. Orthopaedic surgeons analyze \textit{bones' shapes} for surgical pre-planning \cite{Darmante2014}. In Signal processing, the \textit{shape of neural spike trains} correlates with arm movement \cite{Kurtek2011}. In Computer Vision, classifying \textit{shapes of handwritten digits} enables automatic reading of texts \cite{Allassonniere2015a}. In Medical Imaging and more precisely in Neuroimaging, studying \textit{brain shapes} as they appear in the MRIs facilitates discoveries on diseases, like Alzheimer \cite{Lorenzi2011}.

What do these applications have in common? Position and orientation of the skulls, proteins, bones, neural spike trains, handwritten digits or brains do not matter for the studies' goal: only \textit{shapes} matter. Mathematically, the study analyses the statistical distributions of \textit{the equivalence classes of the data} under translations and rotations. They project the data in a quotient space, called the \textit{shape space}.

The simplest - and most widely used - method for summarizing shapes is the computation of the mean shape. Almost all neuroimaging studies start with the computation of the mean brain shape \cite{Evans2012} for example. One refers to the mean shape with different terms depending on the field: mean configuration, mean pattern, template, atlas, etc. The mean shape is an average of \textit{equivalence classes of the data}: one computes the mean after projection of the data in the shape space. One may wonder if the projection biases the statistical procedure. This is a legitimate question as any bias introduced in this step would make the conclusions of the study less accurate. If the mean brain shape is biased, then neuroimaging's inferences on brain diseases will be too. This paper shows that a bias is indeed introduced in the mean shape estimation under certain conditions.

\subsection*{Related work}
We review papers on the shape space's geometry as a quotient space, and existing results on the mean shape's bias.

\textbf{Shapes of landmarks: Kendall analyses} The theory for shapes \textit{of landmarks} was introduced by Kendall in the 1980's \cite{Kendall1977}. He considered shapes of $k$ labeled landmarks in $\mathbb{R}^m$. The size-and-shape space, written $S\Sigma_m^k$, takes also into account the overall size of the landmarks' set. The shape space, written $\Sigma_m^k$, quotients by the size as well. Both $S\Sigma_m^k$ and $\Sigma_m^k$ have a Riemannian geometry, whose metrics are given in \cite{Le1993}. These studies model the probability distribution of the data directly in the shape space $\Sigma_m^k$. They do not consider that the data are observed in the space of landmarks $\left(\mathbb{R}^m\right)^k$ and projected in the shape space $\Sigma_m^k$. The question of bias is not raised.

We emphasize the distinction between "form" and "shape". "Form" relates to the quotient of the object by rotations and translations only. "Shape" denotes the quotient of the object by rotations, translations, and scalings. Kendall shape spaces refer to "shape": the scalings are quotiented by constraining the size of the landmarks' set to be 1.

\textbf{Shapes of landmarks: Procrustean analyses}
Procrustean analysis is related to Kendall shape spaces but it also considers shapes of landmarks \cite{Goodall1991, Dryden1998, Gower2004}. Kendall analyses project the data in the shape space by explicitly computing their coordinates in $\Sigma_m^k$. In contrast, Procrustean analyses keep the coordinates in $\left(\mathbb{R}^m\right)^k$: they project the data in the shape space by "aligning" or "registering" them. Orthogonal Procrustes analysis "aligns" the sets of landmarks by rotating each set to minimize the Euclidean distance to the other sets. Procrustean analysis considers the fact that the data are observed in the space $\left(\mathbb{R}^m\right)^k$ but does not consider the geometry of the shape space. 

The mean "shape" was shown to be consistent for shapes of landmarks in 2D and 3D in \cite{Lele1993, Le1998}. Such studies have a generative model with a scaling component $\alpha$ and a size constraint in the mean "shape" estimation procedure, which prevents the shapes from collapsing to 0 during registration. In constrast, the mean "form" - i.e. without considering scalings - is shown to be inconsistent in \cite{Lele1993} with an reducto ad absurdum proof. However, this proof does not give any geometric intuition about how to control or correct the phenomenon. More recently, similar inconsistency effects have been observed in \cite{Dryden2015}, showing that implementing ordinary Procrustes analysis without taking into account noise on the landmarks may compromise inference. The authors propose a conditional scoring method for matching configurations in order to guarantee consistency.

\textbf{Shapes of curves}
Curve data are projected in their shape space by an alignment step \cite{Joshi2006}, in the spirit of a Procrustean analysis. The bias of the mean shape is discussed in the literature. Unbiasedness was shown for shapes of signals in \cite{Kurtek2011} but under the simplifying assumption of no measurement error on the data. Some authors provide examples of bias when there is measurement error \cite{Allassonniere2007}. Their experiments show that the mean signal shape may converge to pure noise when the measurement error on simulated signals increases. The bias is proven in \cite{Bigot2011} for curves estimated from a finite number of points in the presence of error. But again, no geometric intuition nor correction strategy is given.

\textbf{Abstract shape spaces} 
\cite{Huckemann2010} studies statistics on abstract shape spaces: the shapes are defined as equivalence classes of objects in a manifold $M$ under the isometric action of a Lie group $G$. This unifies the theory for shapes of landmarks, of curves and of surfaces described above. \cite{Huckemann2010} introduces a generalization of Principal Component Analysis to such shape spaces and does not compute the mean shape as the 0-dimensional principal subspace. Therefore, the bias on the mean shape is not considered. 

But in the same abstract setting, \cite{Miolane2015b} shows the bias of the mean shape, in the special case of a finite-dimensional flat manifold $M$. The authors emphasize how the bias depends on the noise $\sigma$ on the measured objects, more precisely on the ratio of $\sigma$ with respect to the overall size of the objects. \cite{Allassonniere2016} also presents a case study for an infinite dimensional flat manifold $M$ quotiented by translations, where the noise $\sigma$ is one of the crucial variables controlling the bias. However, the case of general curved manifolds $M$ has not been investigated yet.

\subsection*{Contributions and outline}

We are still missing a \textit{global geometric} understanding of the bias. Which variables control its magnitude? Is it restricted to the mean shape or does it appear for other statistical analyses? How important is it in practice: do we even need to correct it? If so, how can we correct it? Our paper addresses these questions. We use a geometric framework that unifies the cases of landmarks, curves, images etc.

\paragraph{Contributions} We make three contributions. First, we show that statistics on shapes are biased when the data are measured with error. We explicitly compute the bias in the case of the mean shape. Formulated in the Procrustean terminology, our result is: the Generalized Procrustes Analysis (GPA) estimator of mean "form" is asymptotically biased, because we do not consider scalings. Second, we offer an interpretation of the bias through the geometry of the shape space. In applications, this aids in deciding when the bias can be neglected in contrast with situations when it must be corrected. Third, we leverage our understanding to suggest several correction approaches.

\paragraph{Outline} The paper has four Sections. Section 1 introduces the geometric framework of shape spaces. Section 2 presents our first two contributions: the proof and geometric interpretation of the bias. Section 3 describes the procedures to correct the bias. Section 4 validates and illustrates our results on synthetic and real data.

\section{Geometrization of template shape estimation}\label{sec:geom}

\subsection{Two running examples}

We introduce two simple examples of shape spaces that we will use to provide intuition. 

First, we consider two landmarks in the plane $\mathbb{R}^2$ (Figure~\ref{fig:simple} (a)). The landmarks are parameterized each with 2 coordinates. For simplicity we consider that one landmark is fixed at the origin on $\mathbb{R}^2$. Thus the system is now parameterized by the 2 coordinates of the second landmark only, e.g. in polar coordinates $(r,\theta)$. We are interested in the shape of the 2 landmarks, i.e. in their distance which is simply $r$.

Second, we consider two landmarks on the sphere $S^2$ (Figure~\ref{fig:simple} (b)). One of the landmark is fixed at the north pole of $S^2$. The system is now parameterized by the 2 coordinates of the second landmark only, i.e. $(\theta, \phi)$, where $\theta$ is the latitude and $\phi$ the longitude. The shape of the two landmarks is the angle between them and is simply $\theta$.

 \begin{figure}[!htbp]
  \centering
     \def\svgwidth{0.8\textwidth}
\input{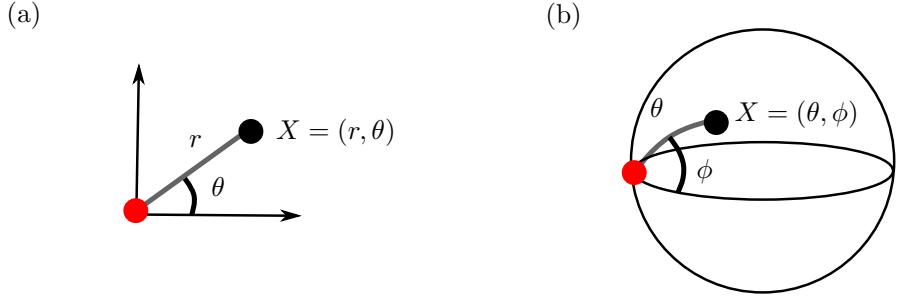}
  \caption{Two landmarks, one in red and one in black, on the plane $\mathbb{R}^2$ (a) and on the sphere $S^2$ (b). The landmark in red is fixed at the origin of the coordinates. The system is entirely represented by the coordinates $X$ of the landmark in black.}\label{fig:simple}
\end{figure}

\subsection{Differential Geometry of shapes}

\subsubsection{The shape space is a quotient space}

The data are objects $\{X_i\}_{i=1}^n$ that are either sets of landmarks, curves, images, etc. We consider that each object $X_i$ is a point in a Riemannian manifold $M$. In this paper, we restrict ourselves to finite dimensional manifolds. We have $M=\mathbb{R}^2$ in the plane example: a flat manifold of dimension 2. We have $M=S^2$ in the sphere example: a manifold of constant positive curvature and of dimension 2. 

By definition, the objects' shapes are their equivalence classes $\{[X_i]\}_{i=1}^n$ under the action of some finite dimensional Lie group $G$: $G$ is a group of continuous transformations that models what does not change the shape. The action of $G$ on $M$ will be written with "$\cdot$". In our examples, the rotations are the transformations that leave the shape of the systems invariant. Let us take $g$ a rotation. The action of $g$ on the landmark $X$ is illustrated by a blue arrow in Figures~\ref{fig:leadingCases1} (a) for the plane and (d) for the sphere. We observe that the action does not change the shape of the systems: the distance between the two landmarks is preserved in (a), the angle between the two landmarks is preserved in (d). The equivalence class of $X_i$ is also called its orbit and written $O_{X_i}$. The equivalence class/orbit of $X$ is illustrated with the blue dotted circle in Figure~\ref{fig:leadingCases1} (a) for the plane example and in Figure~\ref{fig:leadingCases1} (d) for the sphere example. The orbit of $X$ in $M$ is the submanifold of all objects in $M$ that have the same shape as $X$. The curvature of the orbit as a submanifold of $M$ is the key point of the results in Section~\ref{sec:quant}.

The \textit{shape space} is by definition the space of orbits. This is a quotient space denoted $Q=M/G$. One orbit in $M$, i.e. one circle in Figure~\ref{fig:leadingCases1} (b) or (e), corresponds to a point in $Q$. The shape space is $Q=\mathbb{R}_+$ in the plane example. This is the space of all possible distances between the two landmarks, see Figure~\ref{fig:leadingCases1} (c). The shape space is $Q=[0,\pi]$ in the sphere example. This is the space of all possible angles between the two landmarks, see Figure~\ref{fig:leadingCases1} (f).

\subsubsection{The shape space is a metric space}

We consider that the action of $G$ on $M$ is \textit{isometric with respect to the Riemannian metric of $M$}. This implies that the distance $d_M$ between two objects in $M$ does not change if we transform both objects in the same manner. In the plane example, rotating the landmark $X_1$ and another landmark $X_2$ with the same angle does not change the distance between them.

The distance in $M$ induces a quasi-distance $d_Q$ in $Q$: $d_{Q}(O_{X_1}, O_{X_2})= \underset{g \in G}{\operatorname{inf}} d_{M}(g \cdot X_1, X_2)$ \cite{Huckemann2010}. The Lie group the action being isometric, the quasi-distance is in fact a distance. In the case of the The distance between the shapes of $X_1$ and $X_2$ is computed by first registering/aligning $X_1$ onto $X_2$ by the minimizing $g$, and then using the distance in the ambient space $M$. In the plane example, the distance between two shapes is the difference in distances between the landmarks. One can compute it by first aligning the landmarks, say on the first axis of $\mathbb{R}^2$, then one uses the distance in $\mathbb{R}^2$.

\subsubsection{The shape space has a dense set of principal shapes}

The \textit{isotropy group of $X_i$} is the subgroup of transformations of $G$ that leave $X_i$ invariant. For the plane example, every $X_i \neq (0,0)$ has isotropy group the identity and $(0,0)$ has isotropy group the whole group of 2D rotations. Objects on the same orbit, i.e. objects that have the same shape, have conjugate isotropy groups. 

\textit{Principal orbits or principal shapes} are orbits or shapes with smallest isotropy group conjugation class. In the plane example, $\mathbb{R}^2 \setminus (0,0)$ is the set of objects with principal shapes. Indeed, every $X$ in $\mathbb{R}^2 \setminus (0,0)$ belongs to a circle centered at $(0,0)$ and has isotropy group the identity. The set of principal shapes corresponds to $\mathbb{R}^*_+$ in the shape space and is colored in blue on Figure~\ref{fig:leadingCases1} (c). \textit{Singular orbits or singular shapes} are orbits or shapes with larger isotropy group conjugation class. In the plane example, $(0,0)$ is the only object with singular shape. It corresponds to $0$ in $\mathbb{R}_+$ and is colored in red in Figure~\ref{fig:leadingCases1} (c). 

Principal orbits form an open and dense subset of $M$, denoted $M^*$. This means that there are objects with non-degenerated shapes almost everywhere. In the plane example, $\mathbb{R}^2 \setminus (0,0)$ is dense in $\mathbb{R}^2$. In the sphere example, $S^2 \setminus 
\{(0,0),(\pi,0)\}$ is dense in $S^2$, where $(0,0)$ denotes the north pole and $(\pi,0)$ the south pole of $S^2$. Likewise, principal shapes form an open and dense subset in $Q$, denoted $Q^*$. In the plane example, $\mathbb{R}_+^*$ is dense in $\mathbb{R}_+$. In the sphere example, $]0,\pi[$ is dense in $[0,\pi]$. 

The dense set $M^*$ makes the projection in the quotient space a Riemannian submersion \cite{Huckemann2010}, which we use to embed $Q^* = M^*/G$ in $M^*$. In other words, the tangent space of the quotient space is identified everywhere with the vertical space with respect to the (isometric) Lie group action. Regular shapes of $Q^*$ are embedded in the space of objects with regular shapes $M^*$. The computations in Section~\ref{sec:quant} will be carried out on the dense set $M^*$ of principal orbits. The curvature of these principal orbits - i.e. of the blue circles of Figures~\ref{fig:leadingCases1}(b) and (e) - will be the main geometric parameter responsible for the asymptotic bias studied in this paper. We note that the curvature of principal orbits is closely related to the presence of singular orbits: principal orbits wrap around the singular orbits. In the plane example, any blue circle - i.e. any principal orbit - wraps around its center, the red dot $(0,0)$  - which is the singular orbit, see Figure~\ref{fig:leadingCases1}(b).

%\subsubsection{The shape space is stratified}
%Both object space $M$ and shape space $Q$ are stratified because of this distinction between principal shapes or singular shapes. The \textit{type} of an orbit or of a shape is the conjugate class of the corresponding isotropy group.
%
%The (connected components of) the \textit{orbit types} form a stratification of $M$, called the \textit{orbit-type stratification of $M$}. The stratification of $M$ into orbit types strata gives a stratification of the shape space $M/G$. 

 \begin{figure}[!htbp]
  \centering
     \def\svgwidth{0.85\textwidth}
\input{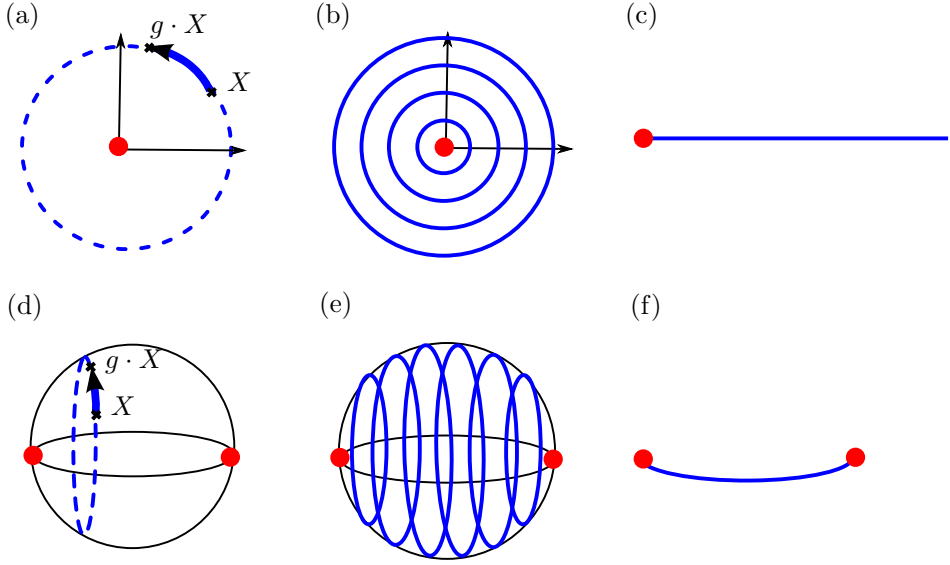}
  \caption{First line: Action of rotations on $\mathbb{R}^2$, with (a): action of rotation $g \in SO(2)$ on point $X \in \mathbb{R}^2$ and orbit of $X$ in blue dotted line; (b) Stratification of $\mathbb{R}^2$ into principal orbit type (blue) and singular orbit type (red); (c) shape space $\mathbb{R}_+ = \mathbb{R}^2/SO(2)$ with a singularity (red dot). Second line: Action of $SO(2)$ on $S^2$ with (d): action of rotation $g \in SO(2)$ on point $X \in S^2$ and orbit of $X$ in blue dotted line; (e) Stratification of $S^2$ into principal orbit type (blue) and singular orbit type (red) (f) shape space $[0,\pi]= S^2/SO(2)$ with two singularities (red dots).}
   \label{fig:leadingCases1}
\end{figure}

We have focused on an intuitive introduction of the concepts. We refer to \cite{Postnikov2001,Alekseevsky2003,Huckemann2010} for mathematical details. From now on, the mathematical setting is the following: we assume a proper, effective and isometric action of a finite dimensional Lie group $G$ on a finite dimensional complete Riemannian manifold $M$.

\subsection{Geometrization of generative models of shape data}

We recall that the data are the $\{X_i\}_{i=1}^n$ that are sets of landmarks, curves, images, etc. In the general case, one can interpret the data $X_i$'s as random realizations of the generative model:
\begin{equation}\label{eq:genModelfull}
X_i = \text{Exp}(g_i \cdot Y_i,\epsilon_i) \qquad i=1...n,
\end{equation}
%where the observed object $X_i \in M$ is a shape $Y_i \in M/G \subset M$ with a given position or parameterization $g_i \in G$. \textcolor{red}{The positioned/parameterized shape $g_i \cdot Y_i$ is observed with} noise $\epsilon_i \in T_{g_i \cdot Y_i}M$, \textcolor{red}{where $T_{g_i \cdot Y_i}M$ is the tangent space of the object space $M$ at the object $g_i \cdot Y_i$} and 
where $\text{Exp}(p,u)$ denotes the Riemannian exponential of $u$ at point $p$. The $Y_i$, $g_i$, $\epsilon_i$ are respectively i.i.d. realizations of random variables that are drawn independently. 

In this paper as well as often in the literature \cite{Allassonniere2007, Allassonniere2016, Bigot2010, Bigot2011, Kurtek2011}, we consider mainly the following simpler generative model:
\begin{equation}\label{eq:genModel}
X_i = \text{Exp}(g_i \cdot Y,\epsilon_i) \qquad i=1...n,
\end{equation}
\textit{where $Y$ is a parameter which we call the template shape}. The following three step formulation of the generative models~\ref{eq:genModelfull} and (\ref{eq:genModel}) gives technical details and their interpretation in terms of shapes.

\paragraph{Step 1: Generate the shape $Y_i \in M^*/G \subset M^*$} In the full generative model~(\ref{eq:genModelfull}), we assume that there is a probability density of shapes in the Riemannian manifold $Q^*=M^*/G$, with respect to the measure on $Q^*$ induced by the Riemannian measure of $M^*$. The $Y_i$'s are i.i.d. samples drawn from this distribution. For example, it can be a Gaussian - or one of its generalization to manifolds \cite{Pennec2006} - as illustrated in Figure~\ref{fig:genModel1} on the shape spaces for the plane and sphere examples. This is the variability that is meaningful for the statistical study, whether we are analyzing shapes of skulls, proteins, bones, neural spike trains, handwritten digits or brains. 

We mainly assume in this paper the simpler generative model~(\ref{eq:genModel}) with parameter: the template shape $Y \in M^*/G$. In other words, we assume that the probability distribution is singular and more precisely that it is simply a Dirac at $Y$. This is the most common assumption within the model~(\ref{eq:genModelfull}) \cite{Allassonniere2007, Allassonniere2015b, Kurtek2011, Bigot2010}. We point out that $Y$ is a point of the shape space $M^*/G$, which is embedded in the object space by $M^*/G \subset M^*$, see previous subsection.

 \begin{figure}[!htbp]
  \centering
       \def\svgwidth{0.7\textwidth}
\input{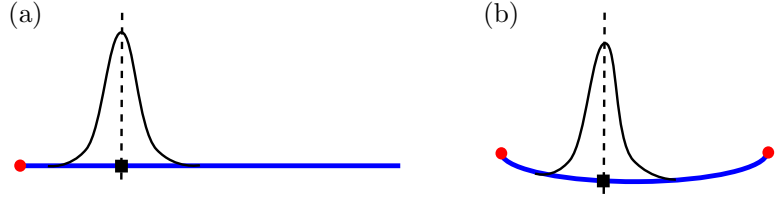}
  \caption{Step 1 of generative model of Equation~(\ref{eq:genModelfull}) for the plane example (a) and the sphere example (b). The black curve illustrates the probability distribution function on shape space. This is a distribution on  $r \in \mathbb{R}_+$ for the plane example (a) and on $\theta \in [0,\pi]$ for the sphere example. The black square represents its expectation. For the simpler generative model of Equation~(\ref{eq:genModel}), the probability distribution boils down to a single point at $Y$ i.e. at the black square.}
   \label{fig:genModel1}
\end{figure}

\paragraph{Step 2: Generate its position/parameterization $g_i \in G$, to get $g_i \cdot Y \in M^*$} We cannot observe shapes in $Q=M/G$. We rather observe objects in $M$, that are shapes posed or parameterized in a certain way. We assume that there is a probability distribution on the positions or parameterizations of $G$, or equivalently a probability distribution on principal orbits with respect to their intrinsic measure. We assume that the distribution does not depend on the shape $Y_i$ that has been drawn. The $g_i$'s are i.i.d. from this distribution. For example, it can be a Gaussian - or one of its generalization to manifolds \cite{Pennec2006} - as illustrated in Figure~\ref{fig:genModel2} on the shape spaces for the plane and sphere examples. 

The drawn $g_i$ is used to pose/parameterize the shape $Y_i$ drawn in Step 1 (in the case of model of Equation~(\ref{eq:genModelfull})), where $Y_i=Y$ (in the case of model of Equation~(\ref{eq:genModel})). The shape is posed/parameterized through the isometric action of $G$ on $Q^* \subset M^*$, to get the object $g_i \cdot Y_i \in M^*$, or the object $g_i \cdot Y \in M^*$ in the case of the simpler model of Equation~(\ref{eq:genModel}).

 \begin{figure}[!htbp]
  \centering
       \def\svgwidth{0.7\textwidth}
\input{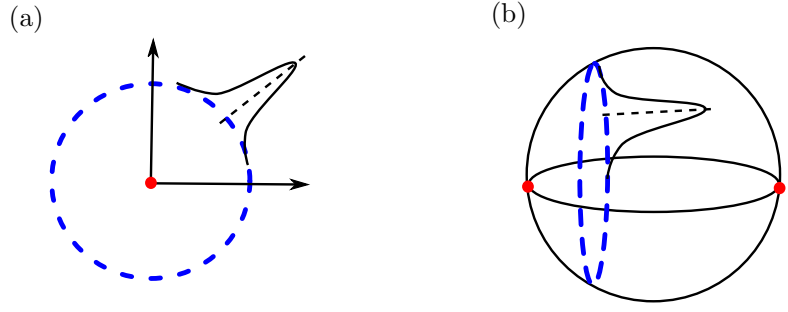}
  \caption{Step 2 of generative model of Equation~(\ref{eq:genModel}) for the plane example (a) and the sphere example (b). The blue dotted curve illustrates the orbit of the shape drawn in Step 1. The black curve illustrates the probability distribution function on this orbit. This is a distribution in angle $\theta \in [0,2\pi]$ for the plane example (polar coordinates) and in angle $\phi \in [0,2\pi]$ for the sphere example (spherical coordinates).}
   \label{fig:genModel2}
\end{figure}

\paragraph{Step 3: Generate the noise $\epsilon_i \in T_{g_i \cdot Y_i}M$} The observed $X_i$'s are results of noisy measurements. We assume that there is a probability distribution function on $T_{g_i \cdot Y_i}M$ representing the noise. We further assume that this is a Gaussian - or one of its generalization to manifolds \cite{Pennec2006} - centered at $g_i \cdot Y_i$, the origin of the tangent space $T_{g_i \cdot Y_i}M$, and with standard deviation $\sigma$, see Figure~\ref{fig:genModel3}. The parameter $\sigma$ will be extremely important in the developments of Section~\ref{sec:quant}, as we will compute Taylor expansions around $\sigma=0$.

 \begin{figure}[!htbp]
  \centering
       \def\svgwidth{0.7\textwidth}
\input{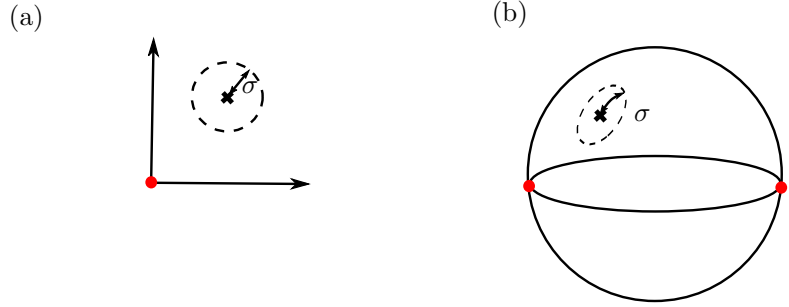}
  \caption{Step 3 of generative model of Equation~(\ref{eq:genModel}) for the plane example (a) and the sphere example (b). The dotted curve represents the isolevel at $\sigma$ of the Gaussian distribution function on the ambient space.}
  \label{fig:genModel3}
\end{figure}

Other generative models may be considered in the literature. We find in \cite{Allassonniere2015b} the model: $X_i= g_i \cdot \text{Exp}(Y_i,\epsilon_i)$, where the Riemannian exponential $\text{Exp}$ is also performed in $M$ through the embedding $ Y_i \in Q \subset M$. In \cite{Kurtek2011}, we find the model without noise: $X_i= g_i \cdot Y_i$.

\subsection{Learning the variability in shapes: estimating the template shape}

Our goal is to unveil the variability \textit{of shapes in $Q=M/G$} while we in fact observe the noisy \textit{objects $X_i$'s in $M$}. We focus on the case where the variability in the shape space is assumed to be a Dirac at $Y$. Our goal is thus to estimate the template shape $Y$, which is a parameter of the generative model.

\paragraph{Estimating the template shape with the Fr\'echet mean in the shape space} We describe the procedure usually performed in the literature \cite{Kurtek2011, Allassonniere2007, Allassonniere2016, Bigot2010, Bigot2011}. One initializes the estimate with $\hat Y =X_1$. Then, one iterates the following two steps until convergence:
\begin{equation}
  \left\{
      \begin{aligned}
(i) \quad & \hat g_i = \underset{g \in G}{\text{argmin }} d_M(\hat Y, g \cdot X_i), \quad\forall i \in \{1,...,n\},\\
(ii) \quad &\hat{Y} = \underset{Y \in M}{\text{argmin }} \sum_{i=1}^n d_M(Y, \hat g_i \cdot X_i)^2 .
      \end{aligned}
    \right.
    \label{eq:procedure}
\end{equation}

This procedure has a very intuitive interpretation. Step (i) is the projection of each object $X_i$ in the shape space $Q$, as illustrated in Figure~\ref{fig:Estimator} (a)-(i) and (b)-(i) with the blue arrows. We assume that each minimizer $\hat g_i$ exists and is attained. In practice, this is true for example when the Lie group is compact, like the Lie group of rotations. We take $X_1$, $X_2$, $X_3$ three objects in $\mathbb{R}^2$ in Figure~\ref{fig:Estimator} (a)-(i) and on $S^2$ in Figure~\ref{fig:Estimator} (b)-(i). One filters out the position/parameterization component, i.e. the coordinate on the orbit. One projects the objects $X_1$, $X_2$, $X_3$ in the shape space $Q$ using the blue arrows.

Step (ii) is the computation of the mean of the registered data $\hat g_i \cdot X_i$, i.e. of the objects' shapes, as illustrated in Figure~\ref{fig:Estimator}(a)-(i) and (b)-(i) where $\hat Y$ is shown in orange. Again, we assume that the minimizer $\hat Y$ exists and is attained. In practice, this will be the case as we will consider a low level of noise in Step 3 of the generative model. The registered data $\hat g_i \cdot X_i$ will be concentrated on a small neighborhood of diameter of order $\sigma$. As a consequence, their Fr\'echet mean in the Riemannian manifold $Q^*$ is guaranteed to exist and be unique \cite{Emery1991}.

The procedure of Equations~(\ref{eq:procedure}) (i)-(ii) decreases at each step the following cost, which is bounded below by zero:
\begin{equation}
\text{Cost}(g_1, ..., g_n, Y) = \sum_{i=1}^n  d_M^2(Y, g \cdot X_i).
\end{equation}
Under the assumptions that both steps (i) and (ii) attained their minimizers, we are guaranteed convergence to a local minimum. We further assume that the procedure converges to the global minimum. The estimator computed with the procedure is then:
\begin{equation}\label{eq:frechet}
\hat Y = \underset{Y \in M}{\text{argmin }} \sum_{i=1}^n \underset{g \in G}{\text{min }} d_M^2(Y, g \cdot X_i).
\end{equation}
The term $\underset{g \in G}{\text{min }} d_M^2(Y, g \cdot X_i)$ in Equation~\ref{eq:frechet} is the distance in the shape space between the shapes of $Y$ and $X_i$. Thus, we recognize in Equation~\ref{eq:frechet} the Fr\'echet mean on the shape space. The Fr\'echet mean is a definition of mean on manifolds \cite{Pennec2006}: it is the point that minimizes the squared distances to the data in the shape space. All in all, one projects the probability distribution function of the $X_i$'s from $M$ to $M/G$ and computes its "expectation", in a sense made precise later.

 \begin{figure}[!htbp]
  \centering
     \def\svgwidth{1\textwidth}
\input{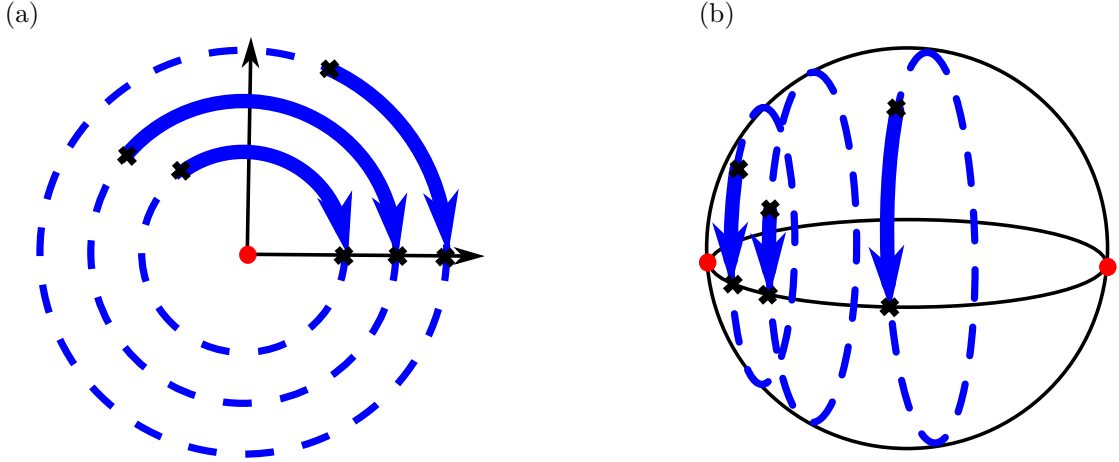}
  \caption{Steps (i) and (ii) of procedure of template shape estimation described in Equations~(\ref{eq:procedure}) (i)-(ii) for the plane example (a) and the sphere example (b). The 3 black plus signs in $\mathbb{R}^2$ (a) or $S^2$ (b) represent the 3 data. The 3 dotted blue curves are their orbits. In Step (i), the $X_i$'s are registered, i.e. their projected in the shape space: 3 curved blue arrows represent their registration with the minimizers $\hat g_i$. The 3 black crosses in $\mathbb{R}_+$ (positive x-axis) (a) or $[0,\pi]$ (b) represent the registered data. In Step (ii), the template shape estimate $\hat Y$ is computed as the Fr\'echet mean of the registered data and is shown in orange.}
   \label{fig:Estimator}
\end{figure}

We implemented the generative model and the estimation procedure on the plane and the sphere in shiny applications available online: \url{https://nmiolane.shinyapps.io/shinyPlane} and \url{https://nmiolane.shinyapps.io/shinySphere}. We invite the reader to look at the web pages and play with the different parameters of the generative model. Figure~\ref{fig:shinyPlane1} shows screen shots of the applications.

\begin{figure}[h!]
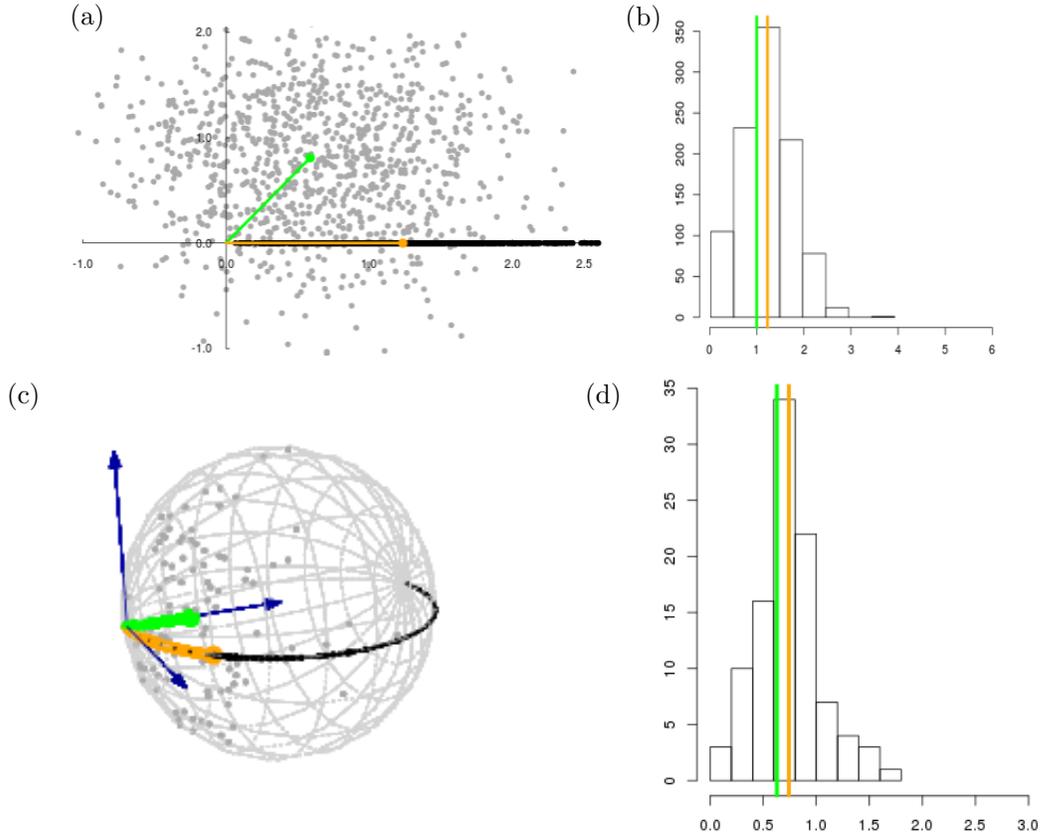

\centering
     \def\svgwidth{0.85\textwidth}
\input{shinyPlane.pdf_tex}
\centering
     \def\svgwidth{0.95\textwidth}
\input{shinySphere.pdf_tex}
    \caption{Screenshot of \url{https://nmiolane.shinyapps.io/shinyPlane} and \url{https://nmiolane.shinyapps.io/shinySphere}. Simulated data $X_i$'s (grey points), template shape $Y$ (green), registered data $\hat{g_i}\cdot X_i$ (black points), template shape estimate $\hat Y$ (orange). Induced distributions on the shapes, template shape $Y$ (green), template shape estimate $\hat Y$ (orange).\label{fig:shinyPlane1}}
\end{figure}

\paragraph{Probabilistic interpretation of the procedure in Equations~(\ref{eq:procedure}): an approximation of a Maximum-Likelihood} Beside its intuitive interpretation, the procedure of template shape estimation of Equations~(\ref{eq:procedure}) has a probabilistic interpretation. We have the generative model of the data $X_i$'s: it is described in Equation~(\ref{eq:genModel}) and Steps (1)-(3) of the previous subsections. Thus, one may consider the Maximum Likelihood (ML) estimate of $Y$, which is one of its parameters:
\begin{align*}
\hat Y_{ML} = \underset{Y \in Q }{\operatorname{argmax } }  \text{ L}(Y) &= \underset{Y \in Q }{\operatorname{argmax}} \sum_{i=1}^n \log(P(X_i|Y))\\
& = \underset{Y \in Q }{\operatorname{argmax}} \sum_{i=1}^n  \log \left(\int_{g \in G} P(X_i|Y,g).P(g)dg \right).
\end{align*}
In the above, $P(X_i|Y)$ is the probability distribution of the data in $M$ as a function of the parameter $Y$. $P(g)$ is the probability distribution on the poses/parameterizations in $G$ as described in Step 2 of the generative model given in the previous subsection. Then, $P(X_i|Y,g)$ is the probability distribution of the noise as described in Step 3.

The $g$'s are hidden variables in the model. The Expectation-Maximization (EM) algorithm is therefore the natural implementation for computing the ML estimator \cite{Allassonniere2007}. But the EM algorithm is computationally expensive, above all for tridimensional images. Thus, one can usually rely on an approximation of the EM, which is described in  \cite{Allassonniere2007} as the "modal approximation" and used in \cite{Allassonniere2015b,Kurtek2011,Bigot2010}. 

We can check that this approximation is the procedure described in Equations~(\ref{eq:procedure}). Step (i) is an estimation of the hidden observations $g_i$ and an approximation of the E-step of the EM algorithm. Step (ii) is the M-step of the EM algorithm: the maximization of the surrogate in the M-step amounts to the maximization of the variance of the projected data. This is exactly the minimization of the squared distances to the data of (ii). We refer to \cite{Allassonniere2007} for details.

\paragraph{Purpose of this paper reformulated with the geometrization} Our main result is to show that the procedure presented in Equations~(\ref{eq:procedure}) (and illustrated on Figure~\ref{fig:Estimator}) gives an asymptotically biased estimate $\hat Y$ for the template shape $Y$ of the generative model presented in Equation~(\ref{eq:genModel}) (and illustrated in Figures~\ref{fig:genModel1}, ~\ref{fig:genModel2} and \ref{fig:genModel3}). Figures~\ref{fig:shinyPlane1} (a)-(d) present what is meant by \textit{asymptotic bias}: the estimate $\hat Y$, of the procedure, is in orange and the template shape $Y$, of the generative model, is in green. The estimator $\hat Y$ (in orange) does converge when the number of data, i.e. the grey points in Figures~\ref{fig:shinyPlane1}(a)-(c), goes to infinity, \textit{but $\hat Y$ does not converge to the template shape $Y$ it is designed to estimate}. For Figures~\ref{fig:shinyPlane1} (a)-(d), this means that even for an infinite number of grey points, the orange estimate will be different from the green parameter. We say that $\hat Y$ has an asymptotic bias with respect to the parameter $Y$.

Where does this asymptotic bias come from and why doesn't $\hat Y$ converge to $Y$? In a nutshell, the bias comes from the external curvature of the template's orbit and we explain and summarize this in Figure~\ref{fig:curvature} and its caption. The full geometric answer with its technical details is provided in the next section.

\section{Quantification and correction of the asymptotic bias\label{sec:quant}}

This section explains, quantifies and corrects the asymptotic bias of the template shape estimate $\hat Y$ with respect to the parameter $Y$. We start from the definition of the asymptotic bias of an estimator with respect to the parameter it is designed to estimate. More precisely we start from a generalization of this definition to Riemannian manifolds:
\begin{equation}\label{eq:defbias}
\text{Bias}(\hat Y,Y)=\mathbb{E}\left[\text{Log}_Y \hat Y\right].
\end{equation}
This is the asymptotic bias of the estimator $\hat Y$ with respect to the (manifold-valued) parameter $Y$, which generalizes the corresponding definition for linear spaces:
\begin{equation}
\text{Bias}(\hat Y,Y)= \mathbb{E}\left[\hat Y - Y\right]
\end{equation}
In the Riemannian definition of the bias, $\text{Log}_Y \hat Y$ is the Riemannian logarithm of $\hat Y \in Q$ at $Y \in Q$, i.e. a vector of the tangent space of $Q$ at the real parameter $Y$, denoted $T_YQ$. The tangent vector $\text{Log}_Y \hat Y$ is illustrated on Figures~\ref{fig:AsymptoticBias} (a) and (b) for the plane and sphere examples. $\text{Log}_Y \hat Y$ represents how much one would have to shoot from $Y$ to get the estimated parameter $\hat Y$. The norm of $\text{Log}_Y \hat Y$, computed using the metric of $Q$ at $Y$, represents the dissimilarity between $\hat Y$ and $Y$.

 \begin{figure}[!htbp]
  \centering
       \def\svgwidth{0.85\textwidth}
\input{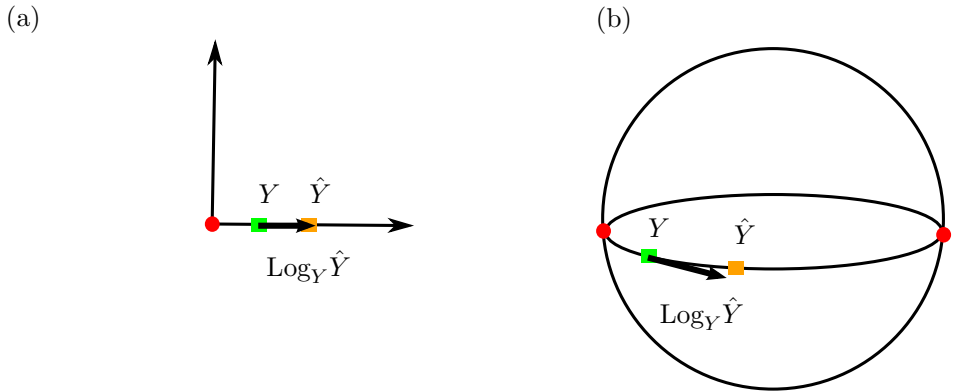}
\caption{Illustration of the Riemannian definition of the asymptotic bias $\text{Log}_Y \hat Y$ for the plane example (a) and the sphere example (b). $\text{Log}$ refers to the Riemannian logarithm \cite{Postnikov2001} and $\text{Log}_Y \hat Y$ is thus a tangent vector of the quotient space $Q$ at $Y$. $\text{Log}_Y \hat Y$ represents how much one would have to shoot from $Y$ to get the estimated parameter $\hat Y$. The norm of $\text{Log}_Y \hat Y$, computed using the metric of $Q$ at $Y$, represents the distance or the dissimilarity between $\hat Y$ and $Y$, i.e. how far $\hat Y$ is from estimating $Y$\label{fig:AsymptoticBias}.}
\end{figure}

We could also consider the variance of the estimator $\hat Y$. The variance is defined as $\text{Var}_n (\hat Y) = \operatorname{E}[d_M(Y,E[Y])^2]$. In the limit of an infinite sample, we have: $\text{Var}_\infty (\hat Y) =0$. This is why we focus on the asymptotic bias.

\subsection{Asymptotic bias of the template's estimator on examples}

We first compute the asymptotic bias for the examples of the plane and the sphere to give the intuition.

The probability distribution function of the $X_i$'s comes from the generative model. This is a probability distribution on $\mathbb{R}^2$ for the plane example, parameterized in polar coordinates $(r,\theta)$ like Figure~\ref{fig:simple}. So we can compute the projected distribution function on the shapes, which are the radii $r$ here. This is done simply by integrating out the distribution on $\theta$, the position on the circles. This gives a probability distribution on $\mathbb{R}_+$ for the plane example. We write it $f: r \mapsto f(r)$. We remark that $f$ does not depend on the probability distribution function on the $\theta_i$'s of Step 2 of the generative model. We can also compute $f: \theta \mapsto f(\theta)$ in the sphere example: we integrate over $\phi$ the probability distribution function on $(\theta,\phi)$.

Figure~\ref{fig:pdfs} (a) shows $f$ for the plane example, for a template $r=1$. We plot it for two different noise levels $\sigma=0.3$ and $\sigma=3$. Note that here $f$ is the Rice distribution. Figure~\ref{fig:pdfs} (b) shows $f$ for the sphere example, for a template $\theta=1$. We plot it for different noise levels and $\sigma=0.3$ and $\sigma=3$. In both cases, the x-axis represents the shape space which is $\mathbb{R}_+$ for the plane example and $[0,\pi]$ for the sphere example. The green vertical bar represents the template shape, which is 1 in both cases. The red vertical bar is the expectation of $f$ in each case. It is $\hat Y$, the estimate of $Y$ We see on these plots that $f$ is not centered at the template shape: the green and red bars do not coincide. $f$ is skewed away from 0 in the plane example and away from $0$ and $\pi$ in the sphere example. The skew increases with the noise level $\sigma$. The difference between the green and red bars is precisely the bias of $\hat Y$ with respect to $Y$. 

\begin{figure}[h!]
\centering
     \def\svgwidth{1\textwidth}
\input{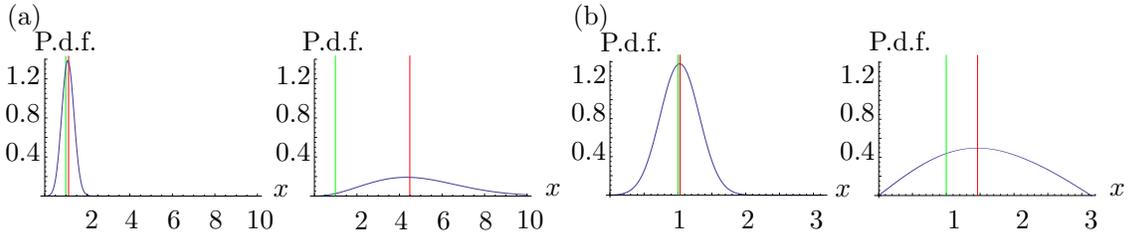}
    \caption{(a) Induced distributions on the distance $r$ between two landmarks in $\mathbb{R}^3$ for real distance $y=1$ (in green) and noise level $\sigma=0.3$ and $\sigma=3$. (b) Induced distributions on the angle $x$ between the two landmarks on $S^3$, for real angle $y=1$ and noise levels $\sigma=0.3$ and $\sigma=3$. In both cases the mean shape estimate $\hat y$ is shown in red.}
   \label{fig:pdfs}
\end{figure}

Figure~\ref{fig:bias} shows the bias of $\hat Y$ with respect to $Y$, as a function of $\sigma$, for the plane (left) and the sphere (right). Increasing the noise level $\sigma$ takes the estimate $\hat Y$ away from $Y$. The estimate is repulsed from $0$ in the plane example: it goes to $\infty$ when $\sigma \rightarrow \infty$. It is repulsed from $0$ and $\pi$ in the sphere example: it goes to $\pi/2$ when $\sigma \rightarrow \pi$, as the probability distribution becomes uniform on the sphere in this limit. One can show numerically that the bias varies as $\sigma^2$ around $\sigma =0$ in both cases. This is also observed on the shiny applications \cite{shiny} at \url{https://nmiolane.shinyapps.io/shinyPlane} and \url{https://nmiolane.shinyapps.io/shinySphere}.

\begin{figure}[h!]
\centering
     \def\svgwidth{1\textwidth}
\input{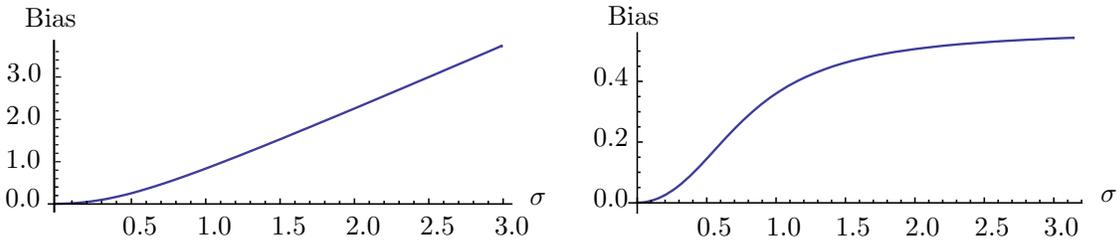}
\caption{Asymptotic bias on the mean shape estimate $\hat Y$ with respect to the noise level $\sigma$ for $r=1$ in the plane example (a) and $\theta =1$ in the sphere example (b). The bias is quadratic near $\sigma=0$. Increasing $\sigma$ takes the estimate $\hat Y$ away from $0$ in shape space $Q=\mathbb{R}_+$ (a) and away from $0$ and $\pi$ in shape space $Q=[0,\pi]$ (b).\label{fig:bias}}
\end{figure}

These examples already show the origin of the asymptotic bias of $\hat Y$, for low noise levels $\sigma \rightarrow 0$ or for high noise levels: $\sigma \rightarrow + \infty$ for the plane example and $\sigma \rightarrow \pi$ for the sphere example. As long as there is noise, i.e. $\sigma \neq 0$, there is \textit{a bias that comes from the curvature of the template's orbit.} Figure~\ref{fig:curvature} shows the template's orbit in blue, in (a) for the plane and (b) for the sphere. In both cases the black circle represents the level set $\sigma$ of the Gaussian noise. In the plane example (a), the probability of generating an observation $X_i$ outside of the template's shape orbit is bigger than the probability of generating it inside: the grey area in the black circle is bigger than the white area in the white circle. There will be more registered data that are greater than the template. Their expected value will therefore be greater than the template and thus biased. In the sphere example (b), if the template's shape orbit is defined by a constant $\theta < \pi/2$, the probability of generating an observation $X_i$ "outside" of it, i.e. with $\theta_i > \theta$, is bigger than the probability of generating it "inside". There will be more registered data that are greater than the template $\theta$ and again, their expected value will also be greater than the template. Conversely, if the template is $\theta > \pi/2$, the phenomenon is inversed: there will be more registered data that are smaller than the template. The average of these registered data will also be smaller than the template. Finally, if the template's shape orbit is the great circle defined by $\theta = \pi/2$, then the probability of generating an observation $X_i$ on the left is the same as the probability of generating an observation $X_i$ on the right. In this case, the registered data will be well-balanced around the template $\theta = \pi/2$ and their expected value will be $\pi/2$: there is no asymptotic bias in this particular case. We prove this in the general case in the next section.

 \begin{figure}[!htbp]
  \centering
       \def\svgwidth{0.85\textwidth}
\input{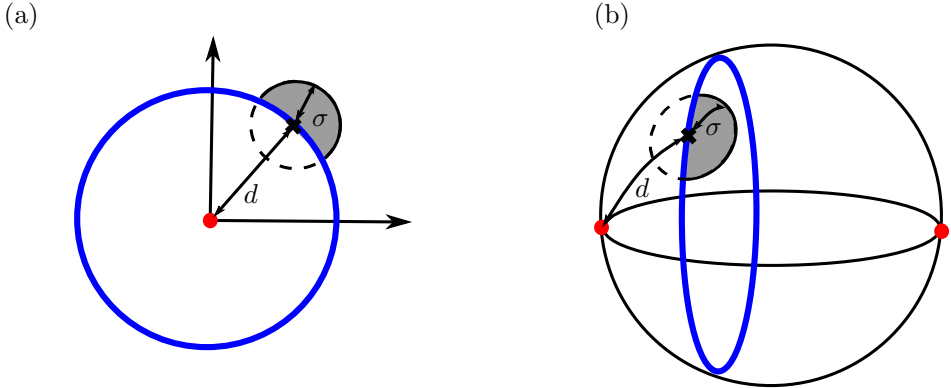}
  \caption{The external curvature of the template's orbit creates the asymptotic bias, in the plane example (a) and the sphere example (b). The blue curve represents the template's orbit. The ball of radius $\sigma$ represents a level set of the Gaussian distribution of the noise in $\mathbb{R}^2$ (a) and $S^2$ (b). The grey-colored area represents the distribution of the noise that generates data outside the orbit of $Y$, in Step 3 of the generative model of Equation~(\ref{eq:genModel}) and Figure~\ref{fig:genModel3}. There is a higher probability that the data are generated "outside" the orbit. The template shape estimate is biased towards greater radii (a) or towards angles closer to $\pi/2$ (b).}
   \label{fig:curvature}
\end{figure}

\subsection{Asymptotic bias of the template's estimator for the general case}

We show the asymptotic bias of $\hat Y$ in the general case and prove that it comes from the external curvature of the template's orbit. We show it for $Y$ a principal shape and for a Gaussian noise of variance $\sigma^2$, truncated at $3\sigma$. Our results will need the following definitions of curvature. 

The \textit{second fundamental form $h$} of a submanifold $O$ of $M$ is defined on $T_XO \times T_XO$ by $h(v,w)=(\nabla_v w)^\bot \in N_XO$, where $(\nabla_v w)^\bot$ denotes the orthogonal projection of covariant derivative $\nabla_v w$ onto the normal bundle. The \textit{mean curvature vector $H$ of $O$} is defined as: $H = Tr(h)$. Intuitively, $h$ and $H$ are measures of extrinsic curvature of $O$ in $M$. For example an hypersphere of radius $R$ in $\mathbb{R}^m$ has mean curvature vector $||H||= \frac{m-1}{R}$. 

\begin{theorem}\label{th:pdf}
The data $X_i$'s are generated in the finite-dimensional Riemannian manifold $M$ following the model: $X_i = \text{Exp}(g_i \cdot Y,\epsilon_i), i=1...n$, described in Equation~(\ref{eq:genModel}) and Figures~\ref{fig:genModel1}-\ref{fig:genModel3}. In this model: (i) the action of the finite dimensional Lie group $G$ on $M$, denoted $\cdot$, is isometric, (ii) the parameter $Y$ is the template shape in the shape space $Q$, (iii) $\epsilon_i$ is the noise and follows a (generalization to manifolds of a) Gaussian of variance $\sigma^2$, see Section~\ref{sec:geom}.

Then, the probability distribution function $f$ on the shapes of the $X_i$'s, $i=1...n$, in the asymptotic regime on an infinite number of data $n \rightarrow + \infty$, has the following Taylor expansion around the noise level $\sigma = 0$:
\begin{align*}\label{eq:f}
f(Z) &  = \frac{1}{(\sqrt{2\pi}\sigma)^q}\exp \left(-\frac{d_M^2(Y,Z)}{2\sigma^2} \right)\left( F_0(Z) +\sigma^2 F_2(Z)+\mathcal{O}(\sigma^{4})+  \epsilon(\sigma)\right)
\end{align*}
where (i) $Z$ denotes a point in the shape space $Q$, (ii) $F_0$ and $F_2$ are functions of $Z$ involving the derivatives of the Riemannian tensor at $Z$ and the derivatives of the graph $G$ describing the orbit $O_Z$ at $Z$, and (iii) $\epsilon$ is a function of $\sigma$ that decreases exponentially for $\sigma \rightarrow 0$.
\end{theorem}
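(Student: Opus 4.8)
The plan is to obtain $f$ as a fiber integral of the ambient noise density over the orbit $O_Z$, and then to extract its small-$\sigma$ behaviour by a Laplace-type expansion in which the external curvature of the orbit is the geometric quantity that survives at order $\sigma^2$. First I would reduce to a single Gaussian centered at $Y$. Because the action of $G$ is isometric and the noise $\epsilon_i$ is isotropic in $T_{g_i\cdot Y}M$, the shape $[X_i]=[\text{Exp}(g_i\cdot Y,\epsilon_i)]$ has the same law as $[\text{Exp}(Y,\tilde\epsilon_i)]$ with $\tilde\epsilon_i$ isotropic Gaussian in $T_YM$; in particular the shape law does not depend on the distribution of the $g_i$'s from Step~2. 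By the law of large numbers, as $n\to+\infty$ the empirical distribution of the shapes $[X_i]$ converges to this pushforward law, so it suffices to compute the density of $[\text{Exp}(Y,\tilde\epsilon)]$.

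Next I would write the ambient density explicitly. Pushing the truncated Gaussian on $T_YM$ through $\text{Exp}_Y$ and expressing it against the Riemannian volume of $M$ introduces the Jacobian of the exponential chart, whose normal-coordinate expansion is governed by the Riemann tensor of $M$ and its derivatives. Since the projection $M^*\to Q^*$ is a Riemannian submersion (Section~\ref{sec:geom}), the volume of $M$ splits locally as the product of the orbit volume and the shape volume, and the co-area formula gives
\begin{equation*}
f(Z)=\int_{O_Z}\frac{1}{(\sqrt{2\pi}\sigma)^{m}}\exp\!\left(-\frac{d_M^2(Y,x)}{2\sigma^2}\right) J(x)\, d\mathrm{vol}_{O_Z}(x),
\end{equation*}
where $m=\dim M$ and $J(x)$ is the inverse Jacobian of $\text{Exp}_Y$ at $\text{Log}_Y x$. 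This reduces the problem to a Gaussian integral over the $(m-q)$-dimensional orbit, $q=\dim Q$; performing $(m-q)$ one-dimensional Gaussian integrations will produce the announced prefactor $(\sqrt{2\pi}\sigma)^{-q}$ together with the factor $\exp(-d_M^2(Y,Z)/2\sigma^2)$, since $d_M(Y,x)\ge d_M(Y,Z)=d_Q(Y,Z)$ with equality at the foot point $z_0\in O_Z$ nearest $Y$, which is the representative of $Z$.

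The core of the argument is a Laplace expansion of this integral around $z_0$. I would parameterize $O_Z$ near $z_0$ as a graph over its tangent space $T_{z_0}O_Z$, whose leading quadratic coefficient is exactly the second fundamental form $h$ of the orbit; Taylor-expanding $d_M^2(Y,x)$ along this graph, the first curvature contribution is the pairing of $\text{Log}_{z_0}Y\in N_{z_0}O_Z$ with $h$, i.e. precisely the external curvature of the orbit. Substituting the expansions of $d_M^2(Y,\cdot)$, of $J$, and of the induced orbit measure into the integral, rescaling the orbit variable by $\sigma$, and collecting powers of $\sigma$ would yield $f(Z)=(\sqrt{2\pi}\sigma)^{-q}\exp(-d_M^2(Y,Z)/2\sigma^2)\big(F_0(Z)+\sigma^2F_2(Z)+\mathcal{O}(\sigma^4)\big)$, with $F_0$ collecting the zeroth-order Gaussian normalization and $F_2$ assembling the mean curvature vector $H$ of the orbit together with the metric-derivative terms. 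The truncation of the Gaussian at $3\sigma$ confines $x$ to a neighborhood of $z_0$ of radius $\mathcal{O}(\sigma)$ and makes the discarded tail $\epsilon(\sigma)$ exponentially small, justifying the term-by-term expansion of the truncated integral.

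The step I expect to be the main obstacle is this last expansion: one must simultaneously control three sources of $\sigma^2$ corrections --- the higher-order Taylor coefficients of $d_M^2(Y,\cdot)$ restricted to the curved orbit (where $h$ and the ambient curvature enter), the Jacobian $J$ of the exponential map, and the Riemannian measure on the orbit --- and verify that all odd-order contributions integrate to zero by symmetry, so that no $\sigma^1$ term appears and the expansion proceeds in even powers. Keeping these contributions organized through the change of variables, and checking that the remainder is uniformly controlled over the relevant range of $Z$, is where the bulk of the technical work lies.
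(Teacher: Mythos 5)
Your proposal follows essentially the same route as the paper's proof: reduce, via isometry of the action, to a single truncated Gaussian centered at $Y$ (so the shape law is independent of the distribution of the $g_i$'s), write $f(Z)$ as a fiber integral of the ambient density over the orbit $O_Z$, parameterize the orbit near the representative of $Z$ as a graph over its tangent space whose quadratic coefficient is the second fundamental form, Taylor-expand $d_M^2(Y,\cdot)$ and the induced orbit measure, integrate against truncated Gaussian moments (odd moments vanishing, yielding even powers of $\sigma$ and the prefactor $(\sqrt{2\pi}\sigma)^{-q}$), and bound the discarded tail as exponentially small. The only divergence is a bookkeeping convention for the manifold Gaussian: you push a tangent-space Gaussian through $\text{Exp}_Y$, so curvature enters via the Jacobian of the exponential map, whereas the paper takes the intrinsic density $\propto \exp\left(-d_M^2(X,Y)/2\sigma^2\right)$, so curvature enters via the normalization constant $C_M(\sigma)$ and the normal-coordinate expansion of $dO_Z$; both readings are admissible under the theorem's hypothesis and give the stated structure for $F_0$ and $F_2$.
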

\begin{proof}
The sketch of the proof is given in Appendices, with the expressions of $F_0$ and $F_2$. The detailed proof is in the supplementary materials.
\end{proof}
The exponential in the expression of $f$ belongs to a Gaussian distribution centered at $Z$ and of isotropic variance $\sigma^2 \mathbb{I}$. However the whole distribution $f$ differs from the Gaussian because of the $Z$-dependent term in the right parenthesis. This induces a skew of the distribution away from the singular shapes, as observed for the examples in Figure~\ref{fig:pdfs}. This also means that the expectation of this distribution is not $Z$ and that the variance is not the isotropic $\sigma^2 \mathbb{I}$.

\begin{theorem}\label{th:bias}
The data $X_i$'s are generated with the model described in Equation~(\ref{eq:genModel}) and Figures~\ref{fig:genModel1}-\ref{fig:genModel3}, where the template shape $Y$ is a parameter and under the assumptions of Theorem~\ref{th:pdf}. The template shape $Y$ is estimated with $\hat Y$, which is computed by the usual procedure described in Equations~(\ref{eq:procedure}).

In the regime of an infinite number of data $n \rightarrow + \infty$, the asymptotic bias of the template's shape estimator $\hat Y$, with respect to the parameter $Y$, has the following Taylor expansion around the noise level $\sigma = 0$: 
\begin{equation}\label{eq:bias}
\text{Bias}(\hat Y,Y)= - \frac{\sigma^2}{2} H(Y) + \mathcal{O}(\sigma^4) + \epsilon(\sigma)
\end{equation}
where (i) $H$ is the mean curvature vector of the template shape's orbit which represents the external curvature of the orbit in $M$, and (ii) $\epsilon$ is a function of $\sigma$ that decreases exponentially for $\sigma \rightarrow 0$.
\end{theorem}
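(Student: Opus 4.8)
The plan is to pass to the asymptotic limit $n\to\infty$, replace the estimator by the population Fr\'echet mean of the shape distribution $f$ supplied by Theorem~\ref{th:pdf}, and then extract its displacement from $Y$ by a Gaussian-moment (Laplace) expansion in $\sigma$. First I would record what the procedure computes: writing the output of Equations~(\ref{eq:procedure}) in the form of Equation~(\ref{eq:frechet}) and using $\min_{g}d_M^2(Y,g\cdot X_i)=d_Q^2([Y],[X_i])$, the estimator $\hat Y$ is exactly the empirical Fr\'echet mean in $Q$ of the projected shapes $\{[X_i]\}$. Because the registered data lie in a neighbourhood of diameter $O(\sigma)$ (cf.\ the discussion after Equation~(\ref{eq:frechet})), consistency of the empirical Fr\'echet mean gives $\hat Y\to\bar Y$ as $n\to\infty$, where $\bar Y=\operatorname{argmin}_{Z\in Q}\int_Q d_Q^2(Z,W)\,f(W)\,dW$ is the population Fr\'echet mean of $f$, so the asymptotic bias of Equation~(\ref{eq:defbias}) becomes the deterministic vector $\text{Bias}(\hat Y,Y)=\text{Log}_Y\bar Y$. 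I would then linearise the mean condition: $\bar Y$ annihilates the gradient of $E(Z)=\frac{1}{2}\int_Q d_Q^2(Z,W)f(W)\,dW$, i.e.\ $\int_Q\text{Log}_{\bar Y}W\,f(W)\,dW=0$, and since $f$ is concentrated at scale $\sigma$ with a bias of order $\sigma^2$, a Taylor expansion of $\nabla E$ about $Y$ gives $0=-\mathbb{E}_f[\text{Log}_Y W]+\operatorname{Hess}E(Y)[\text{Log}_Y\bar Y]+O(\sigma^4)$ with $\operatorname{Hess}E(Y)=\mathrm{Id}+O(\sigma^2)$. Hence $\text{Log}_Y\bar Y=\mathbb{E}_f[\text{Log}_Y W]+O(\sigma^4)$: to leading order the bias is the $f$-expectation of the Riemannian logarithm at the true template, and the curvature of $Q$ itself only perturbs $\operatorname{Hess}E$ and the volume element at order $\sigma^2$, feeding the $O(\sigma^4)$ remainder.

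Next I would compute $\mathbb{E}_f[\text{Log}_Y W]$ by inserting the expansion of $f$ from Theorem~\ref{th:pdf} and integrating in normal coordinates $u=\text{Log}_Y W\in T_YQ$. Writing $d_M^2(Y,W)=|u|^2+O(|u|^4)$ — the embedding $Q^*\subset M^*$ being isometric, this discrepancy is even in $u$ — and $f(W)\propto e^{-|u|^2/2\sigma^2}(F_0(Y+u)+\sigma^2F_2(Y+u)+\dots)$, the ratio $\int u\,f/\int f$ is a Gaussian moment. Expanding $F_0(Y+u)=F_0(Y)(1+\langle\nabla\log F_0(Y),u\rangle+\dots)$ and using $\int u_iu_j\,e^{-|u|^2/2\sigma^2}=\sigma^2\delta_{ij}\int e^{-|u|^2/2\sigma^2}$ yields $\mathbb{E}_f[\text{Log}_Y W]=\sigma^2\,\nabla\log F_0(Y)+O(\sigma^4)$. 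Everything discarded is genuinely $O(\sigma^4)$: the constant parts of $F_0$ and of $\sigma^2F_2$ integrate against the odd factor $u$ to zero by parity, the gradient of $\sigma^2F_2$ is already $O(\sigma^4)$, and the even corrections from $d_M$ versus $d_Q$ and from the Riemannian volume element of $Q$ again vanish against the odd integrand at this order.

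It remains to identify the geometry of $\nabla\log F_0(Y)$, which is the crux. The coefficient $F_0$ is the change of measure produced by projecting the ambient Gaussian onto the shape space, and it is proportional to the square root of the Riemannian volume $V(Z)=\mathrm{vol}(O_Z)$ of the orbit through $Z$ (in the plane example $F_0(Z)=\sqrt{r/r_0}$, with $V\propto r$). I would prove $\nabla\log F_0(Y)=\frac{1}{2}\nabla\log V(Y)=-\frac{1}{2}H(Y)$ using the first variation of volume: deforming $O_Y$ with unit normal speed along a transverse shape direction $\nu$ changes its volume at the rate $\frac{d}{dt}V=-\int_{O}\langle H,\nu\rangle$, so $\nabla\log V(Y)=-H(Y)$, where $H=\mathrm{Tr}\,h$ is exactly the mean curvature vector defined before the statement. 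Substituting gives $\text{Bias}(\hat Y,Y)=\sigma^2\nabla\log F_0(Y)+O(\sigma^4)=-\frac{\sigma^2}{2}H(Y)+O(\sigma^4)$, and the exponentially small term $\epsilon(\sigma)$ is inherited from Theorem~\ref{th:pdf} and from the $3\sigma$-truncation of the noise.

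The main obstacle is this last identification: one must read $F_0$ off the derivation of Theorem~\ref{th:pdf}, disentangle the orbit-volume contribution that produces $H$ from the ambient-curvature contributions that the theorem packs into $F_0$ and $F_2$, and check that the latter do not contaminate the $\sigma^2$ coefficient of the bias — the curved sphere example, where $H$ is the geodesic curvature $\cot\theta$ of the parallels and vanishes precisely at $\theta=\pi/2$, is the decisive consistency test. A secondary technical difficulty is to make the limit-expansion interchange rigorous: the consistency of the Fr\'echet mean on the (a priori stratified) quotient $Q$, the uniformity of the Laplace expansion, and the control of the $O(\sigma^4)$ and $\epsilon(\sigma)$ remainders.
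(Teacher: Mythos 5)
Your proposal is correct, and its skeleton is the same as the paper's: pass to the $n\rightarrow\infty$ limit, reduce the bias to the $f$-expectation of $\text{Log}_Y$, evaluate that expectation by Gaussian moments against the expansion of Theorem~\ref{th:pdf}, and identify the linear coefficient of $F_0$ at $Y$ with $-\frac{1}{2}H(Y)$. Two differences are worth recording. First, where you linearize the Fr\'echet condition $\int_Q \text{Log}_{\bar Y}W\,f(W)\,dQ(W)=0$ with $\operatorname{Hess}E(Y)=\mathrm{Id}+\mathcal{O}(\sigma^2)$, the paper simply asserts $\text{Log}_Y\hat Y=\int_Q\overrightarrow{YZ}\,f(Z)\,dQ(Z)$ ``by definition''; your argument is the justification this identification actually needs, and it correctly places the discrepancy between the Fr\'echet mean and the linear expectation at $Y$ in the $\mathcal{O}(\sigma^4)$ remainder. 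Second, for the crux the paper is purely computational: its appendix exhibits $F_0(Z)=\det\left(M_{ab}(Z)\right)^{-1/2}$, where $M_{ab}$ is the Hessian of $d_M^2(Y,\cdot)$ restricted to the orbit, equal to $\delta_{ab}-y^c h^c_{ab}(Y)$ up to ambient-curvature terms of order $||y||^2$, so that $F_{01}^a(Y)=-\frac{1}{2}H^a(Y)$ is read off from the linear term after setting $y=-z$. Your route---interpret $F_0$ through the orbit volume $V$, then apply the first variation of volume along the $G$-invariant unit normal field to get $\nabla\log V=-H$---is a genuinely different, more geometric derivation of the same identity, and it is sound; its one weak point is the assertion $F_0\propto\sqrt{V}$, which is exact only when $M$ is flat (your plane example). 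On a curved $M$ the Laplace factor $\det(M_{ab})^{-1/2}$ differs from $\sqrt{V(Z)/V(Y)}$ by a factor $1+\mathcal{O}(||z||^2)$, which cannot contribute to the gradient at $z=0$; this is exactly the check you flag as the main obstacle, and exactly what the $\mathcal{O}(||y||^2)$ curvature terms in the paper's $M_{ab}$ confirm, so the gap closes. As for what each approach buys: the paper's computation delivers the explicit $F_0$, $F_2$ and the remainder control that Theorem~\ref{th:pdf} requires anyway, while yours isolates the geometric reason the answer is the mean curvature (the bias is $\sigma^2$ times the gradient of the log orbit-volume sensed at scale $\sigma$) and carries its own consistency checks, such as $F_0=\sqrt{r/r_0}$ in the plane and the vanishing of the bias at the equator $\theta=\pi/2$ on the sphere.
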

\begin{proof}
The sketch of the proof is given in Appendices. The detailed proof is in the supplementary materials.
\end{proof}
This generalizes the quadratic behavior observed in the examples on Figure~\ref{fig:bias}. The asymptotic bias has a geometric origin: it comes from the external curvature of the template's orbits, see Figure~\ref{fig:curvature}.

We can vary two parameters in equation~\ref{eq:bias}: $Y$ and $\sigma$. The external curvature of orbits generally increases when $Y$ is closer to a singularity of the shape space (see Section 1) \cite{Lytchak2010}. The singular shape of the two landmarks in $\mathbb{R}^2$ arises when their distance is 0. In this case, the mean curvature vector has magnitude $|H(Y)| = \frac{1}{d}$: it is inversely proportional to $d$, the radius of the orbit. $d$ is also the distance of $Y$ to the singularity $0$.  

\subsection{Limitations and extensions}

\paragraph{Beyond $Y$ being a principal shape} Our results are valid when the template $Y$ is a principal shape. This is a reasonable assumption as the set of principal shapes is dense in the shape space. What happens when $Y$ approaches a singularity, i.e. when $Y$ changes stratum in the stratified space $Q$? Taking the limit $d \rightarrow 0$ in the coefficients of the Taylor expansion is not a legal operation. Therefore, we cannot conclude on the Taylor expansion of the Bias for $d \rightarrow 0$. Indeed, the Taylor expansion may even change order for $d \rightarrow 0$. We take $M=\mathbb{R}^m$ with the action of $SO(m)$ and the template $Y=(0,...,0)$:
\begin{equation}
\text{Bias}(\hat Y,Y)=\sqrt{2}\frac{\Gamma(\frac{m+1}{2})}{\Gamma(\frac{m}{2})}\sigma.
\end{equation}  
The bias is linear in $\sigma$ in this case. 

\paragraph{Beyond $\sigma <<1$} The assumption $\sigma <<1$ represents our hope that the noise on the shape data is not too large with respect to the overall size of the mean shape. Nevertheless it would be very interesting to study the asymptotic bias for any $\sigma$, including large noises ($\sigma \rightarrow +\infty$). The distribution over the $X_i$'s in $M$ will be spread on the whole manifold $M$. We cannot rely on local computations on $M$ (at the scale of $\sigma$) anymore. We have to make global assumptions on the manifold $M$. 

The plane example is the canonical example of a flat manifold. The sphere example is the canonical example of manifold with constant (positive) curvature. The bias as a function of $\sigma$ is plotted in Figure~\ref{fig:bias}. It leads us to the conjecture that the estimate converges towards a barycenter of shape space's singularities when the noise level increases. Singularities have a repulsive action on the estimation of each template's shape. Such repulsive force acts on each estimators. As a result, the estimators of the mean shape finds an equilibrium position: the barycenter.

\paragraph{Beyond one Dirac in $Q$: several templates} We have considered so far that there is a unique template shape $Y$: the generative model has a Dirac distribution at $Y$ in the shape space. What happens for other distributions? We assume that there are $K$ template shapes $Y_1, ..., Y_K$. Observations are generated in $M$ from each template shape $Y_k$ with the generative model of Section 2. Our goal is to unveil the structure of the shape distribution, i.e. the $K$ template shapes here, given the observations in $M$. The distributions on shapes projected on the shape space is a mixture of probability density functions of the form of equation~\ref{eq:f}. Its modes are related to the template shapes. The K-means algorithm is a very popular method for data clustering. We study what happens if one uses K-means algorithms on shapes generated with the generative model above.

The goal is to cluster the shape data in $K$ distinct and significant groups. One performs a coordinate descent algorithm on the following function:
\begin{equation}
J(c,\mu) = \sum_i d_Q(X_i, \mu_{c_{i}})^2.
\end{equation}
In other words, the minimization of $J$ is performed through successive minimizations on the assignment labels $c$'s and the cluster's centers $\mu$'s. Given the $c$, minimizing $J$ with respect to the $\mu$'s is exactly the simultaneous computation of $K$ Fr\'echet means in the shape space. Meaningful well separated clusters (high inter-clusters dissimilarity) are chosen so that members are close to each other (high intra-cluster similarity). In other words, the quality of the clustering is evaluated by the following criterion:
\begin{equation}
D = \underset{\text{clusters } i,j}{\operatorname{min}}\quad\frac{d_Q(c_i,c_j)}{\underset{i}{\operatorname{max}} \text{ diam}(c_i)},
\end{equation} 
which is the dissimilarity between clusters quotiented by the diameter of the clusters. In the absence of singularity in the shape space, the projected distribution looks like Figure~\ref{fig:kmeans} (a) and $D \propto \frac{1}{\sigma}$. The criterion is worse in the presence of singularities. 

 \begin{figure}[!htbp]
  \centering
       \def\svgwidth{1\textwidth}
\input{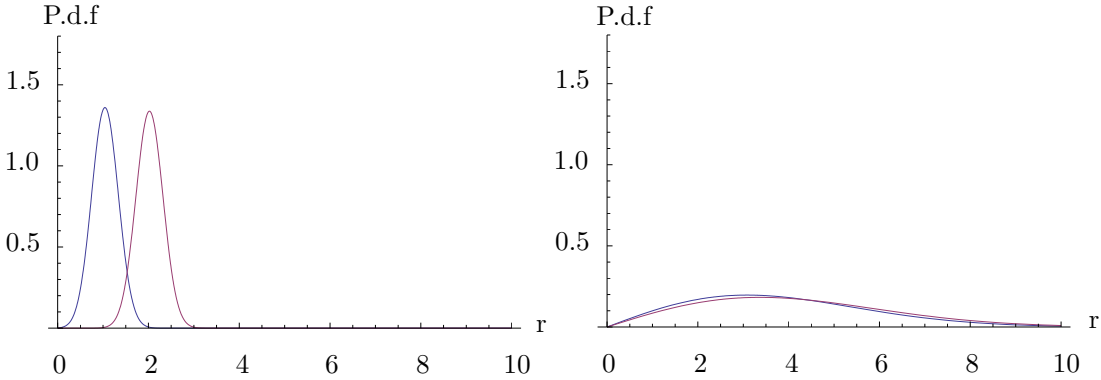}
  \caption{Two clusters of template shapes for the plane example: $r_1=1$ (blue) and $r_2 =2$ (dark red). Noise levels: $\sigma =0.3$ (left) and $\sigma =3$ (right). The 2 clusters are hardly distinguishable when the noise increases.}
     \label{fig:kmeans}
\end{figure}

Figure~\ref{fig:kmeans} illustrates this behavior for the plane example. We consider any two clusters $i,j$ and call $\hat Y_i$, $\hat Y_j$ the estimated centroids. The criterion $D$ writes:
\begin{align*}
D \equiv \frac{\hat y_i - \hat y_j}{\sigma} \underset{\sigma \rightarrow + \infty}{\sim}\frac{\Gamma \left(\frac{m+1}{2}\right)}{\sqrt{2}m\Gamma\left(\frac{m}{2}\right)}\frac{y_i^2 - y_j^2}{\sigma^2} = O\left(\frac{1}{\sigma^2}\right).
\end{align*}
Even in the best case with correct assignments to the clusters $i$ and $j$, the K-means algorithm looses an order of validation when computed on shapes.

\paragraph{Beyond the finite dimensional case} Our results are valid when $M$ is a finite dimensional manifold and $G$ a finite dimensional Lie group. Some interesting examples belong to the framework of infinite dimensional manifold with infinite dimensional Lie groups. This is the case for the LDDMM framework on images \cite{Joshi2006}. It would be important to extend these results to the infinite dimensional case. 

We take $M=\mathbb{R}^m$ with the action of $SO(m)$. We have a analytic expression of $f$ in this case \cite{Miolane2015b}.  Figure~\ref{fig:finitedims} shows the influence of the dimension $m$ for the probability distribution functions on the shape space and for the Bias. The bias increases with $m$. This leads us to think that it appears in infinite dimensions as well.

 \begin{figure}[!htbp]
  \centering
       \def\svgwidth{1\textwidth}
\input{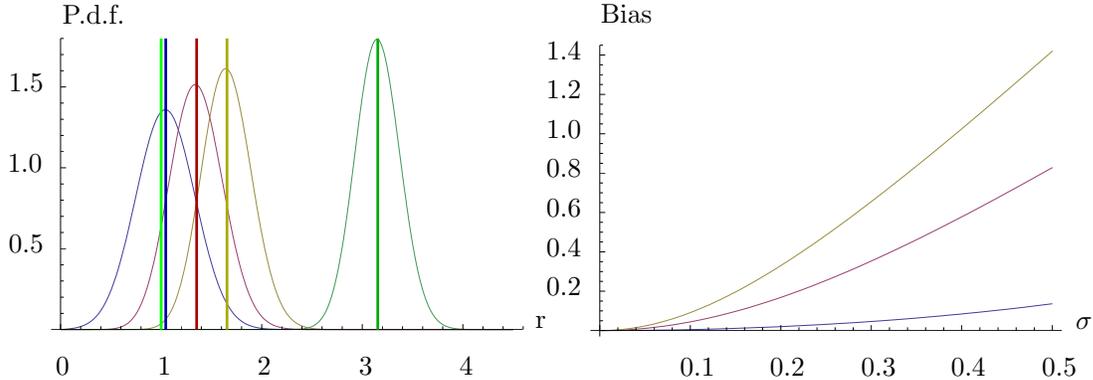}
  \caption{Probability distributions functions (noise $\sigma =0.3$) and bias for $\mathbb{R}^m$ for $m=2$, $m=10$, $m=20$ and $m=100$. Template shape is $r=1$.}
     \label{fig:finitedims}
\end{figure}

\section{Correction of the systematic bias}\label{sec:correction}

We propose two procedures to correct the asymptotic bias on the template's estimate. They rely on the bootstrap principle, more precisely a parametric bootstrap, which is a general Monte Carlo based resampling method that enables us to estimate the sampling distributions of estimators \cite{Efron1979}. We assume that we know the variance $\hat \sigma^2$ from the experimental setting.

\subsection{Iterative Bootstrap}

The first procedure is called an Iterative Bootstrap. Algorithm~\ref{alg:iterative} gives the details. Figure~\ref{fig:iterativeBootstrap} illustrates it on the plane example.

\begin{figure}[h!]
\centering
       \def\svgwidth{0.6\textwidth}
       \input{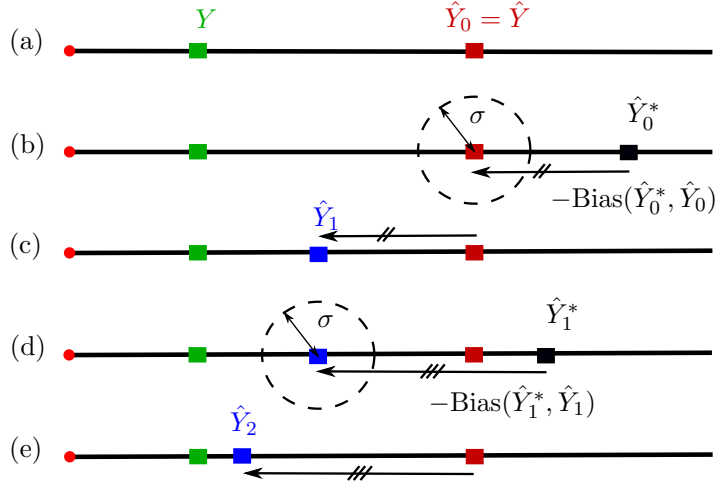}
  \caption{Algorithm~\ref{alg:iterative} Iterative bootstrap procedure on the plane example for $n \rightarrow +\infty$. (a) Initialization, (b) Generate bootstrap sample from $\hat Y_{0}$ and compute the corresponding estimate $\hat{Y_{0}}^*$, compute the bias $\hat Y_{0} -\hat Y_{0}^*$, (c) Correct $\hat Y_{0}$ with the bias to get $\hat Y_{1}$, (d) Generate bootstrap sample from $\hat Y_{1}$ and iterate as in (b), (e) Get $\hat Y_2$ etc.}
   \label{fig:iterativeBootstrap}
\end{figure}

Algorithm~\ref{alg:iterative} starts with the usual template's estimate $\hat Y_0=\hat Y$, see Figure~\ref{fig:iterativeBootstrap} (a). At each iteration, we correct $\hat Y$ with a better approximation of the bias. First, we generate bootstrap data by using $\hat Y$ as the template shape of the generative model. We perform the template's estimation procedure with the Fr\'echet mean in the shape space. This gives an estimate $\hat Y_0^*$ of $\hat Y_0$. The bias of $\hat Y_0*$ with respect to $\hat Y_0$ is $\text{Bias}(\hat Y_0^*,\hat Y_0)$. It gives an approximation of the bias $\text{Bias}(\hat{\hat {Y}},\hat Y)$, see Figure~\ref{fig:iterativeBootstrap} (b). We correct $\hat Y$ by this approximation of the bias. This gives a new estimate $\hat Y_1$, see Figure~\ref{fig:iterativeBootstrap} (c). We recall that the bias $\text{Bias}(\hat{\hat {Y}},\hat Y)$ depends on $Y$, see Theorem \ref{th:bias}. $\hat Y_1$ is closer to the template $Y$ than $\hat Y_0$. Thus, the next iteration gives a better approximation $\text{Bias}(\hat Y_1^*,\hat Y_1)$ of $\text{Bias}(\hat{\hat {Y}},\hat Y)$. We correct the initial $\hat Y$ with this better approximation of the bias, etc. The procedure is written formally for a general manifold $M$ in Algorithm \ref{alg:iterative}.

\begin{algorithm}\label{alg:iterative}
\caption{Corrected template shape estimation with \textbf{Iterative Bootstrap}}

\noindent \textbf{Input:} Objects $\{X_i\}_{i=1}^n$, noise variance $\sigma^2$\\
\textbf{Initialization:}\\
\indent $\hat Y_0 = \text{Frechet}(\{[X_i]\}_{i=1}^n)$ \\
\indent $ k \leftarrow 0$\\
\textbf{Repeat:}\\
\indent Generate bootstrap sample $\{X^{(k)^*}_i\}_{i=1}^n$ from $\mathcal{N}_M(Y_k,\sigma^2)$\\
\indent $\widehat{Y_k} = \text{Fr\'echet}(\{[X^{(k)^*}]_i\}_{i=1}^n)$ \\
\indent $\text{Bias}_k = \text{Log}_{Y_k} \widehat{Y_k}$\\
\indent $\hat Y_k = \text{Exp}_{\hat Y_0}\left(-\Pi_{\hat Y_k}^{\hat Y_0} \left( \text{Bias}_k\right) \right)$\\
\indent $k \leftarrow k+1$\\
\noindent \textbf{until convergence:} $||\text{Log}_{\hat Y_{k+1}} \hat Y_k|| < \epsilon $\\
\noindent \textbf{Output: $\hat Y_k$}
\setlength{\parindent}{0ex}
\end{algorithm}

In Algorithm~\ref{alg:iterative}, $\Pi_A^B$ denotes the parallel transport from $T_AM$ to $T_BM$. For linear spaces like $\mathbb{R}^2$ in the plane example, $\text{Log}_{P_1}P_2 = \overrightarrow{P_1P_2}$,  $\text{Exp}_{P_1}(u) = P_1+u$, and the parallel transport is the identity $\Pi_{P_1}^{P_2}( u)=u$. For other manifolds like $S^2$ in our sphere example, the parallel transport $\Pi_A^B(u)$ can theoretically be computed by solving the parallel transport equation at any point on the chosen curve linking $A$ to $B$: $ D_{t_{AB}}v=0$ in $v$, where $D$ is the covariant derivative in the direction $t_{AB}$, the tangent vector of the curve at the chosen point \cite{Postnikov2001}. In practice, the Schild's ladder \cite{Ehlers2012} or the Pole ladder \cite{Lorenzi2013b} can be used to compute an approximation of the parallel transport.

Algorithm~\ref{alg:iterative} is a fixed-point iteration $Y^{(k+1)} = F(Y^{(k)})$ where:
\begin{equation}
F(X) = \text{Exp}_{\hat Y}( -\Pi_{X}^{\hat{Y}} \left( \text{Bias}\right))\qquad\text{where:  }\quad \text{Bias} = \text{Log}_{X} \hat X.
\end{equation}
In a linear setting we have simply $F(X) = \hat Y- \overrightarrow{X\hat X}$. One can show that $F$ is a contraction and that $Y$, the template shape, is the unique fixed point of $F$ (using the local bijectivity of the Riemannian exponential and the injectivity of the estimation procedure). Thus the procedure converges to $Y$ in the case of an infinite number of observations $n \rightarrow +\infty$. Figure~\ref{fig:iterativeBootstrap_FixedPoint} illustrates the convergence for the plane example, with a Gaussian noise of standard deviation $\sigma =1$. The template shape $Y=1.2$ was initially estimated at $\hat Y = 4.91$. Algorithm~\ref{alg:iterative} corrects the bias.

\begin{figure}[h!]
\centering
       \def\svgwidth{0.5\textwidth}
       \input{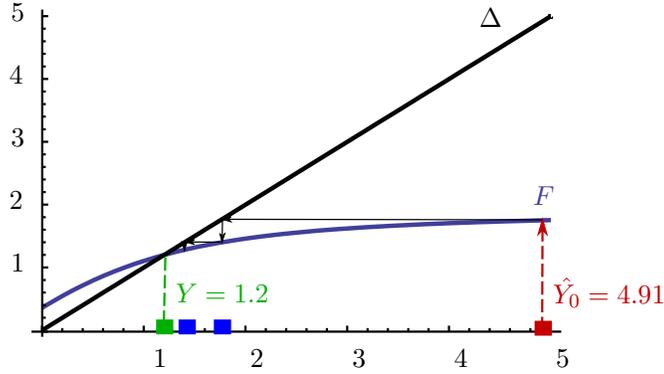}
  \caption{$F$ of the fixed-point procedure and first 2 iterations for $\sigma=1$, $m=3$. $\Delta$ is the first diagonal. The initial estimate is biased $\hat{Y_0}=4.91$. The Iterative Bootstrap converges towards the template shape $Y=1.2$.}
   \label{fig:iterativeBootstrap_FixedPoint}
\end{figure}

Figures~\ref{fig:iterationsplane} and~\ref{fig:iterationssphere} show the iterations of Iterative Bootstrap for the plane and the sphere example.

\begin{figure}[h!]
\centering
       \def\svgwidth{0.8\textwidth}
\input{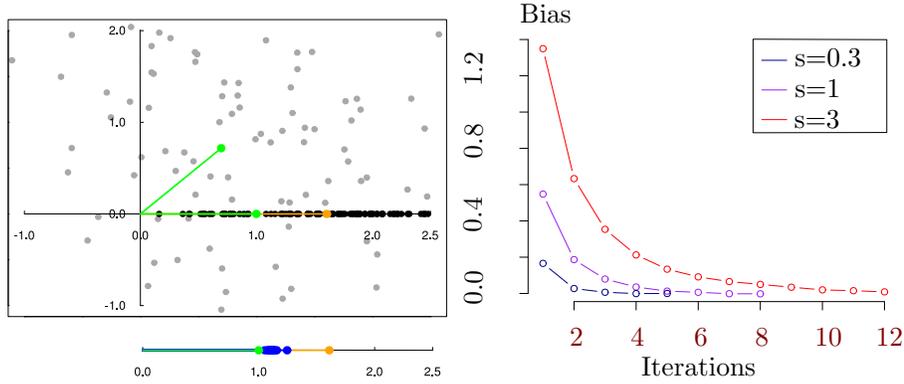}
\caption{Left: Implementation of the plane example: the green point is the template shape $Y$, the grey points are the data $X_i$'s generated with the model~(\ref{eq:genModel}), the black points are the registered data $\hat g_i \cdot X_i$'s, the orange point is the template shape estimate $\hat Y$. The quotient space $\mathbb{R}_+$ is copied below, and the blue points show the iterations of the iterative bootstrap of Algorithm~~\ref{alg:iterative} that corrects the bias of $\hat Y$ as an estimate of $Y$: the blue points go from the orange point $\hat Y$ to the green point $Y$. Right: Convergence of the iterative bootstrap of Algorithm~\ref{alg:iterative}, for the plane example. The colors red, purple, blue represent different noises $\sigma$. The bias of $\hat Y$ as an estimator of $Y$ is shown on the ordinate axis: it converges to $0$ in a few iterations.\label{fig:iterationsplane}}
\end{figure}

\begin{figure}[h!]
\centering
       \def\svgwidth{0.8\textwidth}
\input{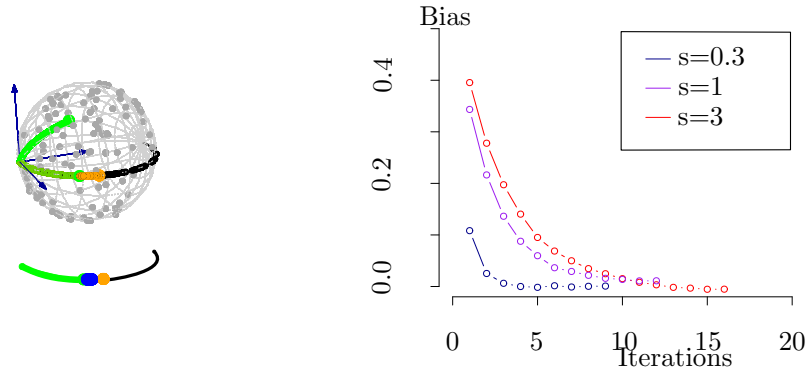}
\caption{Left: Implementation of the sphere example: the green point is the template shape $Y$, the grey points are the data $X_i$'s generated with the model~(\ref{eq:genModel}), the black points are the registered data $\hat g_i \cdot X_i$'s, the orange point is the template shape estimate $\hat Y$. The quotient space $[0, \pi]$ is copied below, and the blue points show the iterations of the iterative bootstrap of Algorithm~~\ref{alg:iterative} that corrects the bias of $\hat Y$ as an estimate of $Y$: the blue points go from the orange point $\hat Y$ to the green point $Y$. Right: Convergence of the iterative bootstrap of Algorithm~\ref{alg:iterative}, for the sphere example. The colors red, purple, blue represent different noises $\sigma$. The bias of $\hat Y$ as an estimator of $Y$ is shown on the ordinate axis: it converges to $0$ in a few iterations.\label{fig:iterationssphere}}
\end{figure}

\subsection{Nested Bootstrap}

The second procedure is called the Nested Bootstrap. Algorithm~\ref{alg:nested} details it. Figure~\ref{fig:nestedBootstrap} illustrates it on the plane example.

\begin{figure}[h!]
\centering
       \def\svgwidth{0.6\textwidth}
       \input{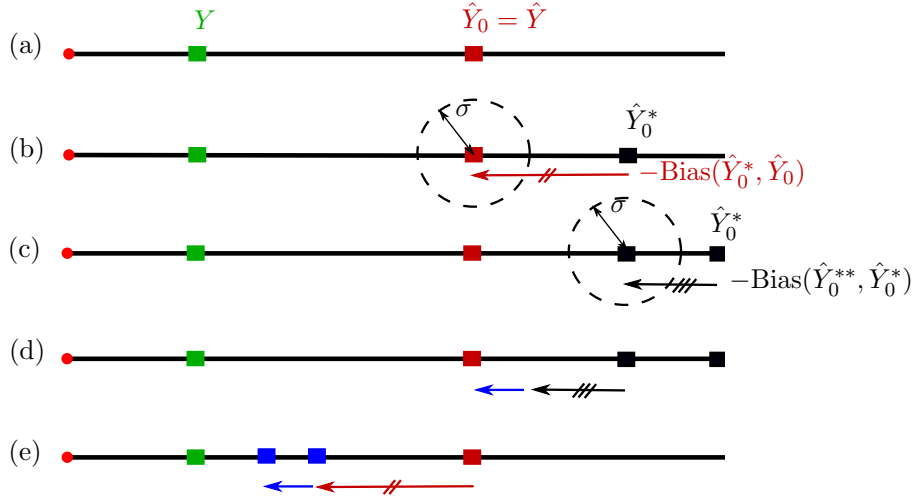}
  \caption{Algorithm~\ref{alg:nested} Nested Bootstrap on the plane example for $n \rightarrow +\infty$. (a) Initialization, (b) Generate bootstrap sample from $\hat Y_{0}$; compute the estimate $\hat{Y_{0}}^*$, compute the bias $\hat Y_{(0)} -\hat Y_{0}^*$, (c) Generate bootstrap sample from $\hat Y_{0^*}$; compute the estimate $\hat{Y_{0}}^{**}$, compute the bias $\hat Y_{0}^* -\hat Y_{0}^{**}$, (d) compute the blue arrow, i.e. the bias of $\text{Bias}(\hat Y_0^{**},\hat{Y}_0^*)$ as an estimate of $\text{Bias}(\hat Y_0^*,\hat{Y}_0)$, (e) Correct $\hat Y$ with the bias-corrected bias.}
   \label{fig:nestedBootstrap}
\end{figure}

Algorithm~\ref{alg:nested} starts like Algorithm~\ref{alg:iterative} with $\hat{Y}_0 = \hat Y$, see Figure~\ref{fig:nestedBootstrap} (a). It also performs a parametric bootstrap with $\hat{Y}^{(0)}$ as the template, computes the bootstrap replication $\hat{Y}_0^*$ and the approximation $\text{Bias}(\hat Y_0^*,\hat{Y}_0)$ of $\text{Bias}(\hat Y,Y)$, see Figure~\ref{fig:iterativeBootstrap} (b). Now Algorithm~\ref{alg:nested} differs from Algorithm~\ref{alg:iterative}. We want to know how biased is $\text{Bias}(\hat Y_0^*,\hat{Y}_0)$ as an estimate of $\text{Bias}(\hat Y,Y)$? This is a valid question as the bias depends on the template $Y$, see Theorem~\ref{th:bias}. We want to estimate this dependence. We perform a bootstrap, nested in the first one, with $\hat{Y}^{(0)*}$ as the template. We compute the estimate $\hat{Y}_0^{**}$ and the approximation $\text{Bias}(\hat Y_0^{**},\hat{Y}_0^*)$ of $\text{Bias}(\hat Y_0^*,\hat{Y}_0)$, see Figure~\ref{fig:iterativeBootstrap} (c). We observe how far $\text{Bias}(\hat Y_0^{**},\hat{Y}_0^*)$  is from $\text{Bias}(\hat Y_0^*,\hat{Y}_0)$. This gives the blue arrow, which is the bias of $\text{Bias}(\hat Y_0^{**},\hat{Y}_0^*)$ as an estimate of $\text{Bias}(\hat Y_0^*,\hat{Y}_0)$, see Figure~\ref{fig:iterativeBootstrap} (d). The blue arrow is an approximation of how far $\text{Bias}(\hat Y_0^*,\hat{Y}_0)$ is from $\text{Bias}(\hat Y,Y)$. We correct our estimation of the bias (in red) by the blue arrow. We correct $\hat{Y}$ by the bias-corrected estimate of its bias, see Figure~\ref{fig:iterativeBootstrap} (e).

\begin{algorithm}\label{alg:nested}
\caption{Corrected template shape estimation with \textbf{Nested Bootstrap}}

\noindent \textbf{Input:} Objects $\{X_i\}_{i=1}^n$, noise variance $\sigma^2$\\
\textbf{Initialization:}\\
\indent $\hat Y_0 = \text{Frechet}(\{[X_i]\}_{i=1}^n)$ \\
\textbf{Bootstrap:}\\
\indent Generate bootstrap sample $\{X^{*}_i\}_{i=1}^n$ from $\mathcal{N}_M(\hat Y_0,\sigma^2)$\\
\indent $\hat Y_0^* = \text{Fr\'echet}(\{[X^{*}]_i\}_{i=1}^n)$ \\
\indent $\text{Bias} = \text{Log}_{\hat Y_0} \widehat{Y}_0^*$\\
\noindent \textbf{Nested Bootstrap:} \\
\indent For each $i$:
\begin{itemize}
\item Generate bootstrap sample $\{X^{**}_i\}_{k=1}^n$ from $\mathcal{N}_M(\hat Y_0^*,\sigma^2)$
\item $\hat Y_{0,i}^{**} = \text{Fr\'echet}(\{[X^{**}]_i\}_{k=1}^n)$
\end{itemize}
\indent $ \text{Bias}(\text{Bias}) = \text{Log}_{\hat Y_0} \widehat{Y}_0^* - \Pi_{\hat Y_0^*}^{\hat Y_0} \text{Log}_{Y_0^*} \widehat{Y}_0^{**}$\\
\indent $\hat Y_1 = \text{Exp}_{\hat Y_0}\left(-\text{Bias}-\text{Bias}(\text{Bias})\right)$\\
\noindent \textbf{Output: $\hat Y_1$}
\setlength{\parindent}{0ex}
\end{algorithm}

\subsection{Comparison}

One may use the Iterative Bootstrap or the Nested Bootstrap depending on the experimental setting. We illustrate them both on the plane example in Figure~\ref{fig:comparison}. Figure~\ref{fig:comparison} (a) shows the performance of both algorithms for a signal-over-noise ratio (SNR) of $1$: the template shape in green is a $r=1$ and the standard deviation of the noise is $\sigma = 1$, so that $\text{SNR}=\frac{r}{\sigma}=1$. Figure~\ref{fig:comparison} (b) shows both algorithms for $\text{SNR}=\frac{r}{\sigma} = \frac{1}{3} = 0.33$. In all four experiments: the template shape is the green dot at $r=1$, the template shape estimate is in orange, and the successive steps of the bootstrap algorithms are the blue dots: we have several blue dots for the Iterative Bootstrap, and two blue dots for the Nested Bootstrap.

The advantages of the Iterative Bootstrap are the following. It corrects the bias of $\hat Y$ perfectly in the case of a very large number of observations $n$, as we can see in Figures~\ref{fig:comparison} (a) on top and (b) on top: the blue dots converge to the green dot for the two different SNRs. Thus, the Iterative Bootstrap can be used to experimentally compute the mean curvature vector $H$ of each orbit of a group action. One probes the orbit's curvature by "feeling it" with a Riemannian Gaussian on $M$ and projecting on the shape space. The drawbacks of the Iterative Bootstrap are the following. It works only with very large $n$. It is not robust as it uses the generative model several times. If the generative model is far from being true, then the iterative bootstrap fails. 

The advantages of the Nested Bootstrap are the following. It is a standard statistical procedure that is more robust with respect to variations of the generative model. Even if generative model is different from the one that we assume, the Nested Bootstrap performs well. Moreover, it does not need as much data as the Iterative Bootstrap. Its drawback is that it does not correct perfectly the bias, especially when the noise is large. This can be seen in Figures~\ref{fig:comparison} (a) on bottom and (b) on bottom. While the Nested Bootstrap gets close to the green dot on Figure~\ref{fig:comparison} (a) on bottom for the $\text{SNR} = 1$, it stays significantly far from the green dot on Figure~\ref{fig:comparison} (b) bottom for the $\text{SNR} = 0.33$.

\begin{figure}[h!]
\centering
       \def\svgwidth{1\textwidth}
       \input{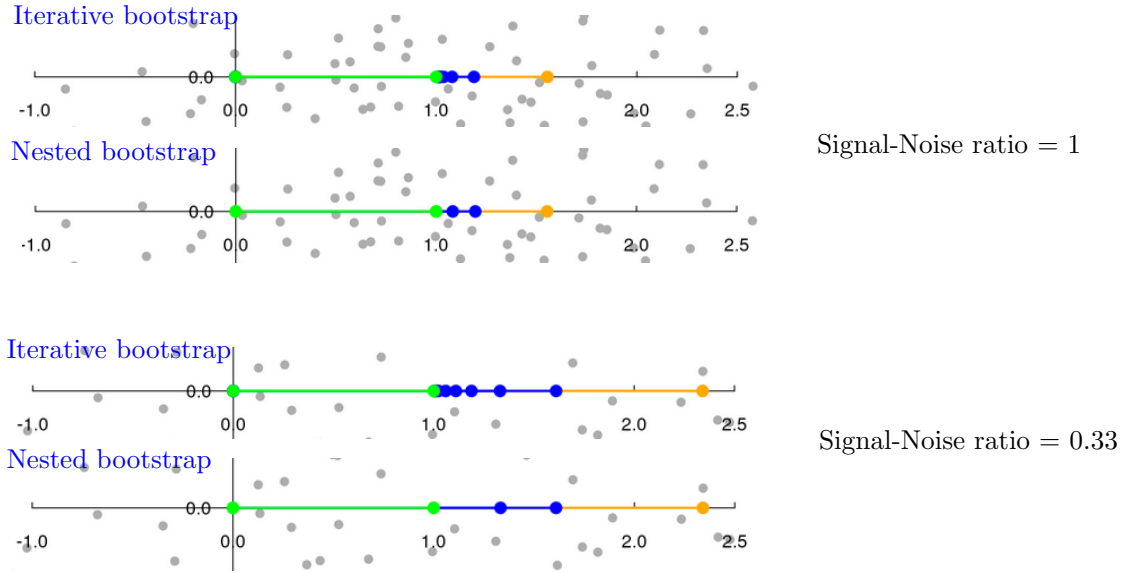}
  \caption{Comparison of the Iterative bootstrap and the Nested bootstrap on simulation with two different Signal-Noise ratio, which is $\text{SNR} = \frac{Y}{\sigma} = \frac{r}{\sigma}$, the ratio of the template $Y$, which is the radius $r$ in the plane example, on the noise level $\sigma$. (a) shows $\text{SNR} =1$ and (b) shows $\text{SNR} = 0.33$. In all four experiments: the template shape is the green dot at $r=1$, the template shape estimate is in orange, and the successive steps of the bootstrap algorithms are the blue dots: we have several blue dots for the Iterative Bootstrap, and two blue dots for the Nested Bootstrap.\label{fig:comparison}}
\end{figure}

These simulations give a rule of thumb, i.e. some intuition, for when the bias needs to be corrected. They confirm what could already be observed in Figure~\ref{fig:bias}. In Figure~\ref{fig:bias}, the template is fixed at $r=1$ or $\theta = 1$. A variation in the noise level $\sigma$ corresponds to a variation in the SNR. In particular, we read the threshold $SNR = 1$ when $\sigma = 1$, i.e. when the noise $\sigma$ is comparable to the distance of the template $Y$ to the singularity. In both cases for $SNR > 1$, the template estimate is significantly different from the template as the bias is of the order of magnitude of the template itself. 

\section{Applications to simulated and real data}\label{sec:apps}

\subsection{Simulated triangles}

We perform a simulation using the iterative bootstrap on triangles. We randomly generate $n=10^5$ triangles in $\mathbb{R}^2$ through the generative model described in Equation~(\ref{eq:genModel}) of Section~\ref{sec:geom}. This is illustrated on Figure~\ref{fig:iterationstriangles}. We consider the isometric action of the Lie group $SO(2)$ of 2D rotations on $(\mathbb{R}^2)^3$, the space of 3 landmarks in 2D. For Step 1 of the generative model of Section~\ref{sec:geom}, the template triangle is chosen arbitrarily and then fixed during the simulations. The template triangle is represented in green in Figure~\ref{fig:iterationstriangles}. For Step 2, we consider a Dirac distribution at the identity in the Lie group $SO(2)$. In other words, we do not rotate the triangles. At the end of this step, each of the $10^5$ triangles is exactly the green triangle of Figure~\ref{fig:iterationstriangles}. This simpler model does not decrease the impact of the simulation: the noise of Step 3 is independent of the position of the triangle on their orbit, and Step (i) of the procedure is to quotient out the position of the orbit. For Step 3, we add bivariate Gaussian noise on each landmark, i.e. on each of the three points defining the green triangle. This gives a data set of $10^5$ triangles. Some of them are represented in grey on Figure~\ref{fig:iterationstriangles}.

We then apply the procedure described in Equations~(\ref{eq:procedure}) (i)-(ii) to estimate the (green) template triangle. In Step (i), we register the (grey) triangle data. This gives the registered the data, illustrated in black in Figure~\ref{fig:iterationstriangles}. We then compute the Fr\'echet mean of the black triangles by computing the Euclidean mean of each of their 3 landmarks. This gives the estimate of the template triangle, in orange on Figure~\ref{fig:iterationstriangles}.

\begin{figure}[h!]
\centering
       \def\svgwidth{0.8\textwidth}
\input{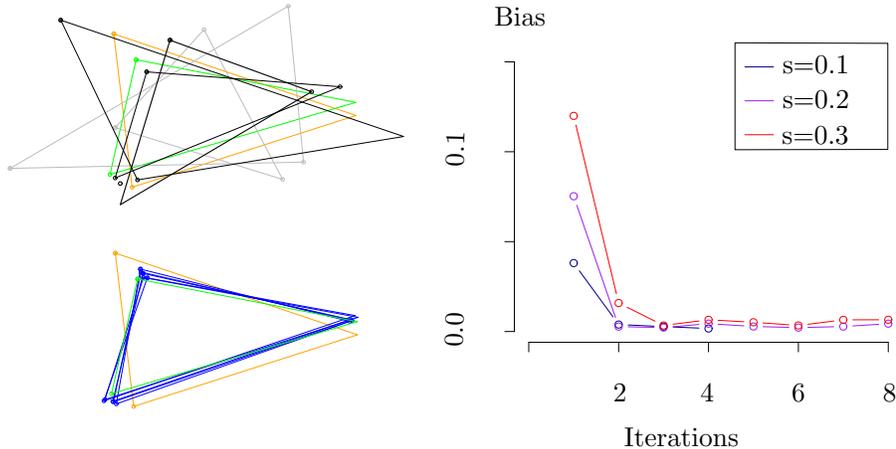}
\caption{Left: Implementation for simulated triangles: the green triangle is the template shape $Y$, the grey triangles are (some of) the data $X_i$'s generated with the model~(\ref{eq:genModel}), the black triangles are (some of) the registered data $\hat g_i \cdot X_i$'s, the orange triangle is the template shape estimate $\hat Y$. The blue triangles show the iterations of the iterative bootstrap of Algorithm~~\ref{alg:iterative} that corrects the bias of $\hat Y$ as an estimate of $Y$: the blue triangles go from the orange triangle $\hat Y$ to the green triangle $Y$. Right: Convergence of the iterative bootstrap of Algorithm~\ref{alg:iterative}. The colors red, purple, blue represent different noises $\sigma$. The bias of $\hat Y$ as an estimator of $Y$ is shown on the ordinate axis: it converges to $0$ in a few iterations.}\label{fig:iterationstriangles}
\end{figure}

The template estimate in orange is different from the template in green, even with a very high number of observations: $n=10^5$. We apply the iterative bootstrap to correct this bias. The number of iterations required for the convergence of Algorithm 1 with respect to the noise level are shown in Figure~\ref{fig:iterationstriangles}. We observe the convergence in the three experiments for less than 10 iterations.  
 
\subsection{Real triangles: shape of the Optic Nerve Head}

Now we go to real triangle data. We have 24 images of Rhesus monkeys' eyes, acquired with a Heidelberg Retina Tomograph \cite{Patrangenaru2015}. For each monkey, an experimental glaucoma was introduced in one eye, while the second eye was kept as control. One seeks a significant difference between the glaucoma and the control eyes. On each image, three anatomical landmarks were recorded: $S$ for the superior aspect of the retina, $N$ for the nose side of the retina, and $T$ for the side of the retina closest to the temporal bone of the skull. The data are matrices $\{X_i\}_{i=1}^n$ where the landmark coordinates form the rows. For the ONH example, $M$ is the space of $3$ landmarks in 3D, $M=(\mathbb{R}^3)^3$ and the rotations act isometrically on each object $X_i$.

\textbf{Analysis} This simple example illustrates the estimation of the template shape. We use the following procedure to compute the mean shape for each group. We initialize $\hat Y$ with $X_1$ and repeat the following two steps until convergence:
\begin{align*}
(1) \quad & \forall
 i \in \{1,...,n\},\quad \hat R_i = \underset{R \in SO(3)}{\text{argmin }} ||\hat Y - X_i.R||^2,  \quad \text{(register to the current mean shape)},\\
(2)\quad& \hat{Y} = \frac{1}{n}\sum_{i=1}^n X_i.\hat R_i \quad \text{(update the mean shape estimate)}.
\end{align*}
Figure~\ref{fig:ONH} shows the mean shapes $\hat Y^{\text{control}}$ of the control group (left) and $\hat Y^{\text{glaucoma}}$ of the glaucoma group (right) in orange, while the initial data are in grey. The difference between the two groups is quantified by the distance between their means: $||\hat Y^{\text{control}}-\hat Y^{\text{glaucoma}}||= 21.84\upmu$m. We want to determine if this analysis presents a bias that significantly changes the estimated shape difference between the groups.

\begin{figure}[htbp!]
\centering
     \def\svgwidth{1\textwidth}
\input{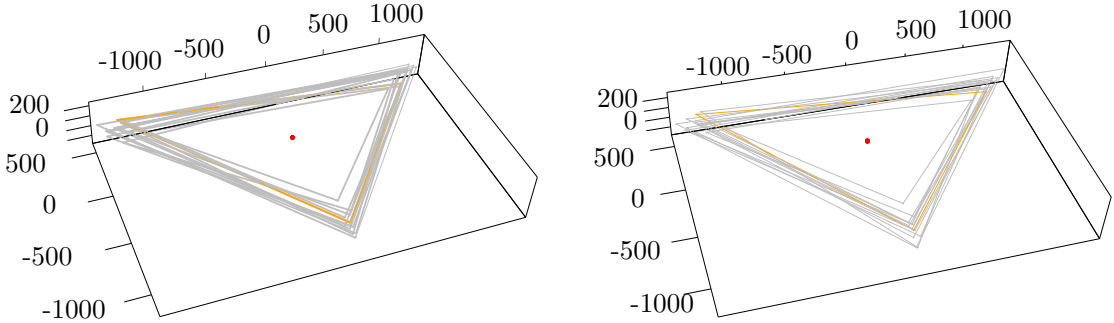}
\caption{Triangles data in grey for the control group (left) and the glaucoma group (right). In orange, the estimated template shapes. Distances are measured in $\upmu$m.}\label{fig:ONH}
\end{figure}

We use the nested bootstrap to compute an approximation of the asymptotic bias on each mean shape, for a range of noise's standard deviation in $\{100\upmu\text{m}, 200\upmu\text{m}, 300\upmu\text{m}, 400\upmu\text{m}\}$. The asymptotic bias on the template shape of the glaucoma group is $\{0.1\upmu\text{m},0.11\upmu\text{m},0.12\upmu\text{m},$ $0.13\upmu\text{m}\}$ and of the control group is $\{0.27\upmu\text{m},0.42\upmu\text{m},0.55\upmu\text{m},0.67\upmu\text{m}\}$. The corrected template shape differences are $\{22.01\upmu\text{m}, 22.08\upmu\text{m}, 22.14\upmu\text{m}, 22.18\upmu\text{m}\}$. In particular, for $\sigma = 400\upmu\text{m}$, we observe that the bias in the template shape are respectively $0.67\upmu\text{m}$ for the healthy group and $0.13\upmu\text{m}$ for the glaucoma group. This follows the rule-of-thumb: the bias is more important for the healthy group, for which the overall size is smaller than the glaucoma group, for a same noise level. The bias of the template shape estimate accounts for less than $1\upmu\text{m}$ in this case, which is less than $0.1$\% of the shapes' sizes. This computation guarantees that this study has not been significantly affected by the bias. 
  
\subsection{Protein shapes in Molecular Biology}

We estimate the impact of the bias on statistics on protein shapes. This subsection aims to suggest the potential importance of the results of this paper for Molecular Biology.

A standard hypothesis in Biology is that structure (i.e. shape) and function of proteins are related. Fundamental research questions about protein shapes include structure prediction - given the protein amino-acid sequence, one tries to predict its structure - and design - given the shape, one tries to predict the sequence needed.

One relies on experimentally determined 3D structures gathered in the Protein Data Base (PDB) \cite{Berman00}. They contain errors on the protein's atoms coordinates. Average errors range from 0.01 $\mathring{\text{A}}$ to 1.76 $\mathring{\text{A}}$, which is of the magnitude of the length of some covalent bonds. These values are averaged over the whole protein and in general, the main-chain atoms are better defined than the side-chain atoms or the atoms at the periphery. This is illustrated on Figure~\ref{fig:bfactor} where we have plot the B-factor (related to coordinates errors \cite{Tickle1998}) as a colored map on the atoms for proteins of PDB-codes 1H7W and 4HBB. 

\begin{figure}[htbp!]
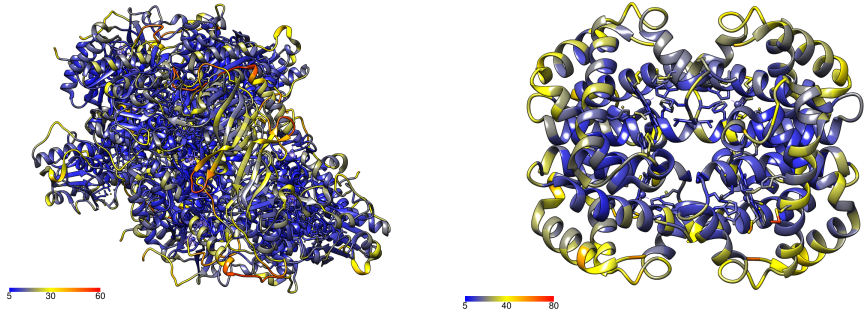

\centering
\includegraphics[scale=0.15]{1H7W_bfactor.png}
\includegraphics[scale=0.15]{4HBB_bfactor.png}
\caption{Errors on atoms coordinates represented by the B-factor, for proteins 1H7W (left) and 4HBB (right). Atoms at the periphery of the proteins tend to have more errors, which appear in yellow-red colors.\label{fig:bfactor}}
\end{figure}

\paragraph{Protein's radius of gyration} A biased estimate of a protein shape has consequences for studies on proteins folding. Stability and folding speed of a protein depend on both the estimated shape of the denatured state (unfolded state) and of the native state (folded state). One may study if compact initial states yield to faster folding. The protein compactness is represented by the protein's Radius of Gyration, defined as: $R_g^2 = \frac{1}{N}\sum_{\text{non H atoms i}}^N (r_i - R_C)^2$, where $N$ is the number of non-hydrogen atoms, $r_i$, $R_C$ are resp. the coordinates of atoms and centers. Error on atoms coordinates give a bias on the estimate of the Radius of Gyration:
\begin{equation}
\mathbb{B}(R_g^2)= \sigma^2\frac{3(N-1)}{N} = \bar{R_g}^2\frac{(N-1)}{N}\frac{3}{\text{SNR}^2},
\end{equation}
where we also express this bias with respect to an adaptation of the signal-noise-ratio introduced in Section~\ref{sec:correction}: $\text{SNR}^2 = \frac{R_g^2}{\sigma^2}$.

The radius of protein HJSJ (85 residues) is known to be around 10 $\mathring{\text{A}}$. The error on  $R_g^2$ is of $0.3$\% with an average error of positions on the atoms of $0.3 \mathring{\text{A}}$. It is  8.6\% for an error of $1.7 \mathring{\text{A}}$. The error will be greater if one considers binding sites at the periphery of the proteins rather than the whole protein. Indeed sites' size is smaller and they have less atoms.

%\textsc{\url{http://www.tsinghua.edu.cn/publish/zhpy/1706/20101226195715545730773/Scaling%20Law%20for%20the%20Radius%20of%20Gyration%20of%20Proteins%20and%20Its%20Dependence%20on%20Hydrophobicity.pdf}}
One could think about doing clustering on radii of Gyration using the K-means algorithm on shapes. The index $D$ of Section~\ref{sec:correction} is:
\begin{equation}
D= \frac{{R_1^\sigma}^2- {R_2^\sigma}^2}{\sigma} = \frac{R_1^2 -R_2^2}{\sigma} + 3 \sigma \left(\frac{N_1 -1}{N_1} - \frac{N_2-1}{N_2}\right).
\end{equation}
Clustering on radii of gyration may lead to a misleading indicator. $D$ indicates that the clustering performs better that it actually does.

\paragraph{False positive probability in protein's motif detection}

The relation between a protein's shape and function is linked to its motifs, which define the supersecondary structure. Motifs have biological properties: for example the helix–turn–helix motif \cite{Brennan1989}is responsible for the binding of DNA within several prokaryotic proteins. Automatic motif detection is another challenge in the study of protein shapes. We investigate the impact of bias on the false positive probability estimation in motif detection.

Let us consider a set $\{P_i\}_{i=1}^n$ of proteins each with $N_i$ atoms. One is interested in the motifs of $k$ atoms that can be detected in the protein's set, where $ k < N$. We define $\sigma$ that represents an allowed error zone. The number of detected motifs increases if: (i) one decreases $k$, or (ii) one increases $\sigma$, or (iii) increases $n$. Thus how many detected motifs actually come from chance, with respect to the parameters $k$, $\sigma$, $n$? The false positives probability indicates when one detects truth and when one detects noise. The usual estimate of the false positive probability is $P = \frac{\mathcal{V}_0}{\mathcal{V}_l}$. Here $\mathcal{V}_0$ is the volume of the error zone allowed. $\mathcal{V}_l$ is the total volume of the protein \cite{Pennec1998}, thus the a ball of radius the Radius of Gyration. Thus $\mathcal{V}_l$ may be biased and overestimated. The probability of false positive is underestimated.

We consider the example of \cite{Pennec1998b}. One tries to find motifs between the tryptophan repressor of Escherichia coli (PDB code 2WRP) and the CRO protein of phage 434 (PDB code 2CRO). These two proteins are known to share the helix-turn-helix motif. The radius of Gyration of 2WRP is $R_g = 20 \mathring{\text{A}}$, the total volume is: $\mathcal{V}_l = \frac{4}{3}\pi R_g^3 \simeq 33510 \mathring{\text{A}}^3$. We assume an error zone that takes the form of a diagonal covariance matrix with standard deviations $\sigma = 0.35 \mathring{\text{A}}$. We get the error zone volume: $\mathcal{V}_0 = \chi^3 \frac{4}{3}\pi \sigma^3 = 4.06 \mathring{\text{A}}^3$, where $\xi^2 = 8$  comes from a convention about how much error is allowed: the covariance of the error within the error volume shall be less than $\xi^2$, see \cite{Pennec1998b} for details. The estimation of the false positive probability is: $P = 1.2 \times 10^{-4}$. We find that $P$ is underestimated by $0.27 \%$ using the expression of the Radius of Gyration's bias.

\subsection{Brain template in Neuroimaging}

We apply the rule of thumb of Section~\ref{sec:correction} to determine when the bias needs a correction in the computation of a brain template from medical images. There are numerous technical difficulties for this application. First, $M$ and $G$ are now infinite dimensional. Then, the Lie group action is not necessarily isometric. Thus, it is clear that the results of the theorems do not apply directly. Nevertheless, this subsection still allows us to gain intuition about how this paper may impact the field of neuroimaging.

In neuroimaging, a template is an image representing a reference anatomy. Computing the template is often the first step in medical image processing. Then, the subjects' anatomical shapes may be characterized by their spatial deformations \textit{from the template}. These deformations may serve for (i) a statistical analysis of the subject shapes, or (ii) for automated segmentation by mapping the template's segmented regions into the subject spaces. In both cases, if the template is not centered among the population, i.e. if it is biased, then the analyzes and conclusions could be biased. We are interested in highlighting the variables that control the template's bias.

The framework of Large Deformation Diffeomorphic Metric Mapping (LDDMM) \cite{Younes2012} embeds the template estimation in our geometric setting. The Lie group of diffeomorphisms acts on the space of images as follows:
\begin{equation}
\rho : \text{Diff}(\Omega) \times L_2(\Omega) \rightarrow L_2(\Omega), \qquad (\phi,I) \mapsto \phi \cdot I = I \circ \phi^{-1}.
\end{equation}
The isotropy group of $I$ writes: $G_I = \{\phi \in \text{Diff}(\Omega)|  I\circ \phi^{-1} = I\}$. Its Lie algebra $\mathfrak{g}_I$ consists of the infinitesimal transformations whose vector fields are parallel to the level sets of $I$: $\mathfrak{g}_I = \{ v | \forall x \in \Omega, \nabla I(x)^T .v(x) = 0\}$. The orbit of $I$ is : $O_I = \{I' \in L_2(\Omega)| \exists \phi \in \text{Diff}(\Omega)\text{ s.t. } I'\circ \phi^{-1} = I\}$. 

The "shape space" is by definition the space of orbits. Two images that are diffeomorphic deformations of one another are in the same orbit. They correspond to the same point in the shape space. Topology of an image is defined as the image's properties that are invariant by diffeomorphisms. Consequently, the shape space is the space of the images topology, represented by the topology of their level sets. We get a stratification of the shape space when we gather the orbits by orbit type. A stratum is more singular than another, if it has higher orbit type, i.e. larger isotropy group. 

The manifold $M$ has an infinite stratification. One changes stratum every time there is a change in the topology of an image's level sets. Singular strata are connected to simpler topology. "Principal" strata are connected to more complicated topology. Indeed, the simpler the topology of the level sets is, the higher is the "symmetry" of the image. Thus the larger is its isotropy group. Note that strata with smaller isotropy group (more detailed topology) do not represent "singularities" from the point of view of a given image and do not influence the bias. In fact, such strata are at distance 0: an infinitesimal local change in intensity can create a maximum or minimum, thus complexifying the topology. 

Using the rule-of-thumb of Section~\ref{sec:correction}, the template's bias depends on its distance $d$ to the next singularity, at the scale of $\sigma$ the intersubjects variability. The template is biased in the regions where the difference in intensity between maxima and minima is of the same amplitude as the variability. The template may converge to pure noise in these regions.

\section*{Conclusion}

We introduced tools of statistics on manifolds to study the properties of template's shape estimation in Medical imaging and Computer vision. We have shown asymptotic bias by considering the shape space's geometry. The bias comes from the external curvature of the template's orbit at the scale of the noise on the data. This provides a geometric interpretation for the bias observed in \cite{Allassonniere2007,Allassonniere2015b}. We investigated the case of several templates and the performance K-mean algorithms on shapes: clusters are less well separated because of each centroid's bias. The variables controlling the bias are: (i) the distance in shape space from the template to a singular shape and (ii) the noise's scale. This gives a rule-of-thumb for determining when the bias is important and needs correction. We proposed two procedures for correcting the bias: an iterative bootstrap and a nested bootstrap. These procedures can be applied to any type of shape data: landmarks, curves, images, etc. They also provide a way to compute the external curvature of an orbit. 

Our results are exemplified on simulated and real data. Many studies use the template's shape estimation algorithm in Molecular Biology, Medical Imaging or Computer vision. Their estimations are necessarily biased. But these studies often belong to a regime where the bias is not important (less than $0.1\%$). For example, the bias is important in landmark shapes analyses when the landmarks' noise is comparable to the template shape's size. Studies are rarely in this regime. We have considered shapes belonging to infinite dimensional shape spaces. Our results do not apply to the infinite dimensional case. We have used them to gain intuition about it. The bias might be more important in infinite dimensions and needs a correction as we have suggested.

\bibliographystyle{siamplain}
\makeatletter
\renewcommand\@biblabel[1]{#1. }
\makeatother

%\bibliography{../Miolane-bibliography}

\appendix
\addcontentsline{toc}{subsection}{Appendix}
%\addtocontents{toc}{\protect\contentsline{subsection}{Appendix:}{}}

\section{Notation} 

We summarize in this appendix the main elements of the proofs of theorems \ref{th:pdf} and \ref{th:bias}.

\begin{figure}[!htbp]
  \centering
     \def\svgwidth{0.6\textwidth}
\input{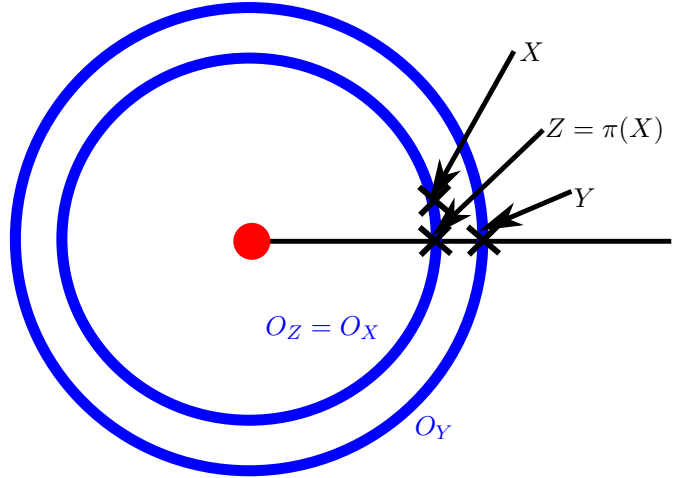}
  \caption{Summary of the notations used in the proofs.}
   \label{fig:notations}
\end{figure} 

\paragraph{\textbf{Points (independent of coordinates systems)}} Points are denoted with upper cases. $Y$ is the template shape $Y$ and $\hat Y$ is its estimate. Here $X$ is a point in $M$. We consider that $X$ belongs to a principal orbit. This will have no impact on the integration because the set of principal orbits is dense in $M$. We write $Z=\pi(X)$ the projection of $X$ in the shape space. Figure~\ref{fig:notations} shows the elements $Y$, $X$, $O_Z$, $\pi(X)$. We denote $m$ the dimension of $M$, $p$ the dimension of the principal orbits and $q$ the dimension of the quotient space.

\paragraph{\textbf{Normal coordinate systems}} We often use Normal Coordinate Systems (NCS) to express the coordinates of tangent vectors. We refer to \cite{Postnikov2001} for theoretical developments about the NCS and to \cite{Brewin2009} for Taylor expansions of differential geometric tensors in a NCS. 

For example, we may consider a NCS centered at the point $Y$, with respect to the Riemannian metric of $M$. This NCS is valid on an open neighborhood of $Y$, that is start-shaped domain around $Y$. Moreover, we assume that $M$ is geodesically complete : thus, this domain is equal to the whole manifold $M$, with the exception of the cut locus which is of null measure \cite{Postnikov2001}. The Riemannian logarithm of a point $X$ in the NCS at $Y$ is denoted: $\overrightarrow{YX}$.

\section{Preliminaries}

\subsection{Truncated Gaussian moments in a Euclidean space\label{sec:eucl}}

We give preliminary computations of Gaussian moments in a $m$-dimensional vector space $\mathbb{R}^m$, using the curved notation $\mathcal{M}$. We refer to the \textit{(unnormalized) moment of order $k$ of the $m$-dimensional Gaussian of covariance $\sigma^2 A$} as:
\begin{align*}
\mathcal{M}^{i_1...i_k}(\sigma^2 A) & = \int_{\mathbb{R}^m}X^{i_1}...X^{i_k}\exp \left(-\frac{X^T A^{-1} X}{2\sigma^2}\right)dX
\end{align*}

The order 0, 2 and 4 are:
\begin{equation}
\boxed{\begin{aligned}
\mathcal{M}^0(\sigma^2.A) & = \sigma^{m}\sqrt{(2\pi)^m} \sqrt{\det\left(A\right)}\\
\mathcal{M}^{ab}(\sigma^2.A) & = \sigma^{m+2}\sqrt{(2\pi)^m} \sqrt{\det\left(A\right)}.A^{ab}\\
\mathcal{M}^{abcd}(\sigma^2.A) &= \sigma^{m+4}\sqrt{(2\pi)^m} \sqrt{\det\left(A\right)} .\left( {A^{ab}}{A^{cd}} + {A^{ac}}{A^{bd}}+{A^{ad}}{A^{bc}} \right)\\
\mathcal{M}^{i_1...i_k}(\sigma^2.A) &= \Theta\left(\sigma^{m+k}\right)^{i_1...i_k} \quad \text{if $k$ even}\\
& =0 \quad \text{if $k$ odd}
\end{aligned}} 
\end{equation}
where $\Theta$ denotes the proportionality to $\sigma^{m+k}$.

The \textit{truncated (unnormalized) moment at radius $r$ of the $m$-dimensional Gaussian of covariance $\sigma^2 A$} are defined as:
\begin{align*}
\mathcal{M}_r^{i_1...i_k}(\sigma^2 A) & = \int_{B_r}X^{i_1}...X^{i_k}\exp \left(-\frac{X^T A^{-1} X}{2\sigma^2}\right)dX
\end{align*}
where the integration domain is now the $m$-dimensional ball $B_r$.

They write,  with respect to the total moments:
\begin{equation}
\boxed{\mathcal{M}_r^{i_1...i_k}(\sigma^2 A) = \mathcal{M}^{i_1...i_k}(\sigma^2 A) + \epsilon(\sigma)}
\end{equation}
where $\sigma \rightarrow \epsilon(\sigma)$ is a function that decreases exponentially for $\sigma \rightarrow 0$.

\subsection{Isotropic Gaussian Moments on a Riemannian manifolds \label{sec:riem}}

We turn to computations of Gaussian moments in a $m$-dimensional Riemannian manifold $M$, using the notation $\mathfrak{M}$. We refer to the \textit{(unnormalized) moment of order $k$ of the $m$-dimensional isotropic Gaussian of covariance $\sigma^2 \mathbb{I}$} as:
\begin{align*}
{\mathfrak{M}}^{i_1...i_k}\left(\sigma^2 \mathbb{I}\right) = \int_{M} \overrightarrow{YX}^{i_1}...\overrightarrow{YX}^{i_k} \exp \left(-\frac{d_M^2(X,Y)}{2\sigma^2}\right)dM(X)
\end{align*}
and to the \textit{truncated (unnormalized) moment at radius $r$ of the $m$-dimensional isotropic Gaussian of covariance $\sigma^2 \mathbb{I}$} as:
\begin{align*}
{\mathfrak{M}}^{i_1...i_k}_r\left(\sigma^2 \mathbb{I}\right)& = \int_{B_r}\overrightarrow{YX}^{i_1}...\overrightarrow{YX}^{i_k}\exp \left(-\frac{d_M^2(X,Y)}{2\sigma^2}\right)d\overrightarrow{YX}
\end{align*}
where the integration domain is now the $m$-dimensional geodesic ball $B_r$ of radius $r$ and centered at $Y$.

First, we consider the truncated moments:
\begin{equation}
\boxed{{\mathfrak{M}}^{i_1...i_k}_r\left(\sigma^2 \mathbb{I}\right) = \sigma^{k+m}\mathcal{M}^{i_1...i_k}(\mathbb{I})+ \frac{\sigma^{k+m+2}}{6} \sum_{ab} R_{ab}(Y)\mathcal{M}^{abi_1...i_k}(\mathbb{I}) + \mathcal{O}(\sigma^{m+k+4}) + \epsilon(\sigma)}
\end{equation}

They write, with respect to the non-truncated moments:
\begin{align*}
{\mathfrak{M}}^{i_1...i_k}\left(\sigma^2 \mathbb{I}\right) = {\mathfrak{M}}^{i_1...i_k}_r\left(\sigma^2 \mathbb{I}\right) + \int_{\mathcal{C}_{B_r}} X^{i_1}...X^{i_k} \exp \left(-\frac{d_M^2(Y,X)}{2\sigma^2} \right)  dM(X)
\end{align*}
The second term of the sum is negligible when $\sigma \rightarrow 0$.
\begin{equation}
\boxed{{\mathfrak{M}}^{i_1...i_k}\left(\sigma^2 \mathbb{I}\right)= \sigma^{k+m}\mathcal{M}^{i_1...i_k}(\mathbb{I})+ \frac{\sigma^{k+m+2}}{6} \sum_{ab} R_{ab}(Y)\mathcal{M}^{abi_1...i_k}(\mathbb{I}) + \mathcal{O}(\sigma^{m+k+4})+ \epsilon(\sigma).}
\end{equation}
where $\epsilon(\sigma)$ decreases exponentially when $\sigma \rightarrow 0$.

\section{Proof of Theorem~\ref{th:pdf}: Induced probability density on shapes\label{sec:th1}}

\begin{figure}[!htbp]
  \centering
     \def\svgwidth{0.6\textwidth}
\input{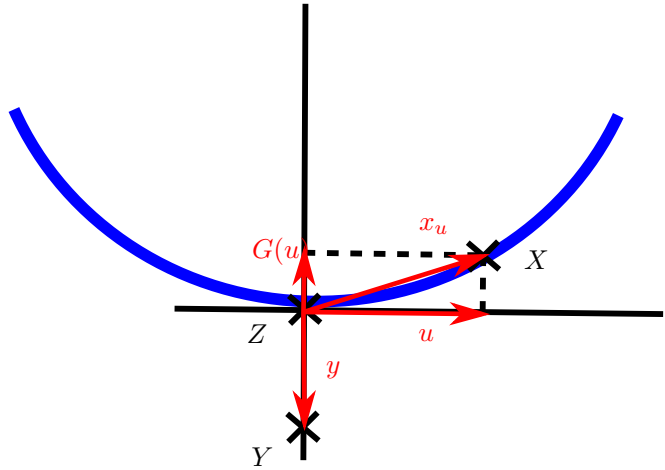}
  \caption{Summary of the notations used in the roof of Theorem~\ref{th:pdf}.}
   \label{fig:notations2}
\end{figure}

The generative model implies the following Riemannian Gaussian distribution on the objects:
\begin{equation}
f(Z)=\frac{1}{C_M(\sigma)} \exp\left(-\frac{d_M^2(X,Y)}{2\sigma^2}\right),
\end{equation}
where $C_M(\sigma)$ is the integration constant:
\begin{equation}
C_M(\sigma) =\int_{M} \exp \left(-\frac{d_M^2(X, Y)}{2\sigma^2} \right)dM(X)
\end{equation}

We compute the induced probability distribution $f$ on shapes by integrating the distribution on the orbit of $X$ out of $f(Z)$:
\begin{align*}
f(Z) & = \frac{1}{C_M(\sigma)}\int_{X \in O_Z} \exp \left(-\frac{d_M^2(Y,X)}{2\sigma^2}\right)dO_Z(X).
\end{align*}
where $Z = \pi(X)$ is fixed in the following computations. We give a Taylor expansion of $f(Z)$ for $\sigma \rightarrow 0$ to the order $\sigma^2$.

We start by dividing the integral:
\begin{align*}
f(Z) & = \frac{1}{C_M(\sigma)}. \left(\int_{O_Z \cap B_r} \exp \left(-\frac{d_M^2(Y,X)}{2\sigma^2}\right) dO_Z(X) + \int_{O_Z \cap \mathcal{C}_{B_r}} \exp \left(-\frac{d_M^2(Y,X)}{2\sigma^2}\right)dO_Z(X)\right) 
\end{align*}
In the parenthesis, we call the first integral $I_Z(\sigma)$ and the second term is an $\epsilon_Z(\sigma)$:
\begin{align*}
f(Z) & = {C_M(\sigma)}^{-1}.\left( I_Z(\sigma)+ \epsilon_Z(\sigma) \right)
\end{align*}

We compute the Taylor expansions of $C_M(\sigma)^{-1}$ and $I_Z(\sigma)$ for $\sigma \rightarrow 0$ and show that $\epsilon_Z(\sigma) = o\left(\exp \left(-\frac{r^2}{2\sigma^2}\right)\right)$

\subsection{Taylor expansion of $C_M(\sigma)^{-1}$}

The integration constant $C_M(\sigma)$ is the moment of order $0$ of the $m$-dimensional isotropic Gaussian of variance $\sigma^2 \mathbb{I}$ in the $m$-dimensional manifold $M$.
\begin{align*}
{C_M(\sigma)}^{-1} &= (\sqrt{2\pi}\sigma)^{-m}.\left( 1 - \frac{\sigma^{2} }{6}R(Y) + O(\sigma^{3}) + \epsilon(\sigma)\right)
\end{align*}

\subsection{Taylor expansion of $I_Z(\sigma)$}

Now we compute the integral $I_Z(\sigma)$ involved in the formula of $f(Z)$:
\begin{align*}
I_Z(\sigma) & = \int_{O_Z \cap B_r} \exp \left(-\frac{d_M^2(Y,X)}{2\sigma^2}\right)dO(X)
\end{align*}
We denote: $B_{r_Z}= B_{r_Z}$. We first perform the computations for any $\sigma$. We recall that $r$ is fixed, and small enough in a sense made precise later.

\subsubsection{Computing $d_M^2(Y,X)$} The first component of $I_Z(\sigma)$ is $d_M^2(Y,X)$. We express the Taylor expansion of $d_M^2$ for $X \in O_Z$ close to $Z$. 

First, we parameterize the point $X$. The points $X$, $Z=\pi(X)$ and the orbit $O_Z \subset M$ are illustrated on Figure~\ref{fig:notations}: the orbit $O=O_Z$ is the blue circle in $M = \mathbb{R}^2$ going through $X$ and $Z$. The orbit $O_Z$ can be seen in the tangent space $T_{Z}M$ through the Logarithm map at $Z$. For $r$ small enough, i.e. for $X$ close enough to $Z$, we can locally represent $O_Z$ in $T_{Z} M$ as the graph of a smooth function $G$ from $T_{Z} O$ to $N_{Z}O$, using the vector $u \in T_{Z}O_Z$ around $u=0$:
\begin{align*}
I: T_{Z}O_Z & \mapsto T_ZM=T_{Z}O_Z \oplus N_{Z} O_Z\\
u & \mapsto x_u = (u, G(u))
\end{align*}
The local graph $u \rightarrow G(u)$ is illustrated on Figure~\ref{fig:notations} and has the following Taylor expansion around $u = 0$ in the NCS at $Z$: 
\begin{align*}
G(u)^a = \frac{1}{2}h^a_{bc}(Z)u^bu^c + G_3(Z)^a_{bcd}u^bu^cu^d + G_4(Z)^a_{bcde}u^bu^cu^d u^e+ O(||u||^5)
\end{align*}
The 0-th and 1-th order derivatives of $u \rightarrow G(u)$ are zero because the graph goes through $Z$ and is tangent at $T_{Z}O_Z$. The second order derivative is by definition the second fundamental form $h(Z)$ introduced in Section~\ref{sec:quant}: $h(Z)$ represents the best quadratic approximation of the graph $G$. The third and fourth orders $G_3(Z)$ and $G_4(Z)$ are further refinements on the shape of the graph $G$ around $Z$.

Second, we compute the Taylor expansion of $d_M^2(Y,X)$ with respect to $u$:
\begin{equation*}
\boxed{d_M^2(Y,X) = d_M^2(Y,Z) + u^aL_a(Z)+ u^au^bM_{ab}(Z)+u^au^bu^cP_{abc}(Z)+u^au^bu^cu^dS_{abcd}(Z)+ \mathcal{O}(||u||^5)}
\end{equation*}
where:
\begin{equation}
\boxed{
\begin{aligned}
L_a(Z) & =0 \\
M_{ab}(Z) & = -y^ch_{ab}(Z)^c-\frac{1}{3}R_{acbd}(Z)y^cy^d
 +\frac{1}{12}\nabla_dR_{aebc}(Z)y^dy^ey^c \\
& - \frac{1}{180}(44R_{geaf}(Z)R_{gcbd}(Z)+3\nabla_{ef}R_{acbd}(Z))y^ey^fy^cy^d \\
& - \frac{1}{54}R_{hfag}(Z)\nabla_cR_{hdbe}(Z)y^fy^gy^cy^dy^e \\
\end{aligned}
}
\end{equation}
and $P_{abc}(Z)$ and $S_{abcd}(Z)$ are tensors mixing  the derivatives of the graph $G$ and the derivatives of the Riemannian curvature $R$.

\subsubsection{Computing $dO_Z(X)$} The second component of $I_Z(\sigma)$ is the measure of the orbit $dO_Z(X)$. We seek the Taylor expansion of the measure:
\begin{equation}
\boxed{dO_Z(X) = \left(1+ T_c(Z) u^c - N_{cb}(Z) u^bu^c+\mathcal{O}(||u||^3)\right)du}
\end{equation}
where:
\begin{equation}
\boxed{
\begin{aligned}
T_{c}(Z) & = h^a_{ca}(Z)\\
N_{cb}(Z) & = \frac{1}{6}\text{Ric}_{cb}(Z) +\frac{1}{2} h^a_{ca}(Z) h^a_{ca}(Z)-\frac{1}{2} h^a_{cd}(Z)h^d_{ba}(Z)
\end{aligned}}
\end{equation}

\subsubsection{Gathering to compute $I_Z(\sigma)$}

We plug the expressions of $d_M^2(Y,X)$ and $dO_Z(X)$, computed in the previous subsections, in the expression of $I_Z(\sigma)$ to get:
\begin{flalign*}
I_Z(\sigma) & =\exp \left(-\frac{d_M^2(Y,Z)}{2\sigma^2} \right) \left(\sigma^pm_0(Z)+\sigma^{p+2}m_2(Z) +\Theta(\sigma^4) + \epsilon(\sigma) \right)&
\end{flalign*}
where:
\begin{equation}
\boxed{
\begin{aligned}
m_0(Z) & = \mathcal{M}(M)^0 = \sqrt{(2\pi)^p} \sqrt{\det\left((M_{ab}(Z)^{-1}\right)}\\
m_2(Z) &= N_{cb}(Z)\mathcal{M}(M^{-1}(Z))^{bc}-\frac{1}{2}S_{abcd}(Z)\mathcal{M}(M^{-1}(Z))^{abcd}
\end{aligned}
}
\end{equation}
where $M_{ab}$ and $S_{abcd}$ are given in the previous subsections.

\subsection{Upper bound on $\epsilon_Z(\sigma)$}

Ultimately, we show that:
\begin{equation}
\epsilon_Z(\sigma) = o \left(\exp\left(-\frac{r^2}{2\sigma^2}\right)\right)
\end{equation}
and $\epsilon_Z$ is exponentially decreasing when $\sigma \rightarrow 0$.

\subsection{Final result: Taylor expansion of $f(Z)$}

Replacing the terms in the expression of $f(Z)$ and get:
\begin{align*}
f(Z) &  = \frac{\exp \left(-\frac{d_M^2(Y,Z)}{2\sigma^2} \right)}{(\sqrt{2\pi}\sigma)^q}\left( F_0(Z) +\sigma^2 F_2(Z)+\Theta_Z(\sigma^{4})+  \epsilon(\sigma)\right)
\end{align*}
where:
\begin{equation}
\boxed{
\begin{aligned}
F_0(Z) & =\sqrt{\det\left((M_{ab}(Z)^{-1}\right)}\\
F_2(Z) & = - \frac{1}{6} \sqrt{\det\left((M_{ab}(Z)^{-1}\right)}R(Z)  + \frac{m_2(Z)}{(\sqrt{2\pi})^p}
\end{aligned}
}
\end{equation}
and we refer to the previous subsections for the formula of $M_{ab}(Z)$ and $m_2(Z)$.

\section{Proof of Theorem~\ref{th:bias}: Bias on the template shape}

We compute the bias $\text{Bias}(Y,\hat{Y})$ of $\hat Y$ as an estimator of $Y$. In the following, we take a NCS at $Y$. In particular, the vector $\overrightarrow{YZ} = \text{Log}_YZ$ has coordinates written $z$.

The expectation of the distribution $f$ of shapes in $Q$ is $\hat Y$ by definition. The point $\hat Y$, expressed in a NCS at the template $Y$, gives $\text{Bias}(Y,\hat{Y})$, a tangent vector at $T_YM$ that indicates how much one has to shoot to reach the estimator $\hat Y$:
\begin{equation}
\text{Bias}(Y,\hat{Y}) = \text{Log}_Y\hat Y = \int_{Q} \overrightarrow{YZ} f(Z) dQ(Z).
\end{equation}

\noindent First, we take a ball of small radius $r$ in $Q$ and fix $r$. We split the integral:
\begin{flalign*}
\text{Bias}(Y,\hat{Y}) &= \int_{B_r^Q} \overrightarrow{YZ} f(Z) dQ(Z) + \int_{ C_{B_r^Q}} \overrightarrow{YZ} f(Z) dQ(Z)&
\end{flalign*}

\noindent By the result of the preliminaries, adapted to $Q$, the right part is a function $\sigma \rightarrow \epsilon(\sigma)$ that is exponentially decreasing for $\sigma \rightarrow 0$.
\begin{flalign*}
\text{Bias}(Y,\hat{Y}) &= \int_{B_r^Q} \overrightarrow{YZ} f(Z) dQ(Z) + \epsilon(\sigma)&
\end{flalign*}

\subsection{Using the result of Theorem~\ref{th:pdf}}

\noindent We plug the expression of the density $f$ using Theorem~\ref{th:pdf}, writing the Taylor expansions of the $F$'s terms around $z=0$:
\begin{align*}
F_0(Z) &= F_{00}(Y)+F_{01d}(Y)z^d+\mathcal{O}(||z||^2)\\
F_2(Z) &= F_{20}(Y)+F_{21d}(Y)z^d+\mathcal{O}(||z||^2)
\end{align*}
to get:
\begin{flalign*}
\text{Bias}(Y,\hat{Y})^a & = F_{01d}(Y)\sigma^2 \delta^{ad} +\epsilon(\sigma)&\\
& = F_{01}^a(Y)\sigma^2  +\mathcal{O}(\sigma^4)+ \epsilon(\sigma)
\end{flalign*}

\subsection{Computation of $F_{01}^a(Y)$: Taylor expansion of $F_0(Z)$ in the coordinate $z$}

The term $F_{01}^a(Y)$ is the first order coefficient in the Taylor expansion of $F_0(Z)$ around $Y$ in the coordinate $z$. We compute its Taylor expansion first in $y$ and then convert it into a Taylor expansion in $z$. We have $y=-z$ and we get:
\begin{align*}
F_{01}^a(Y) & = -\frac{1}{2}H^a(Y)
\end{align*}

\subsection{Final result: Taylor expansion of $\text{Bias}(\hat Y, Y)$}

Replacing $F_{01}^a(Y)$ by its value computed above:
\begin{align*}
\text{Bias}(Y,\hat{Y})^a & = -\frac{1}{2}.H^a(Y)\sigma^2  +\mathcal{O}(\sigma^4)+ \epsilon(\sigma)
\end{align*}
which is the final result.

\end{document}

% --- supplement: siims_supp.tex ---

\maketitle

These are the supplementary materials for the paper "Template shape estimation: correcting an asymptotic bias". We present the detailed proofs of the theorems. The precise statements of the theorems are given again in each section of this supplementary materials.

\section{Notations} 

We denote $Y$ the template shape and $\hat Y$ its estimate. $X$ is a point in the manifold $M$. We consider that $X$ belongs to a principal orbit and we recall that the set of principal orbits is dense in $M$. We write $Z=\pi(X)$ the projection of $X$ in the shape space $Q$. We denote $m$ the dimension of $M$, $p$ the dimension of the principal orbits and $q$ the dimension of the quotient space. Figure~\ref{fig:notations} shows the elements $Y$, $X$, $O_Z=O_X$, $Z=\pi(X)$. 

\begin{figure}[!htbp]
  \centering
     \def\svgwidth{0.6\textwidth}
\input{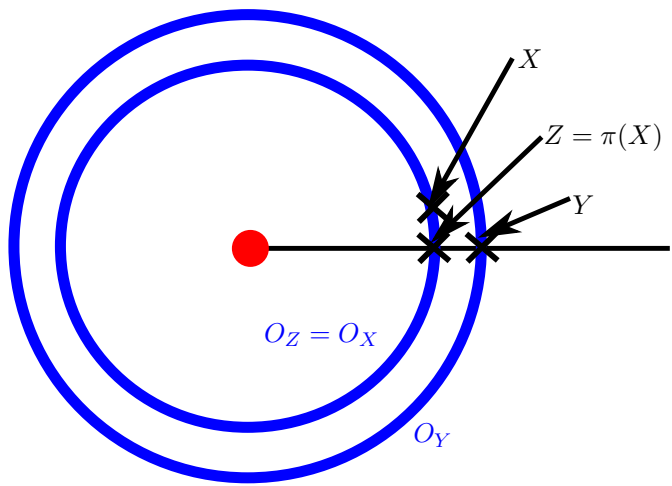}
  \caption{Summary of the notations used in the proofs. $Y$ is the template shape, $X$ is a point in $M$, belonging to a principal orbit $O_X$. $Z=\pi(X)$ is the projection of $X$ in the shape space.}
   \label{fig:notations}
\end{figure} 

\paragraph{\textbf{Normal Coordinate Systems}} We often use Normal Coordinate Systems (NCS) to express the coordinates of tangent vectors. We refer to \cite{Postnikov2001} for theoretical developments about the NCS and to \cite{Brewin2009} for Taylor expansions of differential geometric tensors in a NCS. 

For example, we may consider a NCS centered at the point $Y$, with respect to the Riemannian metric of $M$. This NCS is valid on an open neighborhood of $Y$, that is start-shaped domain around $Y$. Moreover, we assume that $M$ is geodesically complete : thus, this domain is equal to the whole manifold $M$, with the exception of the cut locus which is of null measure \cite{Postnikov2001}. The Riemannian logarithm of a point $X$ in the NCS at $Y$ is denoted: $\overrightarrow{YX}$.

\paragraph{\textbf{Asymptotic behavior for $\sigma \rightarrow 0$}} We denote: (i) $\Theta(\sigma^k)$ a function that is proportional to $\sigma^k$, (ii) $\mathcal{O}(\sigma^k)$ a function equivalent to $\sigma^k$ for $\sigma \rightarrow 0$ and (iii) $\epsilon(\sigma)$ a function that is exponentially decreasing for $\sigma \rightarrow 0$.

\section{Preliminaries}

\subsection{A first computation\label{sec:firstcomput}}

First, we show a technical result that will be used throughout the proofs. We show that the following integral on $T_YM$, the tangent space of $M$ at $Y$:
\begin{align*}
\int_{\mathcal{C}_{B_r}} d_M(Y,X)^k \exp \left(-\frac{d_M^2(Y,X)}{2\sigma^2} \right)  d\overrightarrow{YX}
\end{align*}
is a function $\sigma \rightarrow\epsilon(\sigma)$, i.e. decreases exponentially when $\sigma \rightarrow 0$. In the integral above, the notation $\mathcal{C}_{B_r}$ denotes the complement in $M$ of the geodesic ball $B_r$ of center $Y$ and of radius $r$.

We split the coordinates in $T_YM$ into $(\rho, u)$ (polar coordinates):
\begin{align*}
|| \int_{\mathcal{C}_{B_r}} d_M(Y,X)^k \exp \left(-\frac{d_M^2(Y,X)}{2\sigma^2} \right)  dX ||  & \leq K \int_{u\in S^m}\int_{r}^{\rho(u)} \rho^k \exp \left(-\frac{\rho^2}{2\sigma^2} \right) \rho^{m-1} d\rho dS^m\\
& \leq K \int_{u\in S^m}\int_{r}^{\rho(u)} \rho^{m+k-1} \exp \left(-\frac{\rho^2}{2\sigma^2} \right)  d\rho dS^m
\end{align*}
where $\rho(u)$ is the distance to the cutlocus in the direction $u$.

\noindent The positive integral is upper bounded by the same integral defined on the larger domain:
\begin{align*}
||\int_{\mathcal{C}_{B_r}} d_M(Y,X)^k \exp \left(-\frac{d_M^2(Y,X)}{2\sigma^2} \right)  dX ||  & \leq K \int_{u\in S^m}\int_{r}^{+\infty} \rho^{m+k-1} \exp \left(-\frac{\rho^2}{2\sigma^2} \right) d\rho dS^m
\end{align*}

\noindent Integrating the volume of the unit hypersphere:
\begin{align*}
||\int_{\mathcal{C}_{B_r}} d_M(Y,X)^k \exp \left(-\frac{d_M^2(Y,X)}{2\sigma^2} \right)  dX||  & \leq K \int_{r}^{+\infty} \rho^{m+k-1} \exp \left(-\frac{\rho^2}{2\sigma^2} \right) d\rho \\
\end{align*}
where the right-hand-side is dominated by $\exp(-\frac{r^2}{2\sigma^2})$ by the dominated convergence theorem. Therefore:
\begin{equation}
\int_{\mathcal{C}_{B_r}} d_M(Y,X)^k \exp \left(-\frac{d_M^2(Y,X)}{2\sigma^2} \right)  dX= \epsilon(\sigma)
\end{equation}
i.e. decreases exponentially when $\sigma \rightarrow 0$.

\subsection{Truncated Gaussian moments in a Euclidean space\label{sec:eucl}}

We give preliminary computations of Gaussian moments in a $m$-dimensional vector space $\mathbb{R}^m$, using the curved notation $\mathcal{M}$. We refer to the \textit{(unnormalized) moment of order $k$ of the $m$-dimensional Gaussian of covariance $\sigma^2 A$} as:
\begin{align*}
\mathcal{M}^{i_1...i_k}(\sigma^2 A) & = \int_{\mathbb{R}^m}X^{i_1}...X^{i_k}\exp \left(-\frac{X^T A^{-1} X}{2\sigma^2}\right)dX
\end{align*}
and to the \textit{truncated (unnormalized) moment at radius $r$ of the $m$-dimensional Gaussian of covariance $\sigma^2 A$} as:
\begin{align*}
\mathcal{M}_r^{i_1...i_k}(\sigma^2 A) & = \int_{B_r}X^{i_1}...X^{i_k}\exp \left(-\frac{X^T A^{-1} X}{2\sigma^2}\right)dX
\end{align*}
where the integration domain is now the $m$-dimensional ball $B_r$.

\subsubsection{First, we recall the expressions of some unnormalized Gaussian moments.} The order 0, 2 and 4 are:
\begin{equation}
\boxed{\begin{aligned}
\mathcal{M}^0(\sigma^2.A) & = \sigma^{m}\sqrt{(2\pi)^m} \sqrt{\det\left(A\right)}\\
\mathcal{M}^{ab}(\sigma^2.A) & = \sigma^{m+2}\sqrt{(2\pi)^m} \sqrt{\det\left(A\right)}.A^{ab}\\
\mathcal{M}^{abcd}(\sigma^2.A) &= \sigma^{m+4}\sqrt{(2\pi)^m} \sqrt{\det\left(A\right)} .\left( {A^{ab}}{A^{cd}} + {A^{ac}}{A^{bd}}+{A^{ad}}{A^{bc}} \right)\\
\mathcal{M}^{i_1...i_k}(\sigma^2.A) &= \Theta\left(\sigma^{m+k}\right)^{i_1...i_k} \quad \text{if $k$ even}\\
& =0 \quad \text{if $k$ odd}
\end{aligned}} 
\end{equation}
where we recall that $\Theta(\sigma^{m+k})$ denotes the proportionality to $\sigma^{m+k}$.

\subsubsection{Second, we turn to the (unnormalized) truncated moments.} They write,  with respect to the total moments:
\begin{align*}
\mathcal{M}_r^{i_1...i_k}(\sigma^2 A) & = \mathcal{M}^{i_1...i_k}(\sigma^2 A) - \int_{\mathcal{C}_{B_r}}X^{i_1}...X^{i_k}\exp \left(-\frac{X^TA^{-1} X}{2\sigma^2}\right)dX
\end{align*}
The second term of the sum is negligible when $\sigma \rightarrow 0$. To see this, we put an upper bound on its norm through triangular inequality:
\begin{align*}
|| \int_{\mathcal{C}_{B_r}}X^{i_1}...X^{i_k}\exp \left(-\frac{X^TA^{-1} X}{2\sigma^2}\right)dX|| \leq \int_{\mathcal{C}_{B_r}}||X^{i_1}||...||X^{i_k}||\exp \left(-\frac{X^TA^{-1} X}{2\sigma^2}\right)dX,
\end{align*}
using triangular inequality on their coordinates:
\begin{align*}
|| \int_{\mathcal{C}_{B_r}}X^{i_1}...X^{i_k}\exp \left(-\frac{X^TA^{-1} X}{2\sigma^2}\right)dX|| \leq \int_{\mathcal{C}_{B_r}}||X||^k\exp \left(-\frac{X^TA^{-1} X}{2\sigma^2}\right)dX,
\end{align*}
and performing the change of variables $X' = A^{-1/2}X$, i.e. taking the matrix square root of the positive definite matrix $A^{-1}$:
\begin{align*}
|| \int_{\mathcal{C}_{B_r}}X^{i_1}...X^{i_k}\exp \left(-\frac{X^TA^{-1} X}{2\sigma^2}\right)dX|| &\leq \int_{C_{B_{|||A^{1/2}|||r}}}|||A|||^{k/2}||X'||^k\exp \left(-\frac{X'^T X'}{2\sigma^2}\right)\det\left(A^{1/2}\right)dX,\\
& =  |||A|||^{k/2}\det\left(A^{1/2}\right)\int_{\mathcal{C}_{B_{|||A^{1/2}|||r}}}||X'||^k\exp \left(-\frac{||X'||^2}{2\sigma^2}\right)dX.
\end{align*}
By the computations in Subsection~\ref{sec:firstcomput}, we have:
\begin{align*}
|| \int_{\mathcal{C}_{B_r}}X^{i_1}...X^{i_k}\exp \left(-\frac{X^TA^{-1} X}{2\sigma^2}\right)dX|| = \epsilon(\sigma)
\end{align*}
Therefore, the truncated moments are equivalent to the (full) moments for $\sigma \rightarrow 0$:
\begin{equation}
\boxed{\mathcal{M}_r^{i_1...i_k}(\sigma^2 A) = \mathcal{M}^{i_1...i_k}(\sigma^2 A) + \epsilon(\sigma)}
\end{equation}

\subsection{Isotropic Gaussian Moments on a Riemannian manifolds \label{sec:riem}}

We turn to computations of Gaussian moments in a $m$-dimensinoal Riemannian manifold $M$, using the notation $\mathfrak{M}$. We refer to the \textit{(unnormalized) moment of order $k$ of the $m$-dimensional isotropic Gaussian of covariance $\sigma^2 \mathbb{I}$} as:
\begin{align*}
{\mathfrak{M}}^{i_1...i_k}\left(\sigma^2 \mathbb{I}\right) = \int_{M} \overrightarrow{YX}^{i_1}...\overrightarrow{YX}^{i_k} \exp \left(-\frac{d_M^2(X,Y)}{2\sigma^2}\right)dM(X)
\end{align*}
and to the \textit{truncated (unnormalized) moment at radius $r$ of the $m$-dimensional isotropic Gaussian of covariance $\sigma^2 \mathbb{I}$} as:
\begin{align*}
{\mathfrak{M}}^{i_1...i_k}_r\left(\sigma^2 \mathbb{I}\right)& = \int_{B_r}\overrightarrow{YX}^{i_1}...\overrightarrow{YX}^{i_k}\exp \left(-\frac{d_M^2(X,Y)}{2\sigma^2}\right)d\overrightarrow{YX}
\end{align*}
where the integration domain is now the $m$-dimensional geodesic ball $B_r$ of radius $r$ and centered at $Y$.

\subsubsection{First, we consider the truncated moments:}
\begin{align*}
{\mathfrak{M}}^{i_1...i_k}_r\left(\sigma^2 \mathbb{I}\right) = \int_{B_r} \overrightarrow{YX}^{i_1}...\overrightarrow{YX}^{i_k} \exp \left(-\frac{d_M^2(X,Y)}{2\sigma^2}\right)dM(X)
\end{align*}
In a NCS at $Y$ : $d_M^2(X,Y)=\overrightarrow{YX}^T \overrightarrow{YX}$:
\begin{align*}
{\mathfrak{M}}^{i_1...i_k}_r\left(\sigma^2 \mathbb{I}\right) = \int_{B_r} \overrightarrow{YX}^{i_1}...\overrightarrow{YX}^{i_k} \exp \left(-\frac{\overrightarrow{YX}^T \overrightarrow{YX}}{2\sigma^2}\right)dM(X)
\end{align*}

On the small ball $B_r$, $dM(X) = d\overrightarrow{YX} + \frac{1}{6}R_{ab}(Y)\overrightarrow{YX}^a\overrightarrow{YX}^bd\overrightarrow{YX}+\mathcal{O}(||\overrightarrow{YX}||^3)d\overrightarrow{YX}$:
\begin{align*}
{\mathfrak{M}}^{i_1...i_k}_r\left(\sigma^2 \mathbb{I}\right) = \int_{B_r} \overrightarrow{YX}^{i_1}...\overrightarrow{YX}^{i_k} &\exp \left(-\frac{\overrightarrow{YX}^T \overrightarrow{YX}}{2\sigma^2}\right)d\overrightarrow{YX} \\
&+ \frac{1}{6} \sum_{ab} R_{ab}(Y) \int_{B_r} \overrightarrow{YX}^a\overrightarrow{YX}^b.\overrightarrow{YX}^{i_1}...\overrightarrow{YX}^{i_k} \exp \left(-\frac{\overrightarrow{YX}^T \overrightarrow{YX}}{2\sigma^2}\right)d\overrightarrow{YX}\\
& + \int_{B_r} O(||\overrightarrow{YX}||^3) \overrightarrow{YX}^{i_1}...\overrightarrow{YX}^{i_k}\exp \left(-\frac{\overrightarrow{YX}^T \overrightarrow{YX}}{2\sigma^2}\right)d\overrightarrow{YX}
\end{align*}

Now we recognize the (un-normalized) moments of a truncated isotropic Gaussian a $B_r$ in the vector space $T_Y M \simeq \mathbb{R}^m$. We replace them by the expressions given in the previous subsection:
\begin{align*}
{\mathfrak{M}}^{i_1...i_k}_r\left(\sigma^2 \mathbb{I}\right) & =\left(\mathcal{M}^{i_1...i_k}(\sigma^2\mathbb{I})+\epsilon(\sigma)\right) + \frac{1}{6} \sum_{ab} R_{ab}(Y) \left(\mathcal{M}^{abi_1...i_k}(\sigma^2\mathbb{I})+\epsilon(\sigma)\right)  + \mathcal{O}(\sigma^{m+k+4})
\end{align*}
The sum of two functions that decrease exponentially for $\sigma \rightarrow 0$ is a function that decreases exponentially for $\sigma \rightarrow 0$. So that:
\begin{align*}
{\mathfrak{M}}^{i_1...i_k}_r\left(\sigma^2 \mathbb{I}\right) & = \mathcal{M}^{i_1...i_k}(\sigma^2\mathbb{I})+ \frac{1}{6} \sum_{ab} R_{ab}(Y)\mathcal{M}^{abi_1...i_k}(\sigma^2\mathbb{I}) + \mathcal{O}(\sigma^{m+k+4}) + \epsilon(\sigma)
\end{align*}
We take the $\sigma$'s out:
\begin{equation}
\boxed{{\mathfrak{M}}^{i_1...i_k}_r\left(\sigma^2 \mathbb{I}\right) = \sigma^{k+m}\mathcal{M}^{i_1...i_k}(\mathbb{I})+ \frac{\sigma^{k+m+2}}{6} \sum_{ab} R_{ab}(Y)\mathcal{M}^{abi_1...i_k}(\mathbb{I}) + \mathcal{O}(\sigma^{m+k+4}) + \epsilon(\sigma)}
\end{equation}

\subsubsection{Second, we consider the (full) moments.} They write, with respect to the non-truncated moments:
\begin{align*}
{\mathfrak{M}}^{i_1...i_k}\left(\sigma^2 \mathbb{I}\right) = {\mathfrak{M}}^{i_1...i_k}_r\left(\sigma^2 \mathbb{I}\right) + \int_{\mathcal{C}_{B_r}} X^{i_1}...X^{i_k} \exp \left(-\frac{d_M^2(Y,X)}{2\sigma^2} \right)  dM(X)
\end{align*}
The second term of the sum is negligible when $\sigma \rightarrow 0$. To see this, we put an upper bound on its norm through triangular inequality:
\begin{align*}
||\int_{\mathcal{C}_{B_r}} X^{i_1}...X^{i_k} \exp \left(-\frac{d_M^2(Y,X)}{2\sigma^2} \right)  dM(X)||
& \leq \int_{\mathcal{C}_{B_r}} ||X^{i_1}||...||X^{i_k}|| \exp \left(-\frac{d_M^2(Y,X)}{2\sigma^2} \right)dM(X)
\end{align*}
then using triangular inequality on the coordinates:
\begin{align*}
||\int_{\mathcal{C}_{B_r}} X^{i_1}...X^{i_k} \exp \left(-\frac{d_M^2(Y,X)}{2\sigma^2} \right)  dM(X)|| & \leq \int_{\mathcal{C}_{B_r}} d_M(Y,X)^k \exp \left(-\frac{d_M^2(Y,X)}{2\sigma^2} \right)  dM(X).
\end{align*}
We assume that the Ricci curvature of $M$ is bounded from below. Therefore, the measure $dM$ has an upper bound with respect to the Lebesgue measure on the tangent space, which we write $K$: $dM(X) \leq K dX$:
\begin{align*}
||\int_{\mathcal{C}_{B_r}} X^{i_1}...X^{i_k} \exp \left(-\frac{d_M^2(Y,X)}{2\sigma^2} \right)  dM(X) || & \leq K \int_{\mathcal{C}_{B_r}} d_M(Y,X)^k \exp \left(-\frac{d_M^2(Y,X)}{2\sigma^2} \right)  dX
\end{align*}
By the computations in Subsection~\ref{sec:firstcomput}, this inequality together with $\sigma \rightarrow 0$ shows that:
\begin{equation}
\int_{\mathcal{C}_{B_r}} X^{i_1}...X^{i_k} \exp \left(-\frac{d_M^2(Y,X)}{2\sigma^2} \right)  dM(X) = \epsilon(\sigma)
\end{equation}
i.e. is a function that decreases exponentially when $\sigma \rightarrow 0$.

Therefore, the (unnormalized) moment of order $k$ of the $m$-dimensional isotropic Gaussian of covariance $\sigma^2 \mathbb{I}$ writes:
\begin{equation}
\boxed{{\mathfrak{M}}^{i_1...i_k}\left(\sigma^2 \mathbb{I}\right)= \sigma^{k+m}\mathcal{M}^{i_1...i_k}(\mathbb{I})+ \frac{\sigma^{k+m+2}}{6} \sum_{ab} R_{ab}(Y)\mathcal{M}^{abi_1...i_k}(\mathbb{I}) + \mathcal{O}(\sigma^{m+k+4})+ \epsilon(\sigma).}
\end{equation}

\section{Proof of Theorem~\ref{th:pdf}: Induced probability density on shapes}

In this section we prove the Theorem 1 of our paper. We recall Theorem 1 below.

\begin{theorem}\label{th:pdf}
\textcolor{red}{The data $X_i$'s are generated in the finite-dimensional Riemannian manifold $M$ following the model: $X_i = \text{Exp}(g_i \cdot Y,\epsilon_i), i=1...n$, described in the paper. In this model: (i) the action of the finite dimensional Lie group $G$ on $M$, denoted $\cdot$, is isometric, (ii) the parameter $Y$ is the template shape in the shape space $Q$, (iii) $\epsilon_i$ is the noise and follows a (generalization to manifolds of a) Gaussian of variance $\sigma^2$, see Section 1 of the paper.}

\textcolor{red}{Then, the probability distribution function $f$ on the shapes of the $X_i$'s, $i=1...n$, in the asymptotic regime on an infinite number of data $n \rightarrow + \infty$, has the following Taylor expansion around the noise level $\sigma = 0$}:
\begin{align*}\label{eq:f}
f(Z) &  = \frac{1}{(\sqrt{2\pi}\sigma)^q}\exp \left(-\frac{d_M^2(Y,Z)}{2\sigma^2} \right)\left( F_0(Z) +\sigma^2 F_2(Z)+\mathcal{O}(\sigma^{4})+  \epsilon(\sigma)\right)
\end{align*}
where (i) $Z$ denotes a point in the shape space $Q$, (ii) $F_0$ and $F_2$ are functions of $Z$ involving the derivatives of the Riemannian tensor at $Z$ and the derivatives of the graph $G$ describing the orbit $O_Z$ at $Z$, and (iii) $\epsilon$ is a function of $\sigma$ that decreases exponentially for $\sigma \rightarrow 0$.
\end{theorem}

We consider $Z \in B_r$ i.e. in the geodesic ball of center $Y$ and radius $Y$. We use a NCS at $Z$. The notations are summarized on Figure~\ref{fig:notations2}. The reader can refer to this Figure along the proof.

\begin{figure}[!htbp]
  \centering
     \def\svgwidth{0.6\textwidth}
\input{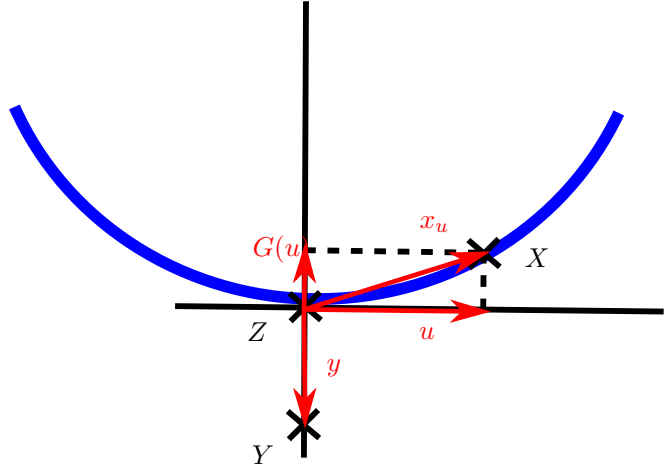}
  \caption{Summary of the notations used in the proof of Theorem 1. The vectors of the tangent space $T_ZM$ are in red.}
   \label{fig:notations2}
\end{figure} 

The generative model implies the following Riemannian Gaussian distribution on the objects:
\begin{equation}
f(Z)=\frac{1}{C_M(\sigma)} \exp\left(-\frac{d_M^2(X,Y)}{2\sigma^2}\right),
\end{equation}
where $C_M(\sigma)$ is the integration constant:
\begin{equation}
C_M(\sigma) =\int_{M} \exp \left(-\frac{d_M^2(X, Y)}{2\sigma^2} \right)dM(X)
\end{equation}

We compute the induced probability distribution $f$ on shapes by integrating the distribution on the orbit of $X$ out of $f(Z)$:
\begin{align*}
f(Z) & = \frac{1}{C_M(\sigma)}\int_{O_Z} \exp \left(-\frac{d_M^2(Y,X)}{2\sigma^2}\right)dO_Z(X).
\end{align*}

We start by dividing the integral:
\begin{align*}
f(Z) & = \frac{1}{C_M(\sigma)}. \left(\int_{O_Z \cap B_r} \exp \left(-\frac{d_M^2(Y,X)}{2\sigma^2}\right) dO_Z(X) + \int_{O_Z \cap \mathcal{C}_{B_r}} \exp \left(-\frac{d_M^2(Y,X)}{2\sigma^2}\right)dO_Z(X)\right) 
\end{align*}
In the parenthesis, we call the first integral $I_Z(\sigma)$ and the second term is an $\eta(\sigma)$:
\begin{align*}
f(Z) & = {C_M(\sigma)}^{-1}.\left( I_Z(\sigma)+ \eta(\sigma) \right)
\end{align*}

We compute the Taylor expansions of $C_M(\sigma)^{-1}$ and $I_Z(\sigma)$ for $\sigma \rightarrow 0$ and show that $\eta(\sigma) = \epsilon(\sigma)$.

\subsection{Taylor expansion of $C_M(\sigma)^{-1}$}

The integration constant $C_M(\sigma)$ is the moment of order $0$ of the $m$-dimensional isotropic Gaussian of variance $\sigma^2 \mathbb{I}$ in the $m$-dimensional manifold $M$. From Subsection~\ref{sec:riem}:
\begin{align*}
C_M(\sigma) & = (\sqrt{2\pi}\sigma)^m +  \frac{\sigma^{m+2}}{6} \sum_{ab} R_{ab}(Y)(\sqrt{2\pi})^m\delta_{ab}  + \mathcal{O}(\sigma^{m+3}) + \epsilon(\sigma)\\
& = (\sqrt{2\pi}\sigma)^m +  \frac{\sigma^{m+2}}{6} R(Y)(\sqrt{2\pi})^m + \mathcal{O}(\sigma^{m+3}) + \epsilon(\sigma)
\end{align*}
where $\epsilon(\sigma)$ is exponentially decreasing wrt $\sigma$ and $R(Y)$ is the scalar curvature of $M$ at $Y$. Its inverse:
\begin{align*}
{C_M(\sigma)}^{-1} &= \left((\sqrt{2\pi}\sigma)^m +  \frac{(\sqrt{2\pi})^m}{6}  R(Y) \sigma^{m+2} + \mathcal{O}(\sigma^{m+3}) + \epsilon(\sigma)\right)^{-1}\\
& = (\sqrt{2\pi}\sigma)^{-m}.\left(1+ \frac{\sigma^{2} }{6}R(Y)+ O(\sigma^{3}) + \epsilon(\sigma)\right)^{-1}\\
& = (\sqrt{2\pi}\sigma)^{-m}.\left( 1 - \frac{\sigma^{2} }{6}R(Y) + O(\sigma^{3}) + \epsilon(\sigma)\right)
\end{align*}

\subsection{Taylor expansion of $I_Z(\sigma)$}

Now we compute the integral $I_Z(\sigma)$ involved in the formula of $f(Z)$:
\begin{align*}
I_Z(\sigma) & = \int_{O_Z \cap B_r} \exp \left(-\frac{d_M^2(Y,X)}{2\sigma^2}\right)dO(X)
\end{align*}
We denote: $B_{r_Z}= B_{r_Z}$. We first perform the computations for any $\sigma$. We recall that $r$ is fixed, and small enough in a sense made precise later.

\subsubsection{Computing $d_M^2(Y,X)$} The first component of $I_Z(\sigma)$ is $d_M^2(Y,X)$. We express the Taylor expansion of $d_M^2$ for $X \in O_Z$ close to $Z$. 

First, we parameterize the point $X$. The points $X$, $Z=\pi(X)$ and the orbit $O_Z \subset M$ are illustrated on Figure~\ref{fig:notations}: the orbit $O_Z = O_X$ is the blue circle in $M = \mathbb{R}^2$ going through $X$ and $Z$. The orbit $O_Z$ can be seen in the tangent space $T_{Z}M$ through the Logarithm map at $Z$. For $r$ small enough, i.e. for $X$ close enough to $Z$, we can locally represent $O_Z$ in $T_{Z} M$ as the graph of a smooth function $G$ from $T_{Z} O$ to $N_{Z}O$, using the vector $u \in T_{Z}O_Z$ around $u=0$:
\begin{align*}
I: T_{Z}O_Z & \mapsto T_ZM=T_{Z}O_Z \oplus N_{Z} O_Z\\
u & \mapsto x_u = (u, G(u))
\end{align*}
The local graph $u \rightarrow G(u)$ is illustrated on Figure~\ref{fig:notations} and has the following Taylor expansion around $u = 0$ in the NCS at $Z$: 
\begin{align*}
G(u)^a = \frac{1}{2}h^a_{bc}(Z)u^bu^c + G_3(Z)^a_{bcd}u^bu^cu^d + G_4(Z)^a_{bcde}u^bu^cu^d u^e+ O(||u||^5)
\end{align*}
The 0-th and 1-th order derivatives of $u \rightarrow G(u)$ are zero because the graph goes through $Z$ and is tangent at $T_{Z}O_Z$. The second order derivative is by definition the second fundamental form $h(Z)$ introduced in Section 2 of the paper: $h(Z)$ represents the best quadratic approximation of the graph $G$. The third and fourth orders $G_3(Z)$ and $G_4(Z)$ are further refinements on the shape of the graph $G$ around $Z$.

Second, we compute the Taylor expansion of $d_M^2(Y,X)$ with respect to $u$:
\begin{equation*}
\boxed{d_M^2(Y,X) = d_M^2(Y,Z) + u^aL_a(Z)+ u^au^bM_{ab}(Z)+u^au^bu^cP_{abc}(Z)+u^au^bu^cu^dS_{abcd}(Z)+ \mathcal{O}(||u||^5)}
\end{equation*}
and our goal is to compute the different tensors.

\noindent $Y$ and $X$ are represented by their Riemannian Logarithms at $Z$: $y=\text{Log}_ZY$, and $x_u=\text{Log}_ZX$. We also recall: $u^Ty=0$. Using these, we express the squared distance $d_M^2(Y,X)$ in the NCS at $Z$. We use the formula p.23 in \cite{Brewin2009} with the notations $\Delta x \leftarrow y-x_u$ and $x \leftarrow x_u$:
\begin{align*}
d_M^2 & = d_M^2(\text{Exp}_Z(x(u)),\text{Exp}_Zy)\\
& = \delta_{ab}(y-x_u)^a(y-x_u)^b\\
& -\frac{60}{180}R_{cadb}x_u^cx_u^d(y-x_u)^a(y-x_u)^b \\
&-\frac{15}{180}x_u^dx_u^e \nabla_aR_{dbec}(y-x_u)^a(y-x_u)^b(y-x_u)^c \\
&- \frac{3}{540}x_u^ex_u^f(44R_{gaeb}R_{gcfd}+3\nabla_{ab}R_{ecfd})(y-x_u)^a(y-x_u)^b(y-x_u)^c(y-x_u)^d\\
&+\frac{1}{54}x_u^fx_u^gR_{hafb}\nabla_cR_{hdge}(y-x_u)^a(y-x_u)^b(y-x_u)^c(y-x_u)^d(y-x_u)^e\\
& -\frac{30}{180}x_u^cx_u^dx_u^e\nabla_cR_{daeb}\\
&+\frac{1}{180}x_u^dx_u^ex_u^f(8R_{gdea}R_{gbfc}-9\nabla_{da}R_{ebfc})(y-x_u)^a(y-x_u)^b(y-x_u)^c\\
&-\frac{5}{540}x_u^ex_u^fx_u^g(8R_{haeb}\nabla_{c}R_{hfgd}+9R_{haeb}\nabla_{h}R_{fcgd}\\
&\qquad\qquad\qquad+20R_{haeb}\nabla_fR_{hcgd}-6R_{hefa}\nabla_{b}R_{hcgd})(y-x_u)^a(y-x_u)^b(y-x_u)^c(y-x_u)^d\\
& +\frac{1}{180}x_u^cx_u^dx_u^ex_u^f(8R_{gcda}R_{gefb}-9\nabla_{cd}R_{eafb})(y-x_u)^a(y-x_u)^b\\
&+\frac{1}{180}x_u^dx_u^ex_u^fx_u^g(4R_{hadb}\nabla_eR_{hfgc}+4R_{hdea}\nabla_bR_{hfgc}+4R_{hdea}\nabla_fR_{hbgc}\\
& \qquad\qquad\qquad+3\nabla_{dea}R_{fbgc})(y-x_u)^a(y-x_u)^b(y-x_u)^c\\
& +\mathcal{O}(||x_u||^5)\\
&\\
&\\
&\\
&\\
&\\
\end{align*}

\noindent We express this in orders of $x_u$:
\begin{align*}
d_M^2 & = \delta_{ab}y^ay^b\\
& -2 \delta_{ab}y^ax_u^b\\
& -\frac{1}{3}R_{cadb}x_u^cx_u^dy^ay^d\\
& +\frac{1}{12}x_u^dx_u^e \nabla_aR_{dbec}y^ay^by^c\\
&- \frac{1}{180}x_u^ex_u^f(44R_{gaeb}R_{gcfd}+3\nabla_{ab}R_{ecfd})y^ay^by^cy^d \\
& - \frac{1}{54}x_u^fx_u^g R_{hafb}\nabla_cR_{hdge}y^ay^by^cy^dy^e\\
& +\frac{2}{3}x_u^cx_u^dR_{cadb}x_u^ay^b\\
& - \frac{1}{12}x_u^dx_u^e\nabla_aR_{dbec}(x_u^ay^by^c +2y^ay^bx_u^c)\\
& - \frac{1}{180}x_u^ex_u^f(44R_{gaeb}R_{gcfd}+3\nabla_{ab}R_{ecfd})(2x_u^ay^by^cy^d+2y^ay^bx_u^cy^d)\\
& +\frac{1}{54}x_u^fx_u^gR_{hafb}\nabla_cR_{hdge}(4x_u^ay^by^cy^dy^e+y^ay^bx_u^cy^dy^e)\\
&-\frac{1}{3}x_u^cx_u^dx_u^e\nabla_c R_{daeb}y^ay^b\\
&  \frac{1}{180}x_u^dx_u^ex_u^f(8R_{gdea}R_{gbfc}-9\nabla_{da}R_{ebfc})y^ay^by^c\\
& -\frac{1}{180}x_u^ex_u^fx_u^g(8R_{haeb}\nabla_{c}R_{hfgd}+9R_{haeb}\nabla_{h}R_{fcgd}+20R_{haeb}\nabla_fR_{hcgd}-6R_{hefa}\nabla_{b}R_{hcgd})y^ay^by^cy^d\\
& -\frac{1}{3}R_{cadb}x_u^cx_u^dx_u^ax_u^b\\
& - \frac{1}{12}x_u^dx_u^e\nabla_aR_{dbec}(2x_u^ax_u^by^c+y^ax_u^bx_u^c)\\
& -\frac{1}{180}x_u^ex_u^f(44R_{gaeb}R_{gcfd}+3\nabla_{ab}R_{ecfd})(2x_u^ax_u^by^cy^d+2y^ax_u^bx_u^cy^d+2y^ay^bx_u^cx_u^d+4x_u^ay^bx_u^cy^d)\\
& + \frac{1}{54}x_u^fx_u^gR_{hafb}\nabla_cR_{hdge}(4x_u^ax_u^by^cy^dy^e+4y^ax_u^bx_u^cy^dy^e)\\
& + \frac{1}{180}x_u^dx_u^fx_u^g(8R_{gdea}R_{gbfc}-9\nabla_{da}R_{ebfc})(x_u^ay^by^c+2y^ax_u^bx_u^c)\\
& - \frac{1}{180}x_u^ex_u^fx_u^g(8R_{haeb}\nabla_{c}R_{hfgd}+9R_{haeb}\nabla_{h}R_{fcgd}+20R_{haeb}\nabla_fR_{hcgd}-6R_{hefa}\nabla_{b}R_{hcgd})(x_u^ay^by^cy^d+y^ay^bx_u^cy^d)\\
& + \frac{1}{180}x_u^cx_u^dx_u^ex_u^f(8R_{gdea}R_{gbfc}-9\nabla_{da}R_{ebfc})y^ay^b\\
& + \frac{1}{180}(4R_{hadb}\nabla_eR_{hfgc}+4R_{hdea}\nabla_bR_{hfgc}+4R_{hdea}\nabla_fR_{hbgc}-3\nabla_{dea}R_{fbgc})y^ay^by^c\\
& + \mathcal{O}(||x_u||^5)\\
\end{align*}

\noindent We replace $x_u^a$ by $x_u^a = u^a + G(u)^a $ where the expression of $G(u)$ is a $\mathcal{O}(||u||^2)$, so that:
\begin{align*}
d_M^2 & = d_M^2(Y, Z) -2 \delta_{ab}y^au^b\\
& -2\delta_{ab}y^aG(u)^b -\frac{1}{3}R_{cadb}u^cu^dy^ay^d
 +\frac{1}{12}u^du^e \nabla_aR_{dbec}y^ay^by^c
- \frac{1}{180}u^eu^f(44R_{gaeb}R_{gcfd}+3\nabla_{ab}R_{ecfd})y^ay^by^cy^d \\
& - \frac{1}{54}u^fu^g R_{hafb}\nabla_cR_{hdge}y^ay^by^cy^dy^e \\
& -\frac{2}{3}R_{cadb}u^cG(u)^dy^ay^d
 +\frac{2}{12}u^dG(u)^e \nabla_aR_{dbec}y^ay^by^c
- \frac{2}{180}u^eG(u)^f(44R_{gaeb}R_{gcfd}+3\nabla_{ab}R_{ecfd})y^ay^by^cy^d \\
& - \frac{2}{54}u^fG(u)^g R_{hafb}\nabla_cR_{hdge}y^ay^by^cy^dy^e 
 +\frac{2}{3}u^cu^dR_{cadb}u^ay^b
 - \frac{1}{12}u^du^e\nabla_aR_{dbec}(u^ay^by^c +2y^ay^bu^c)\\
& - \frac{1}{180}u^eu^f(44R_{gaeb}R_{gcfd}+3\nabla_{ab}R_{ecfd})(2u^ay^by^cy^d+2y^ay^bu^cy^d)\\
& +\frac{1}{54}u^fu^gR_{hafb}\nabla_cR_{hdge}(4u^ay^by^cy^dy^e+y^ay^bu^cy^dy^e)\\
& -\frac{1}{3}u^cu^du^e\nabla_c R_{daeb}y^ay^b
+  \frac{1}{180}u^du^eu^f(8R_{gdea}R_{gbfc}-9\nabla_{da}R_{ebfc})y^ay^by^c\\
& -\frac{1}{180}u^eu^fu^g(8R_{haeb}\nabla_{c}R_{hfgd}+9R_{haeb}\nabla_{h}R_{fcgd}+20R_{haeb}\nabla_fR_{hcgd}-6R_{hefa}\nabla_{b}R_{hcgd})y^ay^by^cy^d\\
& -\frac{1}{3}R_{cadb}G(u)^cG(u)^dy^ay^d
 +\frac{1}{12}G(u)^dG(u)^e \nabla_aR_{dbec}y^ay^by^c\\
&- \frac{1}{180}G(u)^eG(u)^f(44R_{gaeb}R_{gcfd}+3\nabla_{ab}R_{ecfd})y^ay^by^cy^d \\
& - \frac{1}{54}G(u)^fG(u)^g R_{hafb}\nabla_cR_{hdge}y^ay^by^cy^dy^e
 +\frac{2}{3}u^cu^dR_{cadb}G(u)^ay^b\\
& - \frac{1}{12}u^du^e\nabla_aR_{dbec}(G(u)^ay^by^c +2y^ay^bG(u)^c)\\
& - \frac{1}{180}u^eu^f(44R_{gaeb}R_{gcfd}+3\nabla_{ab}R_{ecfd})(2G(u)^ay^by^cy^d+2y^ay^bG(u)^cy^d)\\
& +\frac{1}{54}u^fu^gR_{hafb}\nabla_cR_{hdge}(4G(u)^ay^by^cy^dy^e+y^ay^bG(u)^cy^dy^e)\\
& -\frac{1}{3}u^cu^dG(u)^e\nabla_c R_{daeb}y^ay^b
+  \frac{1}{180}u^du^eG(u)^f(8R_{gdea}R_{gbfc}-9\nabla_{da}R_{ebfc})y^ay^by^c\\
& -\frac{1}{180}u^eu^fG(u)^g(8R_{haeb}\nabla_{c}R_{hfgd}+9R_{haeb}\nabla_{h}R_{fcgd}+20R_{haeb}\nabla_fR_{hcgd}-6R_{hefa}\nabla_{b}R_{hcgd})y^ay^by^cy^d\\
& -\frac{1}{3}R_{cadb}u^cu^du^au^b\\
& - \frac{1}{12}u^du^e\nabla_aR_{dbec}(2u^au^by^c+y^au^bu^c)\\
& -\frac{1}{180}u^eu^f(44R_{gaeb}R_{gcfd}+3\nabla_{ab}R_{ecfd})(2u^au^by^cy^d+2y^au^bu^cy^d+2y^ay^bu^cu^d+4u^ay^bu^cy^d)\\
& + \frac{1}{54}u^fu^gR_{hafb}\nabla_cR_{hdge}(4u^au^by^cy^dy^e+4y^au^bu^cy^dy^e)\\
& + \frac{1}{180}u^du^fu^g(8R_{gdea}R_{gbfc}-9\nabla_{da}R_{ebfc})(u^ay^by^c+2y^au^bu^c)\\
& - \frac{1}{180}u^eu^fu^g(8R_{haeb}\nabla_{c}R_{hfgd}+9R_{haeb}\nabla_{h}R_{fcgd}+20R_{haeb}\nabla_fR_{hcgd}-6R_{hefa}\nabla_{b}R_{hcgd})(u^ay^by^cy^d+y^ay^bu^cy^d)\\
& + \frac{1}{180}u^cu^du^eu^f(8R_{gdea}R_{gbfc}-9\nabla_{da}R_{ebfc})y^ay^b\\
& + \frac{1}{180}(4R_{hadb}\nabla_eR_{hfgc}+4R_{hdea}\nabla_bR_{hfgc}+4R_{hdea}\nabla_fR_{hbgc}-3\nabla_{dea}R_{fbgc})y^ay^by^c\\
& + \mathcal{O}(||u||^5)
\end{align*}

\noindent We replace $G(u)$ by its Taylor expansion. Identifying the tensors gives:
\begin{equation}
\boxed{
\begin{aligned}
L_a(Z) & =0 \\
M_{ab}(Z) & = -y^ch_{ab}(Z)^c-\frac{1}{3}R_{acbd}(Z)y^cy^d
 +\frac{1}{12}\nabla_dR_{aebc}(Z)y^dy^ey^c \\
& - \frac{1}{180}(44R_{geaf}(Z)R_{gcbd}(Z)+3\nabla_{ef}R_{acbd}(Z))y^ey^fy^cy^d \\
& - \frac{1}{54}R_{hfag}(Z)\nabla_cR_{hdbe}(Z)y^fy^gy^cy^dy^e \\
\end{aligned}
}
\end{equation}
and $P_{abc}(Z)$ and $S_{abcd}(Z)$ are tensors mixing  the derivatives of the graph $G$ and the derivatives of the Riemannian curvature $R$ of $M$ at $Z$.

\subsubsection{Computing $dO_Z(X)$} The second component of $I_Z(\sigma)$ is the measure of the orbit $dO_Z(X)$. We seek the Taylor expansion of the measure:
\begin{equation}
\boxed{dO_Z(X) = \left(1+ T_c(Z) u^c - N_{cb}(Z) u^bu^c+\mathcal{O}(||u||^3)\right)du}
\end{equation}
and our goal is, again, to express the tensors $T_c(Z)$ and $N_{cb}(Z)$.

The measure $dO_Z(X)$ is the restriction of the measure $dM(X)$:
\begin{align*}
dO_Z(X) = dM(X)|_{T_XO_Z}
\end{align*}
and we know that for $X$ close enough to $Z$: $dM(X) = dx_u -\frac{1}{6}\text{Ric}_{ab}(Z)x_u^ax_u^bdx_u$. Thus:
\begin{align*}
dO_Z(X) & = \left( 1-\frac{1}{6}\text{Ric}_{ab}(Z)x_u^ax_u^b+\mathcal{O}(||x_u||^3)\right)dx_u
\end{align*}

\noindent We replace $x_u^a$ by $x_u^a = u^a + \frac{1}{2}h^a_{bc}(Z)u^bu^c + O(||u||^4)$
\begin{align*}
dO_Z(X) & = \left( 1-\frac{1}{6}\text{Ric}_{ab}(Z)u^au^b+\mathcal{O}(||u||^3)\right)dx_u
\end{align*}
\noindent We express $dx_u$ with respect to $du$:
\begin{align*}
dx_u &= \det\left(\frac{dx_u^a}{du^b}\right)du\\
& = \det\left(\frac{d\left(u^a+\frac{1}{2}h^a_{bc}(Z)u^bu^c + O(||u||^4)\right)}{du^b}\right)du\\
&= \det\left(\delta_b^a +\frac{1}{2}h^a_{cb}(Z)u^c + \frac{1}{2}h^a_{bc}(Z)u^c+\mathcal{O}(||u||^3)\right)du\\
& = \det\left(\delta_b^a +h^a_{cb}(Z)u^c +\mathcal{O}(||u||^3)\right)du
\end{align*}
\noindent Developing the determinant:
\begin{align*}
dx_u &= \left( 1+h^a_{ca}(Z)u^c+\frac{1}{2}\left( (h^a_{ca}(Z)u^c)^2- h^a_{cb}(Z)u^ch^b_{da}(Z)u^d\right)+\mathcal{O}(||u||^3)\right)du
\end{align*}

\noindent Plugging in $dO_Z(X)$:
\begin{align*}
dO_Z(X) & = \left(  1-\frac{1}{6}\text{Ric}_{ab}(Z)u^au^b+\mathcal{O}(||u||^3)\right)\left( 1+h^a_{ca}(Z)u^c \right)\\
& +\left(  1-\frac{1}{6}\text{Ric}_{ab}(Z)u^au^b+\mathcal{O}(||u||^3)\right)\left(\frac{1}{2}\left( (h^a_{ca}(Z)u^c)^2- h^a_{cb}(Z)u^ch^b_{da}(Z)u^d\right)\right)\\
 &+\left(  1-\frac{1}{6}\text{Ric}_{ab}(Z)u^au^b+\mathcal{O}(||u||^3)\right)\left(\mathcal{O}(||u||^3)\right)du
\end{align*}

\noindent We develop by keeping up to the quadratic terms only:
\begin{align*}
dO_Z(X) & = \left(1-\frac{1}{6}\text{Ric}_{ab}(Z)u^au^b +h^a_{ca}(Z)u^c+\frac{1}{2}\left( (h^a_{ca}(Z)u^c)^2- h^a_{cb}(Z)u^ch^b_{da}(Z)u^d\right)+\mathcal{O}(||u||^3)\right)du
\end{align*}

\noindent We reorganize the terms, relabeling the mute labels:
\begin{align*}
dO_Z(X) & = \left(1+h^a_{ca}(Z)u^c -\frac{1}{6}\text{Ric}_{cb}(Z)u^cu^b +\frac{1}{2}\left( h^a_{ca}(Z)u^c. h^a_{ca}(Z)u^b- h^a_{cd}(Z)u^ch^d_{ba}(Z)u^b\right)+\mathcal{O}(||u||^3)\right)du
\end{align*}

\noindent Now we can factorize the quadratic terms:
\begin{align*}
dO_Z(X) & = \left(1+h^a_{ca}(Z)u^c - u^bu^c\left(\frac{1}{6}\text{Ric}_{cb}(Z) +\frac{1}{2} h^a_{ca}(Z) h^a_{ca}(Z)-\frac{1}{2} h^a_{cd}(Z)h^d_{ba}(Z)\right)+\mathcal{O}(||u||^3)\right)du
\end{align*}

\noindent So that we find the expressions of the tensors:
\begin{equation}
\boxed{
\begin{aligned}
T_{c}(Z) & = h^a_{ca}(Z)\\
N_{cb}(Z) & = \frac{1}{6}\text{Ric}_{cb}(Z) +\frac{1}{2} h^a_{ca}(Z) h^a_{ca}(Z)-\frac{1}{2} h^a_{cd}(Z)h^d_{ba}(Z)
\end{aligned}}
\end{equation}

\subsubsection{Gathering to compute $I_Z(\sigma)$}

We plug the expressions of $d_M^2(Y,X)$ and $dO_Z(X)$, computed in the preivous subsections, in the expression of $I_Z(\sigma)$:
\begin{align*}
I_Z(\sigma) & = \int_{B_{r_Z}} \exp \left(-\frac{d_M^2(Y,X)}{2\sigma^2}\right)dO_Z(X)
\end{align*}

\noindent Plugging the squared distance $d_M^2(Y,X)$ first:
\begin{flalign*}
I_Z(\sigma) & =\int_{B_{r_Z}} \exp \left(-\frac{d_M^2(Y,Z) + u^au^bM_{ab}(Z)+u^au^bu^cP_{abc}(Z)+u^au^bu^cu^dS_{abcd}(Z)+\mathcal{O}(||u||^5)}{2\sigma^2}\right) dO_Z(X)&
\end{flalign*}

\noindent We split the exponential and extract the part of the exponential that does not depend on $u$:
\begin{flalign*}
I_Z(\sigma) & =\exp \left(-\frac{d_M^2(Y,Z)}{2\sigma^2} \right)\int_{B_{r_Z}} \exp \left(-\frac{u^au^bM_{ab}(Z)}{2\sigma^2}\right)\exp\left(-\frac{u^au^bu^cP_{abc}(Z)+ u^au^bu^cu^dS_{abcd}(Z)+\mathcal{O}(||u||^5)}{2\sigma^2}\right) dO_Z(X)&
\end{flalign*}

\noindent We perform the Taylor expansion of the term with $\mathcal{O}(||u||^4)$, recalling that at this point, $\sigma$ can still be anything.
\begin{flalign*}
I_Z(\sigma) & =\exp \left(-\frac{d_M^2(Y,Z)}{2\sigma^2} \right)\int_{B_{r_Z}}  \exp \left(-\frac{u^au^bM_{ab}(Z)}{2\sigma^2}\right)\left(1-\frac{u^au^bu^cP_{abc}(Z)+u^au^bu^cu^dS_{abcd}(Z)+\mathcal{O}(||u||^5)}{2\sigma^2} \right) dO_Z(X)&
\end{flalign*}

\noindent Now we plug the $dO_Z(X)$:
\begin{flalign*}
I_Z(\sigma) & =\exp \left(-\frac{d_M^2(Y,Z)}{2\sigma^2} \right)\int_{B_{r_Z}}  \exp \left(-\frac{u^au^bM_{ab}(Z)}{2\sigma^2}\right)\left(1-\frac{u^au^bu^cP_{abc}(Z)+ u^au^bu^cu^dS_{abcd}(Z)}{2\sigma^2}+\frac{\mathcal{O}(||u||^5)}{2\sigma^2}\right).&\\
&\qquad\qquad\qquad\qquad\qquad\qquad\qquad\qquad.\left(1+T_cu^c(Z) - N_{cb}(Z) u^bu^c+\mathcal{O}(||u||^3)\right)du
\end{flalign*}

\noindent We develop the product of the parenthesis on the right:
\begin{flalign*}
I_Z(\sigma) & =\exp \left(-\frac{d_M^2(Y,Z)}{2\sigma^2} \right)\int_{B_{r_Z}}  \exp \left(-\frac{u^au^bM_{ab}(Z)}{2\sigma^2}\right)\left(1 +T_cu^c(Z) - N_{cb}(Z) u^bu^c + \frac{u^au^bu^cP_{abc}(Z)}{2\sigma^2}\right)du\\
&+ \exp \left(-\frac{d_M^2(Y,Z)}{2\sigma^2} \right)\int_{B_{r_Z}}  \exp \left(-\frac{u^au^bM_{ab}(Z)}{2\sigma^2}\right)\left(\mathcal{O}(||u||^4)-\frac{u^au^bu^cu^dS_{abcd}(Z)}{2\sigma^2}+\frac{\mathcal{O}(||u||^5)}{2\sigma^2}\right)du &
\end{flalign*}

\noindent By skew symmetry, the terms in $T_cu^c(Z)$ and $u^au^bu^cP_{abc}(Z)$ integrate to $0$. Moreover, $\mathcal{O}(||u||^3)$, $\mathcal{O}(||u||^5)$ become $\mathcal{O}(||u||^4)$, $\mathcal{O}(||u||^6)$:
\begin{flalign*}
I_Z(\sigma) & =\exp \left(-\frac{d_M^2(Y,Z)}{2\sigma^2} \right)\int_{B_{r_Z}} \exp \left(-\frac{u^au^bM_{ab}(Z)}{2\sigma^2}\right)&\\
& \qquad\qquad\qquad\qquad\left(1- N_{cb}(Z) u^bu^c+\mathcal{O}(||u||^4)-\frac{u^au^bu^cu^dS_{abcd}(Z)}{2\sigma^2}+\frac{\mathcal{O}(||u||^6)}{2\sigma^2}\right)du
\end{flalign*}

\noindent We recognize the unnormalized truncated Gaussian moments of in $\mathbb{R}^p$, where $p$ is the dimension of the orbit $O_Z$, see Subsection~\ref{sec:eucl}:
\begin{flalign*}
I_Z(\sigma) & =\exp \left(-\frac{d_M^2(Y,Z)}{2\sigma^2} \right)\\
&.\left(\mathcal{M}_{r_Z}(\sigma^2 M^{-1})- N_{cb}(Z) \mathcal{M}_{r_Z}^{bc}(\sigma^2 M^{-1}) +\mathcal{O}(\sigma^{p+4})- \frac{S_{abcd}(Z)}{2\sigma^2}\mathcal{M}_{r_Z}^{abcd}(\sigma^2 M^{-1})+\frac{\mathcal{O}(\sigma^{p+6})}{2\sigma^2}\right)&
\end{flalign*}

\noindent We express them in terms of the unnormalized Gaussian moments in $\mathbb{R}^p$, see Subsection~\ref{sec:eucl}:
\begin{flalign*}
I_Z(\sigma) & =\exp \left(-\frac{d_M^2(Y,Z)}{2\sigma^2} \right). &\\
& \quad\quad\quad\quad.\left( \sigma^p \mathcal{M}(M^{-1}) + N_{cb}(Z)\sigma^{p+2} \mathcal{M}(M^{-1})^{bc} - \frac{S_{abcd}}{2\sigma^2}\sigma^{p+4} \mathcal{M}(M^{-1})^{abcd}+\mathcal{O}(\sigma^{p+4})+ \frac{\mathcal{O}(\sigma^{p+6})}{\sigma^2}+\epsilon(\sigma)\right) 
\end{flalign*}

\noindent We simplify the $\sigma$'s:
\begin{flalign*}
I_Z(\sigma) & = \exp \left(-\frac{d_M^2(Y,Z)}{2\sigma^2} \right).&\\
& \quad. \left( \sigma^p \mathcal{M}(M^{-1})^0 + \sigma^{p+2}\left(N_{cb}\mathcal{M}(M^{-1})^{bc}-\frac{S_{abcd}(Z)}{2}\mathcal{M}(M^{-1})^{abcd}\right)+\mathcal{O}(\sigma^{p+4}) +\epsilon(\sigma)\right)
\end{flalign*}

For convenience in the later computation, we define the notations:
\begin{flalign*}
I_Z(\sigma) & =\exp \left(-\frac{d_M^2(Y,Z)}{2\sigma^2} \right) \left(\sigma^pm_0(Z)+\sigma^{p+2}m_2(Z) +\mathcal{O}(\sigma^4) + \epsilon(\sigma) \right)&
\end{flalign*}
where:
\begin{equation}
\boxed{
\begin{aligned}
m_0(Z) & = \mathcal{M}(M)^0 = \sqrt{(2\pi)^p} \sqrt{\det\left((M_{ab}(Z)^{-1}\right)}\\
m_2(Z) &= N_{cb}(Z)\mathcal{M}(M^{-1}(Z))^{bc}-\frac{1}{2}S_{abcd}(Z)\mathcal{M}(M^{-1}(Z))^{abcd}
\end{aligned}
}
\end{equation}
where $M_{ab}$ and $S_{abcd}$ are given in the previous subsections.

\subsection{Upper bound on $\eta(\sigma)$}

We proceed with an adaptation of the method in Subsection~\ref{sec:firstcomput}:
\begin{align*}
\eta(\sigma) & = \int_{O_Z \cap \mathcal{C}_{B_r}}  \exp \left(-\frac{d_M^2(Y,X)}{2\sigma^2}\right) dO(X)
\end{align*}
Assuming that the Ricci curvature of the orbit is bounded by below, the measure of the orbit is bounded by above, by a constant that we write $K_O$:
\begin{align*}
||\eta(\sigma)||& \leq K_O \int_{O_Z \cap \mathcal{C}_{B_r}}  \exp \left(-\frac{d_M^2(Y,X)}{2\sigma^2}\right) du
\end{align*}

We integrate on the orbit by filiating it with hyperspheres of radii $\rho$:
\begin{align*}
||\eta(\sigma)||& \leq K_O \int_{r}^{+\infty} \int_{S_\rho \cap O_Z} \exp \left(-\frac{\rho^2}{2\sigma^2}\right) d(S_\rho \cap O_Z)d\rho\\
& = K_O \int_{r}^{+\infty} \exp \left(-\frac{\rho^2}{2\sigma^2}\right) \text{Vol}(S_\rho \cap O_Z) d\rho
\end{align*}

Now the volume of $S_\rho \cap O_Z$ is polynomial in $\rho$. We denote $P$ this polynom:
\begin{align*}
||\eta(\sigma)||& \leq  K_O \int_{r}^{+\infty} \exp \left(-\frac{\rho^2}{2\sigma^2}\right) P(\rho) d\rho
\end{align*}

By dominated convergence theorem, the right-hand-side is dominated by $\exp(-\frac{r^2}{2\sigma^2})$. Thus:
\begin{equation}
\eta(\sigma) = \epsilon(\sigma)
\end{equation}
i.e. $\sigma \rightarrow \eta(\sigma)$ is exponentially decreasing when $\sigma \rightarrow 0$.

\subsection{Final result: Taylor expansion of $f(Z)$}

Replacing the terms:
\begin{flalign*}
f(Z) &  = \frac{\left( 1 - \frac{\sigma^{2} }{6}R(Y) + O(\sigma^{3}) + \epsilon(\sigma)\right)}{(\sqrt{2\pi}\sigma)^m}\left(\exp \left(-\frac{d_M^2(Y,Z)}{2\sigma^2} \right) \left(\sigma^pm_0(Z)+\sigma^{p+2}m_2(Z) +\mathcal{O}(\sigma^{p+4}) + \epsilon(\sigma) \right)+  \epsilon(\sigma)\right)&
\end{flalign*}

\noindent We put the $\epsilon$ inside the main parenthesis:
\begin{flalign*}
f(Z) &  = \frac{\left( 1 - \frac{\sigma^{2} }{6}R(Y) + O(\sigma^{3}) + \epsilon(\sigma)\right)}{(\sqrt{2\pi}\sigma)^m}.&\\
& \quad. \exp \left(-\frac{d_M^2(Y,Z)}{2\sigma^2} \right) \left( \sigma^pm_0(Z)+\sigma^{p+2}m_2(Z) +\mathcal{O}(\sigma^{p+4}) +\epsilon(\sigma)+ \exp \left(+\frac{d_M^2(Y,Z)}{2\sigma^2}\right) \epsilon(\sigma) \right)
\end{flalign*}

\noindent We recall that $Z\in B_r$ so that:
\begin{flalign*}
f(Z) &  = \frac{\left( 1 - \frac{\sigma^{2} }{6}R(Y) + O(\sigma^{3})\right)}{(\sqrt{2\pi}\sigma)^m}.\exp \left(-\frac{d_M^2(Y,Z)}{2\sigma^2} \right)\left( \sigma^pm_0(Z)+\sigma^{p+2}m_2(Z) +\mathcal{O}(\sigma^{p+4}) +\epsilon(\sigma)\right)&
\end{flalign*}

We define:
\begin{align*}
f_Q(Z)= \frac{\exp \left(-\frac{d_M^2(Y,Z)}{2\sigma^2} \right)}{(\sqrt{2\pi}\sigma)^q}
\end{align*}

\noindent and put it in the front, remembering that $m= p+q$:
\begin{flalign*}
f(Z) &  = f_Q(Z) \frac{\left( 1 - \frac{\sigma^{2} }{6}R(Y) + O(\sigma^{3})\right)}{(\sqrt{2\pi}\sigma)^p}.\left( \sigma^pm_0(Z)+\sigma^{p+2}m_2(Z) +\mathcal{O}(\sigma^{p+4}) +\epsilon(\sigma)\right) &
\end{flalign*}

\noindent We divide by $\sigma^p$:
\begin{flalign*}
f(Z) &  =  f_Q(Z) \left(1 - \frac{\sigma^{2} }{6}R(Y) + O(\sigma^{3})\right).\left(\frac{m_0(Z)}{(\sqrt{2\pi})^p}+\sigma^{2} \frac{m_2(Z)}{(\sqrt{2\pi})^p}+\mathcal{O}(\sigma^4) + \epsilon(\sigma)\right)&
\end{flalign*}

\noindent We develop everything except the Gaussian in front:
\begin{flalign*}
f(Z) &  = f_Q(Z)\left(\frac{m_0(Z)}{(\sqrt{2\pi})^p} - \frac{\sigma^{2} }{6}\frac{m_0(Z)}{(\sqrt{2\pi})^p}R(Z) +\sigma^{2} \frac{m_2(Z)}{(\sqrt{2\pi})^p}+\mathcal{O}(\sigma^4)+  \epsilon(\sigma)\right)&
\end{flalign*}

\noindent We write this:
\begin{flalign*}
f(Z) &  = f_Q(Z)\left( F_0(Z) +\sigma^2 F_2(Z)+\mathcal{O}(\sigma^{4})+  \epsilon(\sigma)\right)&
\end{flalign*}
where:
\begin{align*}
F_0(Z)& = \frac{m_0(Z)}{(\sqrt{2\pi})^p} \\
& = \frac{\sqrt{(2\pi)^p} \sqrt{\det\left((M_{ab}(Z)^{-1}\right)}}{(\sqrt{2\pi})^p} \\
& =\sqrt{\det\left((M_{ab}(Z)^{-1}\right)}
\end{align*}
And:
\begin{align*}
F_2(Z)& = - \frac{1}{6}\frac{m_0(Z)}{(\sqrt{2\pi})^p}R(Z) + \frac{m_2(Z)}{(\sqrt{2\pi})^p}\\
& = - \frac{1}{6} \sqrt{\det\left((M_{ab}(Z)^{-1}\right)}R(Z)  + \frac{m_2(Z)}{(\sqrt{2\pi})^p}
\end{align*}
So that:
\begin{equation}
\boxed{
\begin{aligned}
F_0(Z) & =\sqrt{\det\left((M_{ab}(Z)^{-1}\right)}\\
F_2(Z) & = - \frac{1}{6} \sqrt{\det\left((M_{ab}(Z)^{-1}\right)}R(Z)  + \frac{m_2(Z)}{(\sqrt{2\pi})^p}
\end{aligned}
}
\end{equation}
and we refer to the previous subsections for the formula of $M_{ab}(Z)$ and $m_2(Z)$.

\section{Proof of Theorem~\ref{th:bias}: Bias on the template shape}

Now we prove the second theorem given in the paper "Template shape estimation".

\begin{theorem}\label{th:bias}
\textcolor{red}{The data $X_i$'s are generated with the model described in the paper "Template shape estimation", where the template shape $Y$ is a parameter and under the assumptions of Theorem 1. The template shape $Y$ is estimated with $\hat Y$, which is computed by the usual procedure described the paper.}

\textcolor{red}{In the regime of an infinite number of data $n \rightarrow + \infty$,} the asymptotic bias of the template's shape estimator $\hat Y$, with respect to the parameter $Y$, has the following Taylor expansion around the noise level $\sigma = 0$: 
\begin{equation}\label{eq:bias}
\text{Bias}(\hat Y,Y)= - \frac{\sigma^2}{2} H(Y) + \mathcal{O}(\sigma^4) + \epsilon(\sigma)
\end{equation}
where (i) $H$ is the mean curvature vector of the template shape's orbit \textcolor{red}{which represents the external curvature of the orbit in $M$, and (ii) $\epsilon$ is a function of $\sigma$ that decreases exponentially for $\sigma \rightarrow 0$.}
\end{theorem}

We compute the bias $\text{Bias}(Y,\hat{Y})$ of $\hat Y$ as an estimator of $Y$. In the following, we take a NCS at $Y$. In particular, the vector $\overrightarrow{YZ} = \text{Log}_YZ$ has coordinates written $z$.

The expectation of the distribution $f$ of shapes in $Q$ is $\hat Y$ by definition. The point $\hat Y$, expressed in a NCS at the template $Y$, gives $\text{Bias}(Y,\hat{Y})$, a tangent vector at $T_YM$ that indicates how much one has to shoot to reach the estimator $\hat Y$:
\begin{equation}
\text{Bias}(Y,\hat{Y}) = \text{Log}_Y\hat Y = \int_{Q} \overrightarrow{YZ} f(Z) dQ(Z).
\end{equation}

\noindent First, we take a ball of small radius $r$ in $Q$ and fix $r$. We split the integral:
\begin{flalign*}
\text{Bias}(Y,\hat{Y}) &= \int_{B_r^Q} \overrightarrow{YZ} f(Z) dQ(Z) + \int_{ \mathcal{C}_{B_r^Q}} \overrightarrow{YZ} f(Z) dQ(Z)&
\end{flalign*}

\noindent By the result of the preliminaries, adapted to $Q$, the right part is a function $\sigma \rightarrow \epsilon(\sigma)$ that is exponentially decreasing for $\sigma \rightarrow 0$.
\begin{flalign*}
\text{Bias}(Y,\hat{Y}) &= \int_{B_r^Q} \overrightarrow{YZ} f(Z) dQ(Z) + \epsilon(\sigma)&
\end{flalign*}

\subsection{Using the result of Theorem~\ref{th:pdf}}

\noindent We plug the expression of the density $f$ using Theorem~\ref{th:pdf}:
\begin{flalign*}
\text{Bias}(Y,\hat{Y}) &= \int_{B_r^Q}\overrightarrow{YZ} f_Q(Z)\left( F_0(Z) +\sigma^2 F_2(Z)+\mathcal{O}(\sigma^4)+  \epsilon(\sigma)\right) dQ(Z) + \epsilon(\sigma)\\
& = \int_{B_r^Q} \overrightarrow{YZ}f_Q(Z) \left( F_0(Z) +\sigma^2 F_2(Z)+\mathcal{O}(\sigma^4)+  \epsilon(\sigma)\right) dQ(Z) + \epsilon(\sigma) &
\end{flalign*}

\noindent Computing the $a$-coordinate of the bias:
\begin{flalign*}
\text{Bias}(Y,\hat{Y})^a = \int_{B_r^Q} z^af_Q(Z) \left( F_0(Z) +\sigma^2 F_2(Z)+\mathcal{O}(\sigma^4)+  \epsilon(\sigma)\right) dQ(Z) + \epsilon(\sigma) &&
\end{flalign*}

\noindent We are on a ball of small radius $r$ around $Y$. We write the Taylor expansions of the $F$'s terms around $z=0$:
\begin{align*}
F_0(Z) &= F_{00}(Y)+F_{01d}(Y)z^d+\mathcal{O}(||z||^2)\\
F_2(Z) &= F_{20}(Y)+F_{21d}(Y)z^d+\mathcal{O}(||z||^2)
\end{align*}

\noindent We replace these Taylor expansions in the expression of the bias:
\begin{flalign*}
\text{Bias}(Y,\hat{Y})^a & = \int_{B^Q_r} z^a f_Q(Z) \left( F_{00}(Y)+F_{01d}(Y)z^d +\mathcal{O}(||z||^2)\right)dQ(Z)\\
& \quad\quad+\int_{B^Q_r} z^a f_Q(Z)\left(\sigma^2 F_{20}(Y)+\sigma^2 F_{21d}(Y)z^d+\sigma^2\mathcal{O}(||z||^2)+\mathcal{O}(\sigma^4)+  \epsilon(\sigma)\right) dQ(Z) + \epsilon(\sigma)&
\end{flalign*}

\noindent Reorganizing:
\begin{flalign*}
\text{Bias}(Y,\hat{Y}) & = \int_{B_r^Q} z^a f_Q(Z) \left( F_{00}(Y)+F_{01d}(Y)z^d +\sigma^2 F_{20}(Y)+\sigma^2 F_{21d}(Y)z^d\right) dQ(Z) &\\
& \qquad\qquad\qquad +\int_{B_r^Q}\overrightarrow{YZ} f_Q(Z) \left(\mathcal{O}(||z||^2)+\sigma^2\mathcal{O}(||z||^2)+\mathcal{O}(\sigma^4)+  \epsilon(\sigma) \right)dQ(Z)\\
& \qquad\qquad\qquad + \epsilon(\sigma)
\end{flalign*}

\noindent By the dominated convergence theorem, we can put the inside $\epsilon(\sigma)$ outside the parenthesis.
\begin{flalign*}
\text{Bias}(Y,\hat{Y}) & = \int_{B_r^Q} z^a f_Q(Z) \left( F_{00}(Y)+F_{01d}(Y)z^d +\sigma^2 F_{20}(Y)+\sigma^2 F_{21d}(Y)z^d\right) dQ(Z) &\\
& \qquad\qquad\qquad + \int_{B_r^Q}\overrightarrow{YZ} f_Q(Z) \left(\mathcal{O}(||z||^2)+\sigma^2\mathcal{O}(||z||^2)+\mathcal{O}(\sigma^4)\right)dQ(Z)\\
& \qquad\qquad\qquad + \epsilon(\sigma)
\end{flalign*}

\noindent We develop the measure on $dQ$: $dQ(Z) = (1-\frac{1}{6}\text{Ric}(Y)_{bc}z^bz^c+\mathcal{O}(||z||^3))dz$:
\begin{flalign*}
\text{Bias}(Y,\hat{Y})^a & = \int_{B_r^Q} f_Q(Z) \left( F_{00}(Y)z^a+F_{01d}(Y)z^az^d \right)(1-\frac{1}{6}\text{Ric}(Y)_{bc}z^bz^c+\mathcal{O}(||z||^3))dz \\
& \qquad\qquad\qquad+\int_{B_r^Q} f_Q(Z) \left(\sigma^2 F_{20}(Y)z^a+\sigma^2 F_{21d}(Y).z^az^d\right)(1-\frac{1}{6}\text{Ric}(Y)_{bc}z^bz^c+\mathcal{O}(||z||^3))dz &\\
& \qquad\qquad\qquad + \int_{B_r^Q}f_Q(Z) z^a\left(\mathcal{O}(||z||^2)+\sigma^2\mathcal{O}(||z||^2)+\mathcal{O}(\sigma^4)\right)(1-\frac{1}{6}\text{Ric}(Y)_{bc}z^bz^c+\mathcal{O}(||z||^3))dz\\
& \qquad\qquad\qquad + \epsilon(\sigma)
\end{flalign*}

\noindent We develop:
\begin{flalign*}
\text{Bias}(Y,\hat{Y})^a & = \int_{B_r^Q} f_Q(Z) \left( F_{00}(Y)z^a+F_{01d}(Y)z^az^d +\sigma^2 F_{20}(Y)z^a+\sigma^2 F_{21d}(Y).z^az^d\right)dz &\\
&  \qquad\qquad\qquad + \int_{B_r^Q} f_Q(Z) \left( F_{00}(Y)z^a+F_{01d}(Y)z^az^d \right)(-\frac{1}{6}\text{Ric}(Y)_{bc}z^bz^c)dz \\
& \qquad\qquad\qquad + \int_{B_r^Q} f_Q(Z) \left(\sigma^2 F_{20}(Y)z^a+\sigma^2 F_{21d}(Y).z^az^d\right)(-\frac{1}{6}\text{Ric}(Y)_{bc}z^bz^c)dz \\
& \qquad\qquad\qquad +\int_{B_r^Q} f_Q(Z) \left( F_{00}(Y)z^a+F_{01d}(Y)z^az^d +\sigma^2 F_{20}(Y)z^a+\sigma^2 F_{21d}(Y).z^az^d\right)\mathcal{O}(||z||^3)dz \\
& \qquad\qquad\qquad + \int_{B_r^Q}f_Q(Z) z^a \left(\mathcal{O}(||z||^2)+\sigma^2\mathcal{O}(||z||^2)+\mathcal{O}(\sigma^4)\right)dz\\
& \qquad\qquad\qquad + \int_{B_r^Q}f_Q(Z) z^a\left(\mathcal{O}(||z||^2)+\sigma^2\mathcal{O}(||z||^2)+\mathcal{O}(\sigma^4)\right)(-\frac{1}{6}\text{Ric}(Y)_{bc}z^bz^c)dz\\
& \qquad\qquad\qquad + \int_{B_r^Q}f_Q(Z) z^a\left(\mathcal{O}(||z||^2)+\sigma^2\mathcal{O}(||z||^2)+\mathcal{O}(\sigma^4)\right)\mathcal{O}(||z||^3)dz\\
& \qquad\qquad\qquad + \epsilon(\sigma)
\end{flalign*}

\noindent We eliminate the odd terms that give 0 by skewsymmetry:
\begin{flalign*}
\text{Bias}(Y,\hat{Y})^a & = \int_{B_r^Q} f_Q(Z) \left( +F_{01d}(Y)z^az^d +\sigma^2 F_{21d}(Y).z^az^d\right)dz &\\
&  \qquad\qquad\qquad + \int_{B_r^Q} f_Q(Z) \left(F_{01d}(Y)z^az^d +\sigma^2 F_{21d}(Y).z^az^d\right)(-\frac{1}{6}\text{Ric}(Y)_{bc}z^bz^c)dz \\
& \qquad\qquad\qquad +\int_{B_r^Q} f_Q(Z) \left( F_{00}(Y)z^a +\sigma^2 F_{20}(Y)z^a\right)\mathcal{O}(||z||^3)dz \\
& \qquad\qquad\qquad + \int_{B_r^Q}f_Q(Z) z^a\left(\mathcal{O}(\sigma^4)\right)dz\\
& \qquad\qquad\qquad + \int_{B_r^Q}f_Q(Z) z^a\left(\mathcal{O}(\sigma^4)\right)(-\frac{1}{6}\text{Ric}(Y)_{bc}z^bz^c)dz\\
& \qquad\qquad\qquad + \int_{B_r^Q}f_Q(Z) z^a\left(\mathcal{O}(||z||^2)+\sigma^2\mathcal{O}(||z||^2)+\mathcal{O}(\sigma^4)\right)\mathcal{O}(||z||^3)dz\\
& \qquad\qquad\qquad + \epsilon(\sigma)
\end{flalign*}

\noindent We delete the terms that will give more than $\mathcal{O}(\sigma^{2})$ by integration (these are normalized moments) and put them in $\mathcal{O}(\sigma^4)$ which is the next order since there is no$\mathcal{O}(\sigma^3)$:
\begin{flalign*}
\text{Bias}(Y,\hat{Y})^a & = \int_{B_r^Q} f_Q(Z) F_{01d}(Y)z^az^ddz + \int_{B_r^Q}f_Q(Z) z^a\left(\mathcal{O}(\sigma^4)\right)dz+\mathcal{O}(\sigma^4)+ \epsilon(\sigma)&
\end{flalign*}

\noindent We gather the terms in $\mathcal{O}(\sigma^4)$:
\begin{flalign*}
\text{Bias}(Y,\hat{Y})^a & = \int_{B_r^Q} f_Q(Z) F_{01d}(Y)z^az^ddz +\mathcal{O}(\sigma^4) + \epsilon(\sigma) &
\end{flalign*}

\noindent We recognize the \textit{normalized} truncated moment of order $2$ in the $q$-dimensional Riemannian manifold $Q$, see Subsection~\ref{sec:riem}:
\begin{flalign*}
\text{Bias}(Y,\hat{Y})^a & = F_{01d}(Y) {\mathfrak{M}^Q_r}^{ad}(\sigma^2\mathbb{I}) +\mathcal{O}(\sigma^4) + \epsilon(\sigma)&
\end{flalign*}

\noindent We express it with respect to the non-truncated \textit{normalized} moment in $\mathbb{R}^q$, see Subsection~\ref{sec:riem}:
\begin{flalign*}
\text{Bias}(Y,\hat{Y})^a & = F_{01d}(Y) \left( \sigma^2{\mathcal{M}^Q}^{ad}(\mathbb{I})+\mathcal{O}(\sigma^4)-\epsilon(\sigma)\right)+\mathcal{O}(\sigma^4) + \epsilon(\sigma)&
\end{flalign*}

\noindent We gather the $\epsilon$'s and the $\mathcal{O}(\sigma^4)$:
\begin{flalign*}
\text{Bias}(Y,\hat{Y})^a & =F_{01d}(Y) {\sigma^2\mathcal{M}^Q_r}^{ad}(\mathbb{I}) +\mathcal{O}(\sigma^4) + \epsilon(\sigma)&
\end{flalign*}

\noindent We replace the \textit{normalized} 2nd order moment by its expression which is simply $\delta^{ad}$, see Subsection~\ref{sec:eucl}:
\begin{flalign*}
\text{Bias}(Y,\hat{Y})^a & = F_{01d}(Y)\sigma^2 \delta^{ad} +\mathcal{O}(\sigma^4) + \epsilon'(\sigma)&\\
& = F_{01}^a(Y)\sigma^2  +\mathcal{O}(\sigma^4)+ \epsilon(\sigma)
\end{flalign*}

\subsection{Computation of $F_{01}^a(Y)$: Taylor expansion of $F_0(Z)$ in the coordinate $z$}

The term $F_{01}^a(Y)$ is the first order coefficient in the Taylor expansion of $F_0(Z)$ around $Y$ in the coordinate $z$. Thus, we compute this Taylor expansion. 

We first compute the Taylor expression of $F_0(Z)$ using the coordinate $y = \overrightarrow{ZY}=\text{Log}_ZY$ in the NCS at $Z$. The expression of $F_0(Z)$ in Section~\ref{sec:th1} gives:
\begin{align*}
F_0(Z)& = \Theta_{Z,0}(1) = \sqrt{(2\pi)^p} \sqrt{\det\left((M_{ab}(Z)^{-1}\right)} 
\end{align*}

\noindent The previous subsections give:
\begin{align*}
M_{ab}(Z)=  -y_c h^c_{ab}(Z) +d_{ab}(Z) 
\end{align*}
and we replace $d_{ab}(Z)$ by the formula p.23 in \cite{Brewin2009} but keeping only the first order:
\begin{align*}
M_{ab}(Z)= \delta_{ab} -y_c h^c_{ab}(Y) +\mathcal{O}(||y||^2)
\end{align*}
So that:
\begin{align*}
M_{ab}(Z)^{-1}= \delta_{ab} +y_c h^c_{ab}(Y) +\mathcal{O}(||y||^2)
\end{align*}

\noindent We plug this in $F_0(Z)$:
\begin{align*}
F_0(Z)& = \sqrt{\det\left( \delta_{ab} +y_c h^c_{ab}(Y) +\mathcal{O}(||y||^2) \right)} \\
& = \sqrt{ 1 +y_c \text{Trace}(h^c_{ab}(Y)) +\mathcal{O}(||y||^2) }\\
& = \sqrt{ 1 +y_c H^c(Y) +\mathcal{O}(||y||^2) }\\
& = \left( 1 +\frac{1}{2}y_c H^c(Y) +\mathcal{O}(||y||^2) \right)
\end{align*}
where the trace of the second fundamental form is the external curvature vector $H(Y)$ by definition.

We convert this Taylor expansion in $y$, the coordinate of $Y$ in a NCS at $Z$, into a Taylor expansion in $z$, the coordinate of $Z$ in a NCS at $Y$. To express $y$ with respect to $z$, we consider the geodesic $\gamma_{ZY}(t)$ from $Z$ to $Y$ and the geodesic $\gamma_{YZ}(t)$ from $Y$ to $Z$. When parameterized by the arclength $s$, they are related as follows:
\begin{align*}
\gamma_{ZY}(s) & = \text{Exp}_Z\left( s \overrightarrow{ZY} \right)\\
& = \gamma_{YZ}(1-s)\\
& = \text{Exp}_Y\left((1-s)\overrightarrow{YZ}\right)
\end{align*}
Differentiating this relation gives:
\begin{align*}
D\text{Exp}_{Z}|_{s\overrightarrow{ZY}}.\overrightarrow{ZY} = - D\text{Exp}_{Y}|_{(1-s)\overrightarrow{YZ}}.\overrightarrow{YZ}
\end{align*}
Taking the relation at $s=0$:
\begin{align*}
D\text{Exp}_{Z}|_{0}.\overrightarrow{ZY} = - D\text{Exp}_{Y}|_{\overrightarrow{YZ}}.\overrightarrow{YZ}
\end{align*}
where $D\text{Exp}_{Z}|_{0} = Id$, so that:
\begin{align*}
y & = - D\text{Exp}_{Y}|_{z}.z
\end{align*}

\noindent We use the definition of the NCS at $Y$. When $\text{Exp}_Y(u) = U$, then $U$ has coordinates $u$ in the NCS at $Y$. So that: $D\text{Exp}_Y|_u.u = u$. This gives, with $u =z$:
\begin{align*}
y & = - z
\end{align*}
and we get the Taylor expansion of $F_0(Z)$ expressed in the coordinate $z$:
\begin{align*}
F_0(Z) & = \left( 1 -\frac{1}{2}z_c H^c(Y) +\mathcal{O}(||z||^2) \right)
\end{align*}

And we identify the term $F_{01}^a(Y)$ needed:
\begin{align*}
F_{01}^a(Y) & = -\frac{1}{2}H^a(Y)
\end{align*}
\subsection{Final result: Taylor expansion of $\text{Bias}(\hat Y, Y)$}

Replacing $F_{01}^a(Y)$ by its value computed above:
\begin{align*}
\text{Bias}(Y,\hat{Y})^a & = -\frac{1}{2}.H^a(Y)\sigma^2  +\mathcal{O}(\sigma^4)+ \epsilon(\sigma)
\end{align*}

\bibliographystyle{siamplain}
\makeatletter
\renewcommand\@biblabel[1]{#1. }
\makeatother

%\bibliography{../Miolane-bibliography}